\theoremstyle{plain}
\newtheorem{theorem}{Theorem}[section]
\newtheorem{proposition}[theorem]{Proposition}
\newtheorem{lemma}[theorem]{Lemma}
\newtheorem{corollary}[theorem]{Corollary}
\theoremstyle{definition}
\newtheorem{definition}[theorem]{Definition}
\newtheorem{assumption}[theorem]{Assumption}
\theoremstyle{remark}
\newtheorem{remark}[theorem]{Remark}
\theoremstyle{plain}
\theoremstyle{remark}
\DeclarePairedDelimiter\ceil{\lceil}{\rceil}
\DeclarePairedDelimiter\floor{\lfloor}{\rfloor}
\newcommand{\mc}{\mathcal}
\icmltitlerunning{Constrained Meta Reinforcement Learning with Provable Test-Time Safety}
\begin{document}

\twocolumn[
\icmltitle{Constrained Meta Reinforcement Learning with Provable Test-Time Safety}




\begin{icmlauthorlist}
\icmlauthor{Tingting Ni}{yyy}
\icmlauthor{Maryam Kamgarpour}{yyy}
\end{icmlauthorlist}
\icmlaffiliation{yyy}{Sycamore Lab, EPFL,  Lausanne, Switzerland}

\icmlcorrespondingauthor{Tingting Ni}{tingting.ni@epfl.ch}

\icmlkeywords{Machine Learning, ICML}

\vskip 0.3in
]

\printAffiliationsAndNotice{} 

\begin{abstract}
Meta reinforcement learning (RL) allows agents to leverage experience across a distribution of tasks on which the agent can train at will, enabling faster learning of optimal policies on new test tasks. Despite its success in improving sample complexity on test tasks, many real-world applications, such as robotics and healthcare, impose safety constraints during testing. Constrained meta RL provides a promising framework for integrating safety into meta RL. An open question in constrained meta RL is how to ensure safety of the policy on the real-world test task, while reducing the sample complexity and thus, enabling faster learning of optimal policies. To address this gap, we propose an algorithm that refines policies learned during training, with provable safety and sample complexity guarantees for learning a near optimal policy on the test tasks. We further derive a matching lower bound, showing that this sample complexity is tight. 
\end{abstract}
\section{Introduction}
Reinforcement learning (RL) has achieved remarkable success in domains such as games \cite{mnih2013playing,silver2016mastering}, language model fine-tuning \cite{ouyang2022training}, and robotics \cite{song2021autonomous}. Despite these successes, several challenges remain. Two of the main challenges are: 1) sample complexity, that is, reducing the number of interactions with the real-world system to learn an optimal policy; and 2) safety, that is, ensuring constraint satisfaction such as collision avoidance in robotics \cite{koppejan2011neuroevolutionary} and operational restrictions in healthcare \cite{kyrarini2021survey}.

A promising approach to address the first challenge is meta RL~\cite{ghavamzadeh2015bayesian}, where the agent learns across a distribution of tasks, each modeled as a Markov decision process. Learning proceeds in two stages: a training phase in simulation and a test phase in the real world. Simulation enables inexpensive data collection, as in robotics \cite{makoviychuk2021isaac}, autonomous driving \cite{kazemkhani2024gpudrive}, and healthcare \cite{visentin2014university}. The knowledge acquired during training is then leveraged to accelerate learning on a new test task drawn from the same distribution.

Meta RL has demonstrated empirical success in reducing the number of real-world interactions required to optimize policies compared to standard RL methods \cite{finn2017model,rakelly2019efficient,beck2025tutorial}. More recently, several works have provided theoretical guarantees for these improvements \cite{ye2023power,chen2021understanding,mutti2024test}. In particular, standard RL without a training phase requires at least $\tilde{\mathcal{O}}(\varepsilon^{-2}|\mathcal{S}||\mathcal{A}|)$ samples to learn an $\varepsilon$-optimal policy \cite{he2021nearly}, where $|\mathcal{S}|$ and $|\mathcal{A}|$ denote the cardinality of the state and action spaces. In contrast, \citeauthor{ye2023power} show that learning a near-optimal policy requires only $\tilde{\mathcal{O}}(\varepsilon^{-2}\mathcal{C}(\mathcal{D}))$ samples from the test task, where $\mathcal{C}(\mathcal{D})$ quantifies the complexity of the task distribution $\mathcal{D}$ over all tasks. In particular, $\mathcal{C}(\mathcal{D})$ can be significantly smaller than the sizes of the state and action spaces when the distribution is concentrated (e.g., sub-Gaussian) or when the set of tasks is small \cite{ye2023power}. Subsequent works further improve this sample complexity under additional structural assumptions on the set of tasks, such as separation \cite{chen2021understanding} or strong identifiability \cite{mutti2024test}.

To address the safety challenge, constrained RL incorporates safety constraints directly into the learning process by modeling problems as constrained Markov decision processes. A central requirement is safe exploration, which ensures that constraints are satisfied throughout testing, particularly when violations are costly or dangerous, such as in real-world safety-critical systems \cite{koller2019learning}.

Safe exploration in constrained RL has been widely studied using both model-based \cite{berkenkamp2021bayesian,chow2019lyapunov,wachi2018safe} and model-free approaches \cite{achiam2017constrained,milosevic2024embedding}. Among works providing high-probability guarantees for learning an $\varepsilon$-optimal policy under safe exploration, model-based methods \cite{yu2025improved,bura2022dope,liu2021efficient} iteratively solve linear programs based on the learned dynamics, resulting in a sample complexity that scales polynomially with the size of the state and action spaces, namely {\small$\tilde{\mathcal{O}}(\varepsilon^{-2}\mathrm{poly}(|\mathcal{S}|,|\mathcal{A}|))$}. In contrast, \citeauthor{ni2023safe} propose a model-free approach that directly estimates policy gradients from interactions, avoiding the estimation of the dynamics but incurring a worse dependence on $\varepsilon$, with sample complexity {\small$\tilde{\mathcal{O}}(\varepsilon^{-6})$}. However, reducing real-world interactions is even more critical in constrained settings, where interactions may lead to violations of safety constraints. This motivates constrained meta RL.

Constrained meta RL extends constrained RL by introducing a training phase over a task distribution to accelerate learning on a new test task. During training, a so-called meta policy is learned to maximize the expected reward while satisfying safety constraints either in expectation \cite{cho2024constrained,khattar2023cmdp} or uniformly across tasks \cite{xuefficient,as2025spidr}. Some of these works further study how to adapt the meta policy from training to testing so that it becomes optimal for the test task. Specifically, \citeauthor{cho2024constrained} employ constrained policy optimization and guarantee that the adapted policy satisfies the constraints for the test task; however, safe exploration during the test phase is not addressed. Later, \citeauthor{xuefficient} develop a closed-form method for policy adaptation, and establish guarantees for safe exploration and monotonic reward improvement under the assumption that the test task is known.

Inspired by the theoretical success of meta RL in provably improving sample complexity on test tasks, we study constrained meta RL and leverage training knowledge to initialize a safe policy and iteratively refine it toward optimality, ensuring test-time safety with minimal sample complexity.
\begin{enumerate}
    \item Our algorithm builds upon \cite{ye2023power}, which considers the unconstrained meta RL setting. Our algorithm learns an $\mathcal{O}(\varepsilon)$-optimal policy while ensuring safe exploration with high probability, using $\tilde{\mathcal{O}}(\varepsilon^{-2}\mathcal{C}(\mathcal{D}))$ test task samples (Corollary~\ref{cor_sample}). In Appendix~\ref{app_table}, Table~\ref{sample-table} summarizes different sample complexity guarantees.
    \item By constructing hard instances, we establish a matching problem-dependent lower bound on the sample complexity required to obtain an $\varepsilon$-optimal and feasible policy for the test task (Theorem~\ref{thm_lowerbound}).
    \item We empirically validate our approach in a gridworld environment, where test tasks are unknown and sampled from a truncated Gaussian distribution. Our method outperforms both constrained RL baselines and a constrained meta RL baseline in terms of learning efficiency, while ensuring safe exploration. 
\end{enumerate}
\section{Problem Formulation}
We first present the necessary background on  constrained Markov decision processes (CMDPs) and constrained meta RL, before formally introducing the learning problem addressed in this paper.

\textbf{Notation:}
Let $\mathbb{N}$ and $\mathbb{R}$ denote the sets of natural numbers and real 
numbers, respectively. For a set $\mathcal{X}$, $\Delta(\mathcal{X})$ denotes the probability measure over $\mathcal{X}$, and $|\mathcal{X}|$ denotes its cardinality. For any $p, q \in \Delta(\mathcal{X})$, their total variation distance is defined as \(\|p - q\|_{\mathrm{TV}}:= \frac{1}{2}\sum_{x \in \mathcal{X}} |p(x) - q(x)|\). For any integer $m$, we set $[m] := \{1, \dots, m\}$. We use $\|\cdot\|_{\infty}$ to denote the supremum norm. For any scalar $z \in \mathbb{R}$, we denote its positive part by $[z]_+ := \max\{z, 0\}$, and its negative part by $[z]_- := \max\{-z, 0\}$. For an event $\mathcal{E}$, let $\mathbf{1}[\mathcal{E}]$ denote the indicator function, which equals $1$ if and only if $\mathcal{E}$ is true.
\subsection{Constrained Markov Decision Processes}
An infinite-horizon CMDP\footnote{Our results directly extend to the finite-horizon setting. Here, we focus on the infinite-horizon discounted setting as it is more commonly considered in deep RL.} is defined by the tuple:
    \begin{align*}
        \mathcal{M}_i = \{\mathcal{S}, \mathcal{A}, \rho_i, P_i, r_i, c_i, \gamma\}.
    \end{align*}
We refer to $\mathcal{M}_i$ as a task, where $i$ is the task index. The sets $\mathcal{S}$ and $\mathcal{A}$ denote the (possibly continuous) state and action spaces, $\rho_i \in \Delta(\mathcal{S})$ is the initial state distribution, and $P_i(s' \mid s,a)$ is the transition probability from state $s$ to $s'$ given action $a$. The reward function is $r_i:\mathcal{S}\times\mathcal{A}\to[0,1]$, and the constraint function is $c_i:\mathcal{S}\times\mathcal{A}\to[-1,1]$. Both the reward and constraint functions are assumed to be bounded. Here, $\gamma \in (0,1)$ is the discount factor.

A stationary Markovian policy $\pi : \mathcal{S} \to \Delta(\mathcal{A})$ maps each state to a probability distribution over the action space. Let $\Pi$ denote the set of all such policies. Starting from $s_0 \sim \rho_i$, at each timestep $t$, the agent takes an action $a_t \sim \pi(\cdot \mid s_t)$, transitions to $s_{t+1} \sim P_i(\cdot \mid s_t,a_t)$, and receives reward $r_i(s_t,a_t)$ and constraint $c_i(s_t,a_t)$. The resulting trajectory is \(\tau = (s_0, a_0, s_1, a_1, \dots)\in (\mathcal{S} \times \mathcal{A})^{\infty}\). For any policy $\pi$ and task $\mathcal{M}_i$, we define the cumulative discounted reward as {\small $V^{\mathcal{M}_i}_r(\pi):= \mathbb{E}_{\tau\sim\pi}\!\left[\sum_{t=0}^{\infty}\gamma^t r_i(s_t,a_t)\right]$}, and the constraint value as {\small $V^{\mathcal{M}_i}_{c}(\pi):= \mathbb{E}_{\tau\sim\pi}\!\left[\sum_{t=0}^{\infty}\gamma^t c_i(s_t,a_t)\right]$}.

Given a CMDP $\mathcal{M}_i$, the agent's objective is to find a policy that maximizes the expected reward subject to the constraint:
\begin{align}\label{eq_cmdp}
    \max_{\pi\in\Pi} \; V^{\mathcal{M}_i}_{r}(\pi)
    \quad \text{s.t.} \quad V^{\mathcal{M}_i}_{c}(\pi) \ge 0. \tag{$\text{CMDP}_i$}
\end{align}
We assume the existence of an optimal Markovian policy. In finite action spaces, this is guaranteed under Slater’s condition \cite{altman1999constrained}, stated as follows:

\begin{assumption}[Slater’s condition]\label{ass_slater_single}
Given {\small$\mc M_i$}, there exists a positive constant {\small$\xi_i$} and a policy {\small$\pi \in \Pi$} such that {\small$V_c^{\mathcal{M}_i}(\pi) \ge \xi_i$}.
\end{assumption}
In continuous action spaces $\mc A$, the existence of an optimal Markovian policy is guaranteed under additional measurability and compactness assumptions on $\mc A$ \citep[Theorem~3.2]{hernandez2000constrained}.
 
Next, we introduce two concepts, which will be used for designing the algorithm in Section \ref{sec_train}.

\textbf{Feasibility of policies for $\mathcal{M}_i$:} We classify the policies based on the extent to which they satisfy the constraints. Given $\xi\ge0$, a policy $\pi$ is \emph{relaxed $\xi$-feasible} if {\small$\,V_c^{\mathcal{M}_i}(\pi) \ge -\xi$}, \emph{$\xi$-feasible} if {\small$\,V_c^{\mathcal{M}_i}(\pi) \ge \xi$}, and \emph{feasible} if $\xi$ is zero.

\textbf{Distance between CMDPs:}
For any two CMDPs $\mathcal{M}_i$ and $\mathcal{M}_j$, \citeauthor{feng2019does} define the distance $d(\mathcal{M}_i,\mathcal{M}_j)$ as
\begin{align*}
\!d&(\mathcal{M}_i,\mathcal{M}_j)
:= \max\Big\{\|r_i - r_j\|_{\infty},
\; \|c_i - c_j\|_{\infty}, \\
&\,\|\rho_i - \rho_j\|_{\mathrm{TV}}, \max_{(s,a)\in\mathcal{S}\times\mathcal{A}}\!
\|P_i(\cdot \mid s,a) - P_j(\cdot \mid s,a)\|_{\mathrm{TV}}
\Big\}.
\end{align*}
This metric captures the largest difference across all components of the two CMDPs, using the supremum norm for bounded functions ($r_i$ and $c_i$) and total variation distance for probability distributions ($\rho_i$ and $P_i$). We write {\small$\mathcal{M}_i \in B(\mathcal{M}_j,\varepsilon)$} if {\small$d(\mathcal{M}_i,\mathcal{M}_j) \le \varepsilon$}.
\vspace{-0.3cm}
\subsection{Constrained Meta Reinforcement Learning}
In constrained meta RL, we are given a family of tasks denoted by \(\mathcal{M}_{\text{all}} := \{\mathcal{M}_i\}_{i \in \Omega}\), where $\Omega$ is a large and possibly uncountable index set. 
The main distinction from standard meta RL \cite{finn2017model} is the presence of the constraint function $V_c^{\mathcal{M}_i}$ for each task $\mathcal{M}_i$. Such setting naturally arises in applications such as autonomous navigation and robotic systems with collision avoidance, where different goals and obstacles affect both rewards and constraints. Moreover, transition dynamics parameters, such as friction coefficients and joint damping, often vary over continuous ranges, resulting in an uncountable family of tasks.\looseness-1

Similar to meta RL, constrained meta RL proceeds in two phases: a \emph{training phase} and a \emph{testing phase}.


\textbf{The training phase} is performed offline (e.g., using simulators), where constraint violations are allowed and agents can freely sample CMDPs $\mathcal{M}_i$ with  $i \sim \mc D$. With an abuse of notation, we also write  $\mathcal{M}_i \sim \mathcal{D}$. For example, in dexterous hand manipulation, parameters such as surface friction or joint damping are often drawn from uniform, log-uniform, or truncated Gaussian distributions over bounded intervals \cite{andrychowicz2020learning}. Given the availability of a simulator, we assume access to the following sampling model:

\textbf{Generative model:}
Given a CMDP $\mathcal{M}_{\text{train}}$, the oracle returns, for any $(s,a) \in \mathcal{S} \times \mathcal{A}$, the next state $s' \sim P(\cdot \mid s,a)$ together with the associated reward and constraint values $(r(s,a), c(s,a))$.

\textbf{The testing phase} occurs in the real world, where a new and unknown CMDP $\mathcal{M}_{\mathrm{test}}$ is drawn from $\mathcal{D}$. In contrast to the training phase, we assume access only to a sampling model that generates trajectories under a given policy:

\textbf{online model:}
Given a CMDP $\mathcal{M}_{\mathrm{test}}$ and a policy $\pi$, applying $\pi$ generates a trajectory of length {\small$H$},
{\small$\tau_H := \big(s_t, a_t, r(s_t,a_t), c(s_t,a_t)\big)_{t=0}^{H-1}$}, where
{\small$s_0 \sim \rho$, $a_t \sim \pi(\cdot \mid s_t)$, and $s_{t+1} \sim P(\cdot \mid s_t, a_t)$} for all {\small$t \in \{0,\dots,H-1\}$}.

This online model is less stringent than the model used in training. Since constraint violations are not allowed in the real world, we require \emph{safe exploration}, defined as follows:

\begin{definition}[Safe exploration]\label{def:safe_exploration}
Given an algorithm that produces a sequence of policies $\{\pi_k\}_{k=1}^{K}$, we say it ensures safe exploration on $\mathcal{M}_{\mathrm{test}}$ if $\pi_k$ is feasible for all $k\in[K]$.
\end{definition}

\textbf{Objective:} Our goal is to leverage knowledge acquired during training to gradually update the learned policy toward optimality for a new, unseen CMDP $\mathcal{M}_{\mathrm{test}}$ using minimal real-world interactions. This process is referred to as \emph{policy adaptation} \cite{finn2017model}. To quantify the sample complexity of adaptation, where each sample corresponds to a single query with $\mathcal{M}_{\mathrm{test}}$ at each iteration, we consider the \emph{test-time reward regret} \cite{efroni2020exploration}:
\begin{align}
\!\!\!\mathrm{Reg}_r^{\mathcal{M}_{\mathrm{test}}}(K)
:= \sum_{k=1}^{K}[ V_r^{\mathcal{M}_{\mathrm{test}}}(\pi^{*}_{\mathrm{test}})
      - V_r^{\mathcal{M}_{\mathrm{test}}}(\pi_k)]_{+}.\!\label{eq_regret}
\end{align}
This metric quantifies the cumulative sub-optimality of the deployed policies {\small$\{\pi_k\}_{k=1}^{K}$}. We now introduce the notion of a \emph{mixture policy}, which forms a convex combination of the policies {\small$\{\pi_k\}_{k=1}^{K}$} and will be central to our analysis of regret and feasibility.
\begin{definition}
A \emph{mixture policy} is defined as {\small\(\pi := \sum_{k=1}^{K} d_k \, \pi_k\)}, where {\small$\{\pi_k\}_{k=1}^{K} \subset \Pi$} and {\small$\{d_k\}_{k=1}^{K} \in \Delta(K)$}. It is implemented by sampling an index {\small$i \sim \{d_k\}_{k=1}^{K}$} once at the beginning and then executing the corresponding policy $\pi_i$ for all timesteps.
\end{definition}
Note that a mixture policy is Markovian, as its actions depend only on the current state and not on past trajectory information. Given this definition, sublinear reward regret implies that {\small${\sum_{k=1}^{K}\pi_k}/{K}$} converges to the optimal policy, providing a direct characterization of the sample complexity.\footnote{In meta RL, \citeauthor{ye2023power} show that knowing $\mathcal{D}$ provides only a constant improvement in test-time reward regret; nevertheless, we assume $\mathcal{D}$ is unknown for generality.}\looseness-1
\section{Algorithm Design}
In this section, we present algorithms for the training and testing phases that minimize test-time regret while ensuring safe exploration. Our approach is inspired by the Policy Collection–Elimination (PCE) algorithm of \cite{ye2023power}, originally developed for the unconstrained setting. In the training phase, PCE learns a policy set such that, for any $\mathcal{M} \sim \mathcal{D}$, the optimal policy lies within a small neighborhood of this set. During testing, PCE sequentially executes policies from this set to identify a policy that is near-optimal for the test CMDP.

PCE provably reduces test-time reward regret by eliminating the need to learn an optimal policy from scratch. However, this approach is insufficient in the constrained setting, where safe exploration additionally requires every deployed policy to be feasible. Since the test environment $\mathcal M\sim\mathcal D$ is unknown a priori, we must ensure that the policy set contains at least one policy that is safe across all environments simultaneously. In general, an optimal policy for a given $\mathcal M$ need not be feasible for all $\mathcal M\sim\mathcal D$.

To address this, in Section~\ref{sec_train}, we modify the training phase of the PCE algorithm to learn not only a policy set that approximates the optimal policy for any $\mathcal{M} \sim \mathcal{D}$, but also a single policy that is feasible for all $\mc M\in\mc M_{\text{all}}$. In Section~\ref{sec_test}, we propose a novel algorithm that refines these policies via an adaptive mixture scheme, enabling efficient adaptation with safe exploration guarantees during testing.
\subsection{Training Phase}
\label{sec_train}
To learn not only a policy set that approximates the optimal policy for any $\mathcal{M} \sim \mathcal{D}$ but also a single policy that is feasible for all $\mc M\in\mc M_{\text{all}}$, we first construct a CMDP set~$\mathcal{U}$. If $\,\mathcal{U}$ is sufficiently large, in the sense that it covers the support of the task distribution with high probability, then any CMDP $\mathcal{M}_{\text{new}}\sim \mathcal{D}$ lies within an $\varepsilon$-neighborhood of some $\mathcal{M} \in \mathcal{U}$. For each $\mathcal{M} \in \mathcal{U}$, we learn a near-optimal policy, and in addition, we learn a single policy $\pi_s$ that is feasible for all CMDPs in $\mathcal{U}$.

The training phase relies on three oracles, which are based on recent advances in constrained RL \cite{achiam2017constrained, ding2020natural, xu2021crpo} and meta constrained RL \cite{xuefficient,as2025spidr}.

\begin{definition}[CMDP check oracle]
\label{oracle_safe}
Given two CMDPs $\mathcal{M}_1$ and $\mathcal{M}_2$, an accuracy parameter $\varepsilon > 0$, and a confidence level $\delta \in (0,1)$, the oracle $\mathbb{O}_c(\mathcal{M}_1, \mathcal{M}_2, \varepsilon, \delta)$ returns $1$ with probability at least $1-\delta$ if $\mathcal{M}_1 \in B(\mathcal{M}_2, \varepsilon)$, and returns $0$ otherwise.
\end{definition}
For finite state and action spaces, $\mathbb{O}_c$ can be implemented by computing the distance $d(\mc M_1, \mc M_2)$ via enumeration, returning $1$ if $d(\mc M_1, \mc M_2) \le \varepsilon$. In large finite or continuous spaces, this distance can be estimated through sampling. Standard concentration arguments characterize the resulting estimation error as a function of the number of samples from $\mc M_1$ and $\mc M_2$, see \citep[Theorem~C.8]{as2025spidr}.

Given a collection $\{\mc M_i\}_{i\in[N]}$, we apply  $\mathbb{O}_c$ within Subroutine~\ref{alg: subroutine}, adapted from \citep[Algorithm 4]{ye2023power}, to construct a reduced set $\mc U$. Specifically, this subroutine starts from an empty set $\mc U$ and adds a CMDP $\mc M_{\text{new}}$ from $\{\mc M_i\}_{i\in[N]}$ only if 
\(\mathbb{O}_c(\mc M_{\text{new}}, \mc M, \varepsilon/N^2, \delta) = 0\) 
for every $\mc M \in \mc U$, indicating that $\mc M_{\text{new}}$ is at least $\varepsilon$ away from all existing CMDPs in $\mc U$. Upon termination of Subroutine~\ref{alg: subroutine}, $\mc U$ is a reduced set of $\{\mc M_i\}_{i\in[N]}$ in which any two CMDPs are separated by at least $\varepsilon$. For completeness, the corresponding pseudocode is provided in Appendix~\ref{app_subroutine}.

\begin{definition}[Optimal policy oracle]
\label{oracle_optimal}
Given CMDP $\mc M$, an accuracy parameter $\varepsilon > 0$, and a confidence level $\delta \in (0,1)$, the oracle $\mathbb{O}_l(\mathcal{M}, \varepsilon, \delta)$ returns a policy $\pi$ along with its reward and constraint values $\{V_r^{\mathcal{M}}(\pi),V_c^{\mathcal{M}}(\pi)\}$ such that, with probability at least $1 - \delta$, $\pi$ is $\varepsilon$-optimal and relaxed $\varepsilon$-feasible for $\mathcal{M}$.
\end{definition}
Above, we only require learning a near-optimal policy that is relaxed $\varepsilon$-feasible, rather than strictly feasible. This relaxation allows the use of a broader class of constrained RL algorithms \cite{achiam2017constrained, ding2020natural, xu2021crpo} compared with methods that require learning strictly feasible and optimal policies \cite{ni2023safe,bura2022dope}. Note that for $\mathcal{M}_1 \in B(\mathcal{M}_2, \varepsilon)$, the optimal policy for $\mathcal{M}_1$ corresponds to a near-optimal policy and relaxed {\small$\mc O(\varepsilon)$}-feasible for $\mathcal{M}_2$ (see Lemma~\ref{lemma_optimal}).

Next, given a set of CMDPs $\{\mc M_i\}_{i\in [N]} \subset \mc M_{\text{all}}$, our goal is to find a policy simultaneously feasible for all $\mc M_i$. To this end, we impose the following assumption, used in prior work on meta constrained RL \cite{xuefficient}:

\begin{assumption}[Simultaneous Slater’s condition]\label{ass_slater}
There exists a positive constant $\xi$ and a policy $\pi \in \Pi$ such that $\pi$ is $\xi$-feasible for every $\mathcal{M} \in \mathcal{M}_{\text{all}}$.
\end{assumption}

Assumption~\ref{ass_slater} guarantees the existence of a policy satisfying the constraints with a uniform positive margin $\xi$ across all CMDPs in $\mc M_{\text{all}}$. For example, in autonomous driving, it holds if there exists a policy that avoids all collisions in every task CMDP. Under this assumption, we can implement the simultaneously feasible policy oracle defined below:

\begin{definition}[Simultaneously feasible policy oracle]\label{oracle_safe_sim}
Given a finite set of CMDPs {\small$\{\mathcal{M}_i\}_{i \in [N]}$}, a safety margin {\small$\xi > 0$}, and a confidence level {\small$\delta \in (0,1)$}, the oracle 
{\small$\mathbb{O}_s(\{\mathcal{M}_i\}_{i \in [N]}, \xi, \delta)$} returns a policy $\pi$ along with its reward and constraint values {\small$\{V_r^{\mathcal{M}_i}(\pi), V_c^{\mathcal{M}_i}(\pi)\}_{i\in[N]}$} such that, with probability at least $1 - \delta$, $\pi$ is $\xi$-feasible for all $\mathcal{M}_i$.\looseness-1
\end{definition}
The oracle $\mathbb{O}_s$ can be implemented using existing algorithms \cite{as2025spidr,xuefficient}. For instance, \citeauthor{xuefficient} construct a policy that is $\xi$-feasible for every $\{\mathcal{M}_{i}\}_{i\in[N]}$ via a policy gradient method. At each iteration, the gradient is computed with respect to the CMDP in which the current policy incurs the largest constraint violation. By focusing on the worst-case CMDP at each step, this method provides a tractable way to implement the oracle $\mathbb{O}_s$.

With these oracles in place, we introduce the training phase in Algorithm \ref{alg_train}.
\begin{algorithm}[h]
\caption{Training phase}
\label{alg_train}
\begin{algorithmic}[1]
    \STATE \textbf{Input:} Oracles $\mathbb{O}_l$, $\mathbb{O}_c$, $\mathbb{O}_s$; confidence level $\delta$; initial sample size $N = \ln^2(\delta)/\delta^2$; accuracy parameter $\varepsilon$; safety margin $\xi$.
    \FOR{iteration $l = 1, 2, \dots$}
            \STATE Sample $N$ CMDPs from $\mathcal{D}$ as $\{\mc M_i\}_{i\in[N]}$
        \STATE Call $\mathbb{O}_c$ to construct $\mathcal{U}$ using Subroutine~\ref{alg: subroutine}
        
        \IF{$\sqrt{{|\mc U| \ln(2N/\delta)}{(N - |\mc U|)^{-1}}} > \delta$}
            \STATE $N=2N$
        \ELSE{}
        \FOR{$\forall \mc M_i\in\mc U$}
        
            \STATE {\fontsize{8}{6}\selectfont{Call $\mathbb{O}_l(\mathcal{M}_i, \varepsilon, \frac{\delta}{2|\mc U|})\to\{\pi_i,V_r^{\mc M_i}(\pi_i),V_c^{\mc M_i}(\pi_i)\}$}}
            
            \ENDFOR
            
            \STATE {\fontsize{8}{6}\selectfont{ Call $\mathbb{O}_s(\mc U, \xi, \frac{\delta}{2})\to$ $\{\pi_s,\{V_r^{\mc M_i}(\pi_s),V_c^{\mc M_i}(\pi_s)\}_{\mc M_i\in\mc U}\}$}}
            
            \STATE Return $\pi_s$ and $\hat {\mc U}$ computed as per Eq. \eqref{eq_constructpolicy}
        \ENDIF
    \ENDFOR
\end{algorithmic}
\end{algorithm}
The training phase iteratively samples CMDPs from $\mathcal{D}$ to construct a CMDP set $\mc U$. In the first iteration (line~3), we sample $N := \ln^2(\delta)/\delta^2$ CMDPs to form $\{\mc M_i\}_{i\in[N]}$. In line~4, Subroutine~\ref{alg: subroutine} is applied to obtain a reduced set $\mc U$. In line~5, the algorithm evaluates the quantity
{\small\(\sqrt{{|\mc U| \ln(2N/\delta)}/{(N - |\mc U|)}}\)}, which upper-bounds the error in estimating the coverage of $\mc U$ over the task distribution $\mc D$. If this error is greater than $\delta$, then $N$ is doubled and the procedure repeats. Otherwise, the set $\mc U$ constructed in line~4 is such that, with high probability, any CMDP from $\mathcal{D}$ lies within an $\varepsilon$-neighborhood of some CMDP in $\mc U$. Once $\mc U$ is constructed, we proceed as follows:

1. For each $\mathcal{M}_i \in \mathcal{U}$, call $\mathbb{O}_l(\mathcal{M}_i, \varepsilon, \delta / 2|\mathcal{U}|)$ to obtain its near-optimal policy $\pi_i$ along with the associated reward and constraint values $\{V_r^{\mathcal{M}_i}(\pi_i), V_c^{\mathcal{M}_i}(\pi_i)\}$.

2. Call $\mathbb{O}_s(\mathcal{U}, \xi, \delta/2)$ to obtain a feasible policy $\pi_s$ and the associated reward and constraint values for all CMDPs in $\mathcal{U}$, i.e., $\{V_r^{\mathcal{M}_i}(\pi_s), V_c^{\mathcal{M}_i}(\pi_s)\}_{\mathcal{M}_i \in \mathcal{U}}$.

Given this information, we construct a policy-value set:

\vspace{-0.5cm}
\begin{small}
\begin{align}
\!\!\hat {\mc U} \!:= \!\big\{ (\pi, V_r^{\mathcal{M}}(\pi), V_c^{\mathcal{M}}(\pi), V_r^{\mathcal{M}}(\pi_s), V_c^{\mathcal{M}}(\pi_s)) \big| \mathcal{M} \in \mathcal{U} \big\}, \!\!\label{eq_constructpolicy}
\end{align}
\end{small}
\vspace{-0.7cm}

Below, we characterize the near-optimality of $\hat{\mc U}$, assess the feasibility of $\pi_s$, and quantify the efficiency of Algorithm~\ref{alg_train} in terms of iteration and CMDP samples. Our bounds depend on $C_\varepsilon(\mathcal{D}, \delta)$, a finite quantity measuring the complexity of the task distribution $\mathcal{D}$; its formal definition is given in Definition~\ref{def_D}, with further discussion in Section~\ref{sec_main_thm}.
\begin{lemma}
\label{lemma: pretraining bound}
Let Assumption~\ref{ass_slater} hold and set $(8L+18)\varepsilon\le \xi$, where $L:=(1-\gamma)^{-1}+2\gamma(1-\gamma)^{-2}$. With probability at least $1 - 25\delta - 9\delta\ln C_\varepsilon(\mathcal{D},\delta)$, the following holds: 
\begin{enumerate}
    \item For any $\mathcal{M}\sim\mathcal{D}$, $\hat{\mc U}$ contains a policy-value tuple $(\pi,u,v,u_s,v_s)$ such that $\pi$ is $4\varepsilon L\!\left(1 + {\xi^{-1}(1-\gamma)^{-1}}\right)$-optimal and

    \vspace{-0.5cm}
    \begin{small}
    \begin{align*}
        &\max\Bigl\{\bigl| V_r^{\mathcal{M}}(\pi) - u \bigr|, \bigl| V_c^{\mathcal{M}}(\pi) - v \bigr|,\\
        &\qquad\quad \bigl| V_r^{\mathcal{M}}(\pi_s) - u_s \bigr|, \bigl| V_c^{\mathcal{M}}(\pi_s) - v_s \bigr| \Bigr\} \le \varepsilon L.
    \end{align*}
    \end{small}
    \vspace{-0.75cm}
    
    \item For any $\mathcal{M}\sim\mathcal{D}$, the policy $\pi_s$ is $(\xi - \varepsilon L)$-feasible.
    \item Algorithm~\ref{alg_train} terminates with at most $\mathcal{O}(\ln C_\varepsilon(\mathcal{D}, \delta))$ iterations and samples at most $\tilde{\mathcal{O}}\!\left(\delta^{-2} C_\varepsilon(\mathcal{D}, \delta)\right)$ CMDPs from $\mc M_{\text{all}}$, returning a policy-value set satisfying \(|\hat{\mathcal{U}}| \le 2 C_\varepsilon(\mathcal{D}, \delta) \ln {\delta}^{-1}\).
\end{enumerate}
\end{lemma}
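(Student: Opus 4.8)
The plan is to condition on a single high-probability event on which every oracle call succeeds and the constructed set $\mathcal{U}$ genuinely $\varepsilon$-covers $\mathcal{D}$, then obtain Parts 1--2 by propagating the per-task oracle guarantees to a fresh $\mathcal{M}\sim\mathcal{D}$ through a value-perturbation bound, and obtain Part 3 from a concentration analysis of the doubling scheme. First I would record the two analytic workhorses. (a) A value-perturbation (simulation) bound, which I would state as a preliminary lemma or invoke if already available: for every policy $\pi$ and every pair with $\mathcal{M}_1\in B(\mathcal{M}_2,\varepsilon)$, both $|V_r^{\mathcal{M}_1}(\pi)-V_r^{\mathcal{M}_2}(\pi)|$ and $|V_c^{\mathcal{M}_1}(\pi)-V_c^{\mathcal{M}_2}(\pi)|$ are at most $\varepsilon L$ with $L=(1-\gamma)^{-1}+2\gamma(1-\gamma)^{-2}$; the two summands of $L$ account respectively for the $\|r_i-r_j\|_\infty,\|c_i-c_j\|_\infty$ mismatch and for the $\|\rho_i-\rho_j\|_{\mathrm{TV}},\|P_i-P_j\|_{\mathrm{TV}}$ mismatch via the standard discounted simulation argument. (b) The coverage guarantee: when the line-5 statistic $\sqrt{|\mathcal{U}|\ln(2N/\delta)/(N-|\mathcal{U}|)}\le\delta$ holds, the empirical covered fraction $(N-|\mathcal{U}|)/N$ is, up to the displayed slack, a valid estimate of $\Pr_{\mathcal{M}\sim\mathcal{D}}[\mathcal{M}\in B(\mathcal{M}_i,\varepsilon)\text{ for some }\mathcal{M}_i\in\mathcal{U}]$, so a fresh $\mathcal{M}\sim\mathcal{D}$ is $\varepsilon$-covered except with probability $O(\delta)$. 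I then union-bound the $|\mathcal{U}|$ calls to $\mathbb{O}_l$ (each at level $\delta/2|\mathcal{U}|$), the single call to $\mathbb{O}_s$ (at $\delta/2$), and the $\mathbb{O}_c$ calls, and work on the resulting good event.

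On the good event, Part 2 is immediate: pick $\mathcal{M}_i\in\mathcal{U}$ with $\mathcal{M}\in B(\mathcal{M}_i,\varepsilon)$; since $\mathbb{O}_s$ (Definition~\ref{oracle_safe_sim}) certifies $V_c^{\mathcal{M}_i}(\pi_s)\ge\xi$, the perturbation bound gives $V_c^{\mathcal{M}}(\pi_s)\ge\xi-\varepsilon L$, which is positive because $(8L+18)\varepsilon\le\xi$. For Part 1, the four value-estimate accuracies are a direct application of (a) to the stored tuple, whose entries are values on $\mathcal{M}_i$ while the true values are on $\mathcal{M}$. The near-optimality of $\pi:=\pi_i$ is then assembled from three pieces: the $\varepsilon$-optimality of $\pi_i$ for $\mathcal{M}_i$ from $\mathbb{O}_l$ (Definition~\ref{oracle_optimal}); the transfer of $\pi_i$'s reward value from $\mathcal{M}_i$ to $\mathcal{M}$ via (a); and Lemma~\ref{lemma_optimal}, which compares the constrained optimal values $V_r^{\mathcal{M}}(\pi^{*}_{\mathcal{M}})$ and $V_r^{\mathcal{M}_i}(\pi^{*}_{\mathcal{M}_i})$. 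The last comparison is the only genuinely constrained step: because the feasible sets of $\mathcal{M}$ and $\mathcal{M}_i$ differ, one takes the optimal feasible policy of one CMDP, observes it is at most $O(\varepsilon L)$-infeasible for the other, and restores feasibility by mixing with the Slater policy of margin $\xi$ from Assumption~\ref{ass_slater}; the mixing weight $\propto\varepsilon L/\xi$ and the reward range $(1-\gamma)^{-1}$ produce exactly the $\xi^{-1}(1-\gamma)^{-1}$ factor. Summing the $O(\varepsilon L)$ contributions with this term yields the stated bound $4\varepsilon L(1+\xi^{-1}(1-\gamma)^{-1})$, the factor $4$ absorbing the numerical constants of Lemma~\ref{lemma_optimal} and the slack $\varepsilon\le\varepsilon L$ since $L\ge1$; the hypothesis $(8L+18)\varepsilon\le\xi$ is precisely what keeps every intermediate feasibility margin nonnegative through this chain.

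For Part 3 I would unfold Definition~\ref{def_D}: $C_\varepsilon(\mathcal{D},\delta)$ controls both the size of any $\varepsilon$-separated subset of the high-probability region of $\mathcal{D}$ and the sample size needed for empirical coverage to concentrate. The argument is (i) once $N=\tilde{\mathcal{O}}(\delta^{-2}C_\varepsilon)$, a uniform-convergence (Bernstein-type) bound forces the empirical uncovered fraction, hence the line-5 statistic, below $\delta$, so the loop halts and the total number of sampled CMDPs is $\tilde{\mathcal{O}}(\delta^{-2}C_\varepsilon)$; (ii) since $N$ doubles from $\ln^2(\delta)/\delta^2$ up to $\tilde{\mathcal{O}}(\delta^{-2}C_\varepsilon)$, the loop runs $\mathcal{O}(\ln C_\varepsilon)$ times; (iii) $\mathcal{U}$ is $\varepsilon/N^2$-separated by construction in Subroutine~\ref{alg: subroutine}, and Definition~\ref{def_D} bounds the cardinality of such separated samples by $2C_\varepsilon\ln\delta^{-1}$. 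Finally I would collect failure probabilities: the $\mathbb{O}_l$ and $\mathbb{O}_s$ budgets contribute $O(\delta)$, and each of the $\mathcal{O}(\ln C_\varepsilon)$ iterations contributes $O(\delta)$ from its $\mathbb{O}_c$ calls and its coverage-concentration event, yielding the stated $1-25\delta-9\delta\ln C_\varepsilon(\mathcal{D},\delta)$.

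The main obstacle is twofold. Conceptually, the delicate step is the constrained optimal-value comparison in Part 1: unlike the unconstrained PCE analysis of \cite{ye2023power}, the optimal value is not Lipschitz in $d(\cdot,\cdot)$ because the feasible set itself moves, and it is exactly Assumption~\ref{ass_slater} together with the mixing argument (encapsulated in Lemma~\ref{lemma_optimal}) that rescues Lipschitz-type control at the cost of the $\xi^{-1}(1-\gamma)^{-1}$ inflation. Technically, the heaviest bookkeeping is Part 3, where termination, the bound on $|\mathcal{U}|$, and the iteration count must all be certified through the same concentration inequality while keeping the accumulated failure probability in the exact compound form $25\delta+9\delta\ln C_\varepsilon$, which requires carefully matching the per-iteration confidence levels to the $\mathcal{O}(\ln C_\varepsilon)$ iteration count.
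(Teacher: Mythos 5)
Your overall architecture---condition on a good event combining oracle correctness and coverage concentration, transfer per-task guarantees to a fresh $\mathcal{M}\sim\mathcal{D}$ via the simulation lemma, and handle the constrained optimal-value comparison through Lemma~\ref{lemma_optimal} (feasibility restored by mixing with the Slater policy, producing the $\xi^{-1}(1-\gamma)^{-1}$ inflation)---is the same as the paper's, and your treatment of Parts 1 and 2 and of the doubling/concentration portion of Part 3 is sound at the level of a sketch.

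The genuine gap is your justification of $|\hat{\mathcal{U}}|\le 2C_\varepsilon(\mathcal{D},\delta)\ln\delta^{-1}$, which you derive from ``$\mathcal{U}$ is $\varepsilon$-separated by construction, and Definition~\ref{def_D} bounds the cardinality of such separated samples.'' This packing argument fails for two reasons. First, $C_\varepsilon(\mathcal{D},\delta)$ is a covering number at radius $\varepsilon$, and in the metric $d(\cdot,\cdot)$ (a maximum of sup-norms and total-variation distances) a single ball of radius $\varepsilon$ can contain many pairwise $\varepsilon$-separated points, so $\varepsilon$-separation does not bound $|\mathcal{U}|$ by $O(C_\varepsilon)$; you would need $2\varepsilon$-separation and containment in the cover for that. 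Second, and more fatally, only a $1-\delta$ fraction of the mass of $\mathcal{D}$ lies in the covered region $\Omega^*$, so roughly $\delta N=\tilde{\mathcal{O}}(C_\varepsilon/\delta)$ of the sampled CMDPs can fall outside it, and a pure separation rule would admit essentially all of them into $\mathcal{U}$, far exceeding $2C_\varepsilon\ln\delta^{-1}$. Moreover this cardinality bound is load-bearing for your own steps (i)--(ii): the stopping statistic $\sqrt{|\mathcal{U}|\ln(2N/\delta)/(N-|\mathcal{U}|)}$ only falls below $\delta$ at $N=\tilde{\mathcal{O}}(\delta^{-2}C_\varepsilon)$ if $|\mathcal{U}|$ is already known to be $O(C_\varepsilon\ln\delta^{-1})$ independently of $N$, so without it the termination argument is circular. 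The paper's actual mechanism (Subroutine~\ref{alg: subroutine} together with Lemma~\ref{lemma:size_of_u}) is a greedy max-coverage selection with a set-cover potential argument: since the $C_\varepsilon$ balls forming $\Omega^*$ cover at least a $(1-2\delta)$ fraction of the samples, each greedy pick covers at least a $1/(C_\varepsilon+1)$ fraction of the remaining uncovered mass above the $2\delta N$ floor, yielding the recursion $|\mathcal{T}_t|-2\delta N\le\tfrac{C_\varepsilon}{C_\varepsilon+1}\bigl(|\mathcal{T}_{t-1}|-2\delta N\bigr)$ and hence termination within $(C_\varepsilon+1)\ln\delta^{-1}$ rounds, one new element of $\mathcal{U}$ per round. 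This geometric decay down to the $3\delta N$ threshold---not a packing consideration---is where the $\ln\delta^{-1}$ factor comes from, and it is the idea missing from your proposal.
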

Lemma~\ref{lemma: pretraining bound} extends \citep[Lemma C.1]{ye2023power} from the unconstrained to the constrained setting and from countable to uncountable CMDP sets. Its proof relies on two main techniques: (i) concentration bounds (Chernoff and union bounds) to ensure that, for any $\mathcal{M} \sim \mathcal{D}$, a sufficiently large CMDP set $\mathcal{U}$ has an $\varepsilon$-neighborhood containing $\mathcal{M}$ with high probability, and (ii) continuity results for the optimal values of constrained optimization problems, combined with the simulation lemma \cite{abbeel2005exploration}, to extend the near-optimality and feasibility of policies computed on $\hat{\mathcal{U}} $ to all CMDPs in $\mc M_{\text{all}}$. The proof is provided in Appendix~\ref{app_train}.

Observe that Algorithm~\ref{alg_train}, including all oracle calls, is executed in the training environment, where samples are relatively inexpensive due to the availability of a simulator. Combined with the third point of Lemma~\ref{lemma: pretraining bound}, this ensures that Algorithm~\ref{alg_train} can be implemented in finite time.
Furthermore, all oracles apply to both finite and continuous state and action spaces, as the underlying methods support such settings.

\subsection{Testing Phase}\label{sec_test}
From Lemma~\ref{lemma: pretraining bound}, for any test CMDP $\mathcal{M}_{\text{test}} \sim \mathcal{D}$, the policy–value set {\small$\hat{\mathcal{U}}$} contains a policy that is near-optimal for $\mathcal{M}_{\text{test}}$. The goal of the testing phase is to efficiently identify such a policy while maintaining safe exploration. Directly executing a policy $\pi_l \in \hat{\mc U}$ may however violate the constraints in $\mathcal{M}_{\text{test}}$. On the other hand, the feasible policy $\pi_s$ guarantees safety but is generally suboptimal. To balance safety and reward, we introduce a \emph{mixture policy}:
\[
\pi_{l,m} = (1-\alpha_{l,m})\pi_s + \alpha_{l,m} \pi_l,
\]
where $\alpha_{l,m} \in [0,1]$, indexed by the update counter $m$, is gradually increased to shift probability mass toward the candidate policy $\pi_l$ while preserving feasibility. The main challenges are (i) selecting an appropriate policy $\pi_l$ and (ii) adaptively updating $\alpha_{l,m}$ based on interactions with $\mathcal{M}_{\text{test}}$. We address these challenges in Algorithm~\ref{alg:test_stage}, which proceeds in three stages: optimistic selection of a candidate policy $\pi_l$, verification of $\pi_l$, and adaptive updates of $\alpha_{l,m}$.
\begin{algorithm}[ht]
\caption{Testing phase with safe exploration}
\label{alg:test_stage}
\begin{algorithmic}[1]
    \STATE \textbf{Input:} Policy–value set $\hat{\mc U}$ and $\pi_s$ from Alg. \ref{alg_train}, number of iterations $K$, truncated horizon $H$, confidence level $\delta$, accuracy parameter $\varepsilon$, $L=(1-\gamma)^{-1}+2\gamma(1-\gamma)^{-2}$.
    \STATE \textbf{Initialize:} Set counters $l=1, \,k_0 = 1$, and $m = 0$, initial mixture weight $\alpha_{l,0} = 0, \;\forall l \in \mathbb{N}$.
    \STATE Select policy with the highest reward value:
    
    \vspace{-0.5cm}
    \begin{small}
        \begin{align*}
            (\pi_l, u_l, v_l, u_{l,s}, v_{l,s}) = \arg\max_{(\pi, u, v, u_s, v_s) \in \hat{\mc U}} u
        \end{align*}
    \end{small}
    \vspace{-0.5cm}
    
    \FOR{$k = 1, \dots, K$}
        \STATE Set {\small\(\pi_k = \pi_{l,m} = \alpha_{l,m} \pi_l + (1-\alpha_{l,m}) \pi_s\)} and compute $u_{l,m},v_{l,m}$ as per Eq. \eqref{eq_computeuv}.
        \STATE Sample a trajectory $\tau_H$ using $\pi_{l,m}$ and compute $R_k$ and $C_k$ as per Eq. \eqref{eq_cumulative}.

        \IF{Inq. \eqref{eq_condition} is violated}
            \STATE Update $\hat{\mc U}$: {\small$\hat{\mc U} = \hat{\mc U} \setminus \{(\pi_l, u_l, v_l, u_{l,s}, v_{l,s})\}$}
            \STATE Update counters: $k_0 = k+1$, $l =l+1$, $m = 0$
            \STATE Select policy with the highest reward value:
            
            \vspace{-0.5cm}
            \begin{small}
                \begin{align*}
                    (\pi_l, u_l, v_l, u_{l,s}, v_{l,s}) = \arg\max_{(\pi, u, v, u_s, v_s) \in \hat{\mc U}} u
                \end{align*}
            \end{small}
            \vspace{-0.5cm}
            
        \ELSIF{{\small$k - k_0 - 1 \ge \frac{32 \ln(4K/\delta)}{(1-\gamma)^2 \nu_{l,m}^2}$ \textbf{and} $m \le m(l)$, where $m(l)$ is defined in Eq. \eqref{eq_conterm}}}
            \STATE Update the mixture weight as per Eq. \eqref{eq_closeform_1}
        \STATE Update counters: $k_0= k+1$, $m = m+1$
        \ENDIF
    \ENDFOR
    \STATE \textbf{Return:} Final policy $\pi_{out} = \frac{1}{K} \sum_{k=1}^{K} \pi_k$.
\end{algorithmic}
\end{algorithm}

At the start of each verification phase, Algorithm~\ref{alg:test_stage} selects the policy $\pi_l \in \hat{\mc U}$ that maximizes the predicted reward $u_l$ (Line~3), following the principle of optimism in the face of uncertainty. Since $\pi_l$ may violate constraints in $\mc M_{\text{test}}$, the algorithm instead executes the mixture policy {\small\(\pi_{l,m} = (1-\alpha_{l,m})\pi_s + \alpha_{l,m}\pi_l\)}, where the mixture weight is initialized as $\alpha_{l,0}=0$, which corresponds to executing the feasible policy $\pi_s$ (Line~5). Let $(u_l, v_l)$ and $(u_{l,s}, v_{l,s})$ denote the reward and constraint values of $\pi_l$ and $\pi_s$, respectively, as learned during training and stored in $\hat{\mc U}$.  The predicted reward and constraint values of the mixture policy $\pi_{l,m}$ are then given by the convex combinations
\begin{equation}\label{eq_computeuv}
  \begin{aligned}
    u_{l,m} &= \alpha_{l,m} u_l + (1-\alpha_{l,m}) u_{l,s},\\
    v_{l,m} &= \alpha_{l,m} v_l + (1-\alpha_{l,m}) v_{l,s}.
  \end{aligned}
\end{equation}
To verify whether $\pi_l$ is near-optimal for  $\mathcal{M}_{\text{test}}$, the algorithm interacts with $\mathcal{M}_{\text{test}}$ using $\pi_{l,m}$. In Line~6, the algorithm collects a trajectory $\tau_H$  under $\pi_{l,m}$ and computes the corresponding cumulative reward and constraint values:
\begin{equation}\label{eq_cumulative}
\!\!R_k := \sum_{t\in[H-1]}\gamma^t r(s_t,a_t), \,
C_k := \sum_{t\in[H-1]} \gamma^tc(s_t,a_t).\!\!\!
\end{equation}
By concentration bounds \cite{ni2023safe}, the empirical averages of $R_k$ and $C_k$ concentrate around the true values {\small$(V_r^{\mathcal{M}_{\text{test}}}(\pi_{l,m}), V_c^{\mathcal{M}_{\text{test}}}(\pi_{l,m}))$} with high probability. Furthermore, by Lemma~\ref{lemma: pretraining bound}, if $\pi_l$ is near-optimal for $\mathcal{M}_{\text{test}}$, then the predicted values $(u_{l,m}, v_{l,m})$ are $\mathcal{O}(\varepsilon)$-close to {\small$(V_r^{\mathcal{M}_{\text{test}}}(\pi_{l,m}), V_c^{\mathcal{M}_{\text{test}}}(\pi_{l,m}))$}. Consequently, for a near-optimal $\pi_l$, the following condition holds with high probability:
\begin{equation}\label{eq_condition}
  \begin{aligned}
    &\max\!\left\{
    \Big|\frac{\sum_{j=k_0}^{k} R_j}{k-k_0+1} - u_{l,m}\Big|,
    \Big|\frac{\sum_{j=k_0}^{k} C_j}{k-k_0+1} - v_{l,m}\Big|
    \right\}\\
    &\le 
    \sqrt{\frac{2\ln(4K/\delta)}{(k-k_0+1)(1-\gamma)^2}} 
    + \varepsilon (L+1),
  \end{aligned}  
\end{equation}
where the first term captures statistical concentration, and the second term accounts for the estimation error inherited from training.

If the condition in Inequality~\eqref{eq_condition} is violated (Line~7), then $\pi_l$ is not near-optimal for $\mathcal{M}_{\text{test}}$ with high probability and is removed from $\hat{\mc U}$ (Line~8). The algorithm then resets the verification process by incrementing the policy index $l$, setting the mixture index $m=0$, and updating the counter $k_0 = k+1$ (Line~9). This marks the start of a new verification phase for the newly selected policy $\pi_l$ (Line~10).

If $\pi_l$ is not eliminated, the algorithm continues collecting samples. As more data are gathered, the confidence intervals in Inequality~\eqref{eq_condition} shrink, allowing a larger mixture weight $\alpha_{l,m}$ while still maintaining feasibility. In Appendix~\ref{app_test}, we derive the following closed-form update for $\alpha_{l,m}$ based on the lower confidence bound of the constraint value of $\pi_l$:
\begin{equation}\label{eq_closeform_1}
\begin{aligned}
   \!\!\!\!\!\!  \alpha_{l,{m+1}}\! =
\!           \begin{cases}
    \frac{v_{l,s}-2\varepsilon (L+2)}{v_{l,s}-2\varepsilon (L+2)+2/(1-\gamma)}, &\!\!\!\! m = 0,\\[0.2cm]
   \frac{(v_{l,s}-(4L+9)\varepsilon)\alpha_{l,1}}
{\,v_{l,s}\alpha_{l,1} + (v_{l,s}-(4L+9)\varepsilon - v_{l,s}\alpha_{l,1}) (C_l)^m\,}, &\!\!\!\! m \ge 1,
\end{cases}
\end{aligned}
\end{equation}
where $C_l = \frac{2}{3}+\frac{4L+9}{3 v_{l,s}}$. Note that $C_l$ decreases as $v_{l,s}$ increases, where $v_{l,s}$ denotes the safety margin of the feasible policy $\pi_s$. After updating $\alpha_{l,m}$ (Line~12), the algorithm resets $k_0 = k+1$ and increments $m$ (Line~13) to begin verifying the updated mixture policy.

The update rule ensures that $\alpha_{l,m}$ converges exponentially fast to $1-\mathcal{O}(\varepsilon)$, with base $C_l$. Updates stop (Line~11) once
\begin{align}
    m \ge m(l) := \log_{C_l} \varepsilon, \label{eq_conterm}
\end{align}
at which point the mixture policy differs from $\pi_l$ on only an $\mathcal{O}(\varepsilon)$ fraction of probability mass. Thereafter, the algorithm repeatedly executes $\pi_{l,m}$, achieving near-optimal performance while preserving safety.

In conclusion, the testing phase enables a transition from a conservative yet feasible policy to a high-reward policy, guaranteeing safe exploration. The algorithm returns the final policy as the average mixture of all executed policies (Line~16), and achieves sublinear test-time reward regret, as stated in Theorem~\ref{thm:test_stage_final}.

\section{Theoretical Guarantees}\label{sec_main_thm}
We now establish the sample complexity and safe exploration guarantees of Algorithm~\ref{alg:test_stage}. We begin by introducing a key quantity used to characterize the sample complexity. \looseness-1
\begin{definition}\label{def_D}
Let $C_\varepsilon(\mathcal{D}, \delta)$ denote the covering number of the highest-density region of $\mc D$:
\begin{align*}
   \!\! C_\varepsilon&(\mathcal{D}, \delta)\! := \!
\min\!\Big\{ m\!\mid\!\exists\,m \text{ CMDPs in } \mathcal{M}_{\text{all}}\text{ denoted by} \\
&\{\mathcal{M}_i\}_{i\in[m]} \text{ s.t.}\ \!\!\!\!
\Pr_{\mathcal{M} \sim \mathcal{D}}\!\Big(\mathbf{1}{[
\mc M\in\bigcup_{i=1}^m B(\mathcal{M}_i,\varepsilon)]}\Big) \ge 1-\delta \Big\}.
\end{align*}
\end{definition}
The quantity $C_\varepsilon(\mathcal{D}, \delta)$ was considered in \cite{ye2023power} for countable $\mc M_{\text{all}}$. Here, we extend it to the general case where $\mc M_{\text{all}}$ may be uncountable. Intuitively, $C_\varepsilon(\mathcal{D}, \delta)$ measures the minimal number of CMDPs whose $\varepsilon$-neighborhoods cover most of the distribution $\mathcal{D}$ with probability at least $1-\delta$. Smaller $\varepsilon$ or $\delta$ increases $C_\varepsilon(\mathcal{D}, \delta)$. Even when $|\Omega|$ is large or infinite, $C_\varepsilon(\mathcal{D}, \delta)$ is still finite (see Appendix~\ref{app_boundC}) and can be much smaller if $\mathcal{D}$ is concentrated.

To provide more intuition for $C_\varepsilon(\mathcal{D}, \delta)$, consider $\mathcal{D}$ as a $d$-dimensional Gaussian $\mathcal{N}(\mu, \sigma^2 I_d)$. Then, it equals $\mathcal{O}\big(\varepsilon^{-d} (F^{-1}_{\chi_d^2}(1-\delta))^{d/2} \sigma^d\big)$, 
where $F^{-1}_{\chi_d^2}(1-\delta)$ is the $(1-\delta)$-quantile of the chi-squared distribution with $d$ degrees of freedom. In one dimension, this reduces to $\mathcal{O}(\varepsilon^{-1}\sigma\sqrt{\ln \delta^{-1}})$. For a uniform distribution over a compact set $K \subset \mathbb{R}^d$, $C_{\varepsilon}(\mathcal{D}, \delta)$ scales as $\mathcal{O}\big(\varepsilon^{-d} (1-\delta)|K|\big)$, where $|K|$ grows with the dimension $d$.

Using this quantity together with Slater’s condition, we obtain the following test-time guarantees for Algorithm~\ref{alg:test_stage}. 
\begin{theorem}\label{thm:test_stage_final}
 Let Assumption \ref{ass_slater} hold and set $(8L+18)\varepsilon\le \xi $ and the truncated horizon as $H=\tilde{\mathcal{O}}((1-\gamma)^{-1})$. Then, for any test CMDP $\mc M\sim \mc D$, Algorithms \ref{alg_train} and \ref{alg:test_stage} satisfies the following properties with probability at least $1 - 26\delta -9\delta\ln C_\varepsilon(\mathcal{D},\delta)$:
\begin{enumerate}
    \item For all $k\in[K]$, each policy $\pi_k$ is feasible. In other words, we have safe exploration (see Definition \ref{def:safe_exploration}).
    \item  The test-time reward regret defined in Equation \eqref{eq_regret} is bounded by
    \begin{align*}
      \mathrm{Reg}_r^{\mc M}(K)\le  \tilde{\mathcal{O}}\left(\frac{\sqrt{\mathcal{C}_{\varepsilon}(\mathcal{D},\delta)K}}{(1-\gamma)^{2}\xi}+\frac{ \varepsilon K }{\xi(1-\gamma)^3}\right).
    \end{align*}
\end{enumerate}
\end{theorem}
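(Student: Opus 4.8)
The plan is to condition on a single high-probability good event and then prove the two claims separately. First I would build the good event as the intersection of: (i) the training guarantees of Lemma~\ref{lemma: pretraining bound}, valid with probability at least $1 - 25\delta - 9\delta\ln C_\varepsilon(\mathcal{D},\delta)$, which supply for the realized $\mathcal{M}\sim\mathcal{D}$ both an $\mathcal{O}(\varepsilon)$-optimal tuple in $\hat{\mathcal{U}}$ with $\varepsilon L$-accurate stored values and the $(\xi-\varepsilon L)$-feasibility of $\pi_s$; (ii) Hoeffding-type concentration of the empirical averages $\tfrac{1}{n}\sum R_j$ and $\tfrac{1}{n}\sum C_j$ around $V_r^{\mathcal{M}}(\pi_{l,m})$ and $V_c^{\mathcal{M}}(\pi_{l,m})$ for each executed mixture policy (up to horizon-$H$ truncation), via a union bound over the at most $K$ verification windows (each reward and constraint event at level $\delta/4K$), which contributes the extra $\delta$ and upgrades $25\delta$ to $26\delta$; and (iii) control of the truncation bias by $\gamma^{H}/(1-\gamma)\le\varepsilon$, exactly what $H=\tilde{\mathcal{O}}((1-\gamma)^{-1})$ delivers. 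On this event every comparison in Inequality~\eqref{eq_condition} and every weight update made by Algorithm~\ref{alg:test_stage} is simultaneously valid.

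For Part~1 the key structural fact is that the constraint value is \emph{linear} in the mixture weight, $V_c^{\mathcal{M}}(\pi_{l,m}) = \alpha_{l,m}V_c^{\mathcal{M}}(\pi_l) + (1-\alpha_{l,m})V_c^{\mathcal{M}}(\pi_s)$, since a mixture policy commits to one component at $t=0$. As $\pi_s$ is $(\xi-\varepsilon L)$-feasible for $\mathcal{M}$, its term is strictly positive, so I only need to bound how far $\alpha_{l,m}$ pushes toward the possibly infeasible $\pi_l$. I would induct on $m$: the base case $\alpha_{l,0}=0$ gives $\pi_{l,0}=\pi_s$, which is feasible; for the step, the sample trigger in Line~11 forces roughly $\Omega(\ln(K/\delta)/((1-\gamma)^2\nu_{l,m}^2))$ rounds before $\alpha$ is raised, so the concentration radius in Inequality~\eqref{eq_condition} is a small fraction of the current margin. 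Combining the surviving-verification inequality with concentration yields the lower confidence bound $V_c^{\mathcal{M}}(\pi_{l,m}) \ge v_{l,m} - 2\,(\text{radius}) - \varepsilon(L+1)$, and the closed-form $\alpha_{l,m+1}$ derived in Appendix~\ref{app_test} is exactly the largest weight keeping this bound nonnegative after the update; this closes the induction and gives $V_c^{\mathcal{M}}(\pi_k)\ge0$ for all $k$, i.e., safe exploration.

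For Part~2 I would partition the $K$ rounds into verification windows indexed by $(l,m)$ and bound per-window contributions. Optimism (Line~3) forces $u_l \ge V_r^{\mathcal{M}}(\pi^{*}_{\mathrm{test}}) - \mathcal{O}(\varepsilon)$, so once the selected candidate is the near-optimal one for $\mathcal{M}$, the stored-value accuracy gives $V_r^{\mathcal{M}}(\pi_l)\ge V_r^{\mathcal{M}}(\pi^{*}_{\mathrm{test}}) - \mathcal{O}(\varepsilon)$; by linearity of $V_r$ and $V_r^{\mathcal{M}}(\pi_s)\ge0$, the per-step regret of $\pi_{l,m}$ is at most $(1-\alpha_{l,m})/(1-\gamma) + \mathcal{O}(\varepsilon)$. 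Two counting facts then close the bound: at most $|\hat{\mathcal{U}}|=\tilde{\mathcal{O}}(C_\varepsilon(\mathcal{D},\delta))$ candidates are ever selected, since each elimination permanently removes one tuple; and within a surviving candidate the geometric schedule drives $1-\alpha_{l,m}$ to $\mathcal{O}(\varepsilon)$ in $m(l)=\log_{C_l}\varepsilon=\tilde{\mathcal{O}}(1)$ updates. Writing the window length as $n_p$ and its per-step regret as $g_p$, the trigger yields $\nu_{l,m}\propto\xi(1-\alpha_{l,m})$, hence $g_p\propto (1-\gamma)^{-2}\xi^{-1}/\sqrt{n_p}$; Cauchy--Schwarz over the $\tilde{\mathcal{O}}(C_\varepsilon)$ windows bounds $\sum_p g_p n_p$ by $\tilde{\mathcal{O}}(\sqrt{C_\varepsilon(\mathcal{D},\delta)K}/((1-\gamma)^2\xi))$, the statistical term. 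The residual $\mathcal{O}(\varepsilon)$ per-step bias, incurred over all $K$ rounds and inflated by the Slater factor $\xi^{-1}(1-\gamma)^{-1}$ from the near-optimality bound of Lemma~\ref{lemma: pretraining bound}, yields the linear term $\tilde{\mathcal{O}}(\varepsilon K/(\xi(1-\gamma)^3))$.

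I expect the feasibility induction to be the main obstacle, because the weight update must simultaneously keep the true constraint value nonnegative from a confidence-bounded estimate and raise $\alpha$ fast enough for sublinear regret, and these pull in opposite directions: the margin $\nu_{l,m}$ shrinks as $\alpha\to1$, inflating the sample requirement $\nu_{l,m}^{-2}$. Verifying that the specific geometric form of $\alpha_{l,m+1}$, with base $C_l$ tied to the estimated margin $v_{l,s}$, threads this needle is the crux. A secondary difficulty is charging the regret of \emph{wrong} candidates: a policy not near-optimal for $\mathcal{M}$ may be selected first by optimism, and I must show that its stored $\pi_s$- and $\pi_l$-values fail Inequality~\eqref{eq_condition} before $\alpha$ grows large, so that its verification cost is absorbed into the $\sqrt{C_\varepsilon(\mathcal{D},\delta)K}$ budget rather than accumulating linearly in $K$.
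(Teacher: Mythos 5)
Your proposal follows essentially the same route as the paper: condition on the training event of Lemma~\ref{lemma: pretraining bound} plus per-window concentration (giving the $26\delta$ total), prove safety by induction on $m$ using the linearity of $V_c^{\mathcal{M}}$ in the mixture weight and the lower confidence bound that makes the closed-form $\alpha_{l,m+1}$ the largest feasibility-preserving weight, and bound the regret by splitting each round's suboptimality into optimism, mixing loss $\mathcal{O}((1-\alpha_{l,m})/(1-\gamma))$, the elimination-rule deviation, and concentration error, then applying Cauchy--Schwarz over the $\tilde{\mathcal{O}}(C_\varepsilon(\mathcal{D},\delta))$ windows. The only point of emphasis to adjust is your ``secondary difficulty'' about wrong candidates: the paper does not need them to be eliminated before $\alpha$ grows, since the survival of the check in Inequality~\eqref{eq_condition} already forces the empirical reward to track $u_{l,m}$ (which is near-optimal by the argmax selection and non-elimination of the good tuple), so their cost is absorbed by terms (iii) and (iv) regardless of when they are removed.
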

The regret bound has two terms: a sublinear term in $K$ scaling with $\mathcal{C}_{\varepsilon}(\mathcal{D}, \delta)$, and a linear term in $K$. This linear term arises for two reasons. First, the policy learned during training is only $\mathcal{O}(\varepsilon)$-optimal, causing suboptimality to accumulate over $K$ steps, as also observed in meta RL \cite{ye2023power}. Second, estimation errors arising from approximating the reward and constraint values from finite horizon trajectories further contribute with a term that grows linearly in $K$. This has been commonly observed in RL guarantees \cite{agarwal2021theory,ding2020natural}.

Compared with unconstrained meta RL \cite{ye2023power}, the above regret bound preserves the dependence on $K$ and $\mathcal{C}_{\varepsilon}(\mathcal{D},\delta)$, while additionally depending on $\xi$ from Slater’s condition (Assumption~\ref{ass_slater}). Such dependence is typical in constrained RL \cite{ding2020natural,bura2022dope}.

Based on this regret bound, we can determine the algorithm’s sample complexity required to ensure safe exploration and achieve $\varepsilon$-optimality, as formalized below.
\begin{corollary}\label{cor_sample}
Let Assumption~\ref{ass_slater} hold. For any $\varepsilon,\delta>0$, Algorithm~\ref{alg:test_stage} requires a sample complexity of \(\tilde{\mc O}\biggl(\xi^{-2}\varepsilon^{-2}(1-\gamma)^{-5}\mathcal{C}_{\xi\varepsilon(1-\gamma)^3}(\mathcal{D},\delta)\biggr)\) to ensure that, for any $\mc M\sim\mc D$, the output policy is $\mc O(\varepsilon)$-optimal and feasible for $\mc M$, and safe exploration is guaranteed with high probability.
\end{corollary}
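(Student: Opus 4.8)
The plan is to convert the regret bound of Theorem~\ref{thm:test_stage_final} into a sample-complexity statement by exploiting the linearity of the mixture-policy value and then tuning the accuracy parameter together with the horizon $K$. First I would observe that the output policy $\pi_{out}=\frac{1}{K}\sum_{k=1}^{K}\pi_k$ is itself a mixture policy, so by linearity of the value functionals $V_r^{\mc M}(\pi_{out})=\frac{1}{K}\sum_{k=1}^K V_r^{\mc M}(\pi_k)$ and $V_c^{\mc M}(\pi_{out})=\frac{1}{K}\sum_{k=1}^K V_c^{\mc M}(\pi_k)$. Since Part~1 of Theorem~\ref{thm:test_stage_final} guarantees that every $\pi_k$ is feasible, each term $V_r^{\mc M}(\pi^*_{\mathrm{test}})-V_r^{\mc M}(\pi_k)$ is nonnegative, so the positive part in \eqref{eq_regret} is inactive and the optimality gap of $\pi_{out}$ equals the averaged regret, $V_r^{\mc M}(\pi^*_{\mathrm{test}})-V_r^{\mc M}(\pi_{out})=\frac{1}{K}\mathrm{Reg}_r^{\mc M}(K)$. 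Feasibility of $\pi_{out}$ follows immediately from $V_c^{\mc M}(\pi_{out})=\frac{1}{K}\sum_k V_c^{\mc M}(\pi_k)\ge 0$, and safe exploration is inherited verbatim from Part~1.

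Next I would make the averaged regret $\mathcal{O}(\varepsilon)$ by dividing the bound of Part~2 by $K$, which leaves two terms: a vanishing term $\tilde{\mathcal{O}}(\sqrt{\mathcal{C}_{\varepsilon'}(\mathcal{D},\delta)/K}/((1-\gamma)^2\xi))$ and a horizon-independent term $\tilde{\mathcal{O}}(\varepsilon'/(\xi(1-\gamma)^3))$, where $\varepsilon'$ denotes the accuracy parameter actually supplied to the algorithm. The horizon-independent term forces the key reparametrization: to drive it below $\mathcal{O}(\varepsilon)$ I must set $\varepsilon'=\Theta(\xi\varepsilon(1-\gamma)^3)$, which is precisely why the covering number appears at scale $\xi\varepsilon(1-\gamma)^3$ in the final bound. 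With this choice, requiring the vanishing term to be $\mathcal{O}(\varepsilon)$ as well gives $\sqrt{\mathcal{C}_{\varepsilon'}/K}=\mathcal{O}(\varepsilon\xi(1-\gamma)^2)$, that is, $K=\tilde{\Theta}(\varepsilon^{-2}\xi^{-2}(1-\gamma)^{-4}\,\mathcal{C}_{\xi\varepsilon(1-\gamma)^3}(\mathcal{D},\delta))$.

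Finally I would translate iterations into samples. Each of the $K$ iterations of Algorithm~\ref{alg:test_stage} rolls out one trajectory of length $H=\tilde{\mathcal{O}}((1-\gamma)^{-1})$, so the total number of queries to $\mc M_{\mathrm{test}}$ is $KH=\tilde{\mathcal{O}}(\xi^{-2}\varepsilon^{-2}(1-\gamma)^{-5}\,\mathcal{C}_{\xi\varepsilon(1-\gamma)^3}(\mathcal{D},\delta))$, matching the claimed bound. Before invoking the theorem I would also check that the rescaled accuracy parameter still satisfies the hypothesis $(8L+18)\varepsilon'\le\xi$; substituting $\varepsilon'=\Theta(\xi\varepsilon(1-\gamma)^3)$ reduces this to $(8L+18)(1-\gamma)^3\varepsilon=\mathcal{O}(\varepsilon)\le 1$, using $L(1-\gamma)^3=(1-\gamma)^2+2\gamma(1-\gamma)=\mathcal{O}(1)$, so it holds for all sufficiently small $\varepsilon$.

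The main obstacle I anticipate is not a technical estimate but the bookkeeping around the reparametrization: one must recognize that the linear-in-$K$ error term of Theorem~\ref{thm:test_stage_final} cannot be made small by increasing $K$ and instead caps the achievable accuracy, so the algorithm's internal accuracy parameter must be shrunk to $\Theta(\xi\varepsilon(1-\gamma)^3)$. Propagating this rescaling consistently through the covering number, the Slater-margin condition, and the high-probability event of the theorem --- while confirming that $C_\varepsilon(\mathcal{D},\delta)$ remains finite at the smaller scale (Appendix~\ref{app_boundC}) --- is where the care is required.
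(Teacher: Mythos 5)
Your proposal is correct and follows the same route the paper implicitly takes (the paper gives no separate proof of this corollary): divide the regret bound of Theorem~\ref{thm:test_stage_final} by $K$, use linearity of the mixture-policy value so that the averaged regret is exactly the optimality gap of $\pi_{out}$ and feasibility is inherited, rescale the internal accuracy to $\varepsilon'=\Theta(\xi\varepsilon(1-\gamma)^3)$ to tame the linear-in-$K$ term, solve for $K$, and multiply by the rollout length $H=\tilde{\mathcal{O}}((1-\gamma)^{-1})$ to get the $(1-\gamma)^{-5}$ factor. Your checks of the Slater-margin hypothesis under the rescaled $\varepsilon'$ and of the sign of $V_r^{\mathcal{M}}(\pi^*)-V_r^{\mathcal{M}}(\pi_k)$ (so the positive part in the regret is inactive) are exactly the bookkeeping needed.
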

\begin{remark}
Without a training stage, the best-known safe exploration results have sample complexities 
{\small\(\tilde{\mathcal{O}}\!\left(\xi^{-2}\varepsilon^{-2}(1-\gamma)^{-6}|\mathcal{S}|^2|\mathcal{A}|\right)\)} for model-based approaches \cite{yu2025improved} and 
{\small\(\tilde{\mathcal{O}}\!\left(\min\{\xi^{-6},\varepsilon^{-6}\}(1-\gamma)^{-22}\right)\)} for model-free approaches~\cite{ni2023safe}. In contrast, Algorithm~\ref{alg:test_stage} leverages prior training knowledge and is more sample efficient when \(\mathcal{C}_{\varepsilon}(\mathcal{D}, \delta)\) is small relative to \(|\mathcal{S}|\) and \(|\mathcal{A}|\), e.g., for sub-Gaussian distributions, outperforming the best-known methods. When \(\mathcal{C}_{\varepsilon}(\mathcal{D}, \delta)\) is large, such as for a $d$-dimensional uniform distribution scaling as \(\varepsilon^{-d}\), training phase offers little benefit.

\end{remark}
The full proof of Theorem~\ref{thm:test_stage_final} is provided in Appendix~\ref{app_proof_thm}. In the remainder of this section, we provide its proof sketch.

\subsection{Proof Sketch of Theorem \ref{thm:test_stage_final}}

Algorithm~\ref{alg_train} considers a sequence of policies $\{\pi_l\}_{l=1}^{E} \subset \hat{\mc U}$, where $E \le |\hat{\mc U}|$. For each $\pi_l$, the deployed policy $\pi_{l,m}$ is a mixture of $\pi_l$ and the feasible policy $\pi_s$, with mixture weight $\alpha_{l,m}$ for $m \in [0, m(l)]$. We denote by $\tau_{l,m}$ the first episode in which $\pi_{l,m}$ is used. 

We first establish that all mixture policies remain feasible throughout testing, while the value estimates of $\pi_l$ remain optimistically close to the optimal value of the test CMDP.

\begin{lemma}\label{lemma_safe}
Assume the events in Lemma~\ref{lemma: pretraining bound} hold, and set the truncated horizon $H=\tilde{\mathcal{O}}\left((1-\gamma)^{-1}\right)$. Then, for any $\mathcal{M} \sim \mathcal{D}$, Algorithm~\ref{alg:test_stage} guarantees, with probability at least $1-\delta$, that:
\begin{enumerate}
    \item Safe exploration is ensured during testing.
    \item At every iteration, the value estimate satisfies
        \begin{align*}
            u_l \ge V_r^{\mathcal{M}}(\pi^*) - 4\varepsilon L\!\left(1 + \xi^{-1}(1-\gamma)^{-1}\right).
        \end{align*}
\end{enumerate}
\end{lemma}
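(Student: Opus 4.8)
The plan is to work on the high-probability event of Lemma~\ref{lemma: pretraining bound} and to add a single test-time concentration event on which, for every verification window (the episodes indexed from $k_0$ to $k$ under a fixed mixture $\pi_{l,m}$), the empirical averages of $R_j$ and $C_j$ lie within the statistical radius $\sqrt{2\ln(4K/\delta)/((k-k_0+1)(1-\gamma)^2)}$ of the true discounted values $V_r^{\mathcal{M}}(\pi_{l,m})$ and $V_c^{\mathcal{M}}(\pi_{l,m})$. I would obtain this by a Hoeffding/Azuma bound per window, absorbing the $\gamma^H$ truncation bias into an $O(\varepsilon)$ term through the choice $H=\tilde{\mathcal{O}}((1-\gamma)^{-1})$, and then take a union bound over all $l$, $m$, and $k\le K$ to make the total failure probability at most $\delta$. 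Both claims are then deterministic consequences on this event.

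For the optimism claim (Part~2), Lemma~\ref{lemma: pretraining bound} furnishes a tuple $(\pi^{\dagger},u^{\dagger},\dots)\in\hat{\mc U}$ with $V_r^{\mathcal{M}}(\pi^{\dagger})\ge V_r^{\mathcal{M}}(\pi^{*})-4\varepsilon L(1+\xi^{-1}(1-\gamma)^{-1})$ and $|u^{\dagger}-V_r^{\mathcal{M}}(\pi^{\dagger})|\le\varepsilon L$. I would show that $\pi^{\dagger}$ is never deleted from $\hat{\mc U}$: because its stored reward and constraint values are $\varepsilon L$-accurate for $\mathcal{M}$, the predicted mixture values stay within $\varepsilon L$ of the true mixture values, so whenever $\pi^{\dagger}$ is the selected policy the deviation in Line~7 remains below the threshold $\sqrt{2\ln(4K/\delta)/((k-k_0+1)(1-\gamma)^2)}+\varepsilon(L+1)$ and no elimination is triggered. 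Since Lines~3 and~10 always pick the surviving policy of largest predicted reward, $u_l\ge u^{\dagger}$ at every iteration, which yields the stated bound on $u_l$.

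For safe exploration (Part~1), I would induct on the mixture index $m$, using that a mixture's value is the convex combination $V_c^{\mathcal{M}}(\pi_{l,m})=\alpha_{l,m}V_c^{\mathcal{M}}(\pi_l)+(1-\alpha_{l,m})V_c^{\mathcal{M}}(\pi_s)$. The base case $m=0$ is immediate since $\pi_{l,0}=\pi_s$ and $V_c^{\mathcal{M}}(\pi_s)\ge\xi-\varepsilon L>0$ (the last inequality from $(8L+18)\varepsilon\le\xi$). For the first step $m=0\to1$ I would use only the universal lower bound $V_c^{\mathcal{M}}(\pi_l)\ge-(1-\gamma)^{-1}$ (as $c\in[-1,1]$) together with $V_c^{\mathcal{M}}(\pi_s)\ge v_{l,s}-\varepsilon L$; substituting the closed-form $\alpha_{l,1}$ reduces feasibility to an elementary inequality whose numerator simplifies to a positive multiple of $v_{l,s}+4\varepsilon>0$. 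For $m\ge1$ the mechanism changes: passing the verification in Line~7, with the sample count enforced in Line~11, certifies that $|V_c^{\mathcal{M}}(\pi_{l,m})-v_{l,m}|$ is small, and subtracting the known $(1-\alpha_{l,m})$-component and dividing by $\alpha_{l,m}$ turns this into a lower confidence bound on $V_c^{\mathcal{M}}(\pi_l)$ that sharpens as $m$ grows. The weight $\alpha_{l,m+1}$, derived in Appendix~\ref{app_test}, is engineered so that this refined bound keeps $\pi_{l,m+1}$ feasible, and the recursion drives $\alpha_{l,m}\to1-O(\varepsilon)$ at rate $C_l$.

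The hard part will be the inductive step for $m\ge1$: one must correctly propagate the three error sources—the within-window statistical radius, the $\gamma^H$ truncation bias, and the $\varepsilon L$ training error—through the inversion that converts the verified constraint value of the mixture into a lower bound on $V_c^{\mathcal{M}}(\pi_l)$, and then check that the specific closed-form weight, with its constants $2\varepsilon(L+2)$, $(4L+9)\varepsilon$, and base $C_l=\tfrac{2}{3}+\tfrac{4L+9}{3v_{l,s}}$, simultaneously preserves $V_c^{\mathcal{M}}(\pi_{l,m+1})\ge0$ and increases the mass placed on $\pi_l$. This is careful bookkeeping rather than a single conceptual leap, and I would defer the algebra to Appendix~\ref{app_test}.
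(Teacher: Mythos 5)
Your proposal follows essentially the same route as the paper's proof: a per-window concentration event with a union bound over $K$, the non-elimination argument combined with the argmax selection to establish optimism of $u_l$, and induction on the mixture index $m$ for feasibility, with the base case handled via the universal bound $V_c^{\mathcal{M}}(\pi_l)\ge -(1-\gamma)^{-1}$ and the inductive step via the lower confidence bound on the constraint value certified by the verification condition in Lines~7 and~11. The only cosmetic difference is that the paper bounds $\alpha_{l,m}\bigl(V_c^{\mathcal{M}}(\pi_l)-V_c^{\mathcal{M}}(\pi_s)\bigr)$ and works with the ratio $\alpha_{l,m+1}/\alpha_{l,m}$ rather than explicitly dividing by $\alpha_{l,m}$ to isolate $V_c^{\mathcal{M}}(\pi_l)$, which is the same algebra arranged differently.
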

The proof of Lemma~\ref{lemma_safe} relies on three key ingredients: (i) concentration bounds for the mixture policy, which control the deviation between its empirical and true reward and constraint values; (ii) an inductive argument over iterations showing that each updated mixture policy remains feasible, starting from the feasible policy $\pi_s$; and (iii) the elimination rule in Algorithm~\ref{alg:test_stage}, which, together with the concentration bounds, guarantees that the near-optimal policy is never removed from the policy-value set. The full proof  is provided in Appendix~\ref{app_test}.

Next, we bound the test-time reward regret by decomposing it into four terms:
\begin{align*} 
&\text{Reg}_r^{\mc M}(K)
=\sum_{l=1}^{E}\sum_{m=0}^{m(l)}\sum_{\tau=\tau_{l,m}}^{\tau_{l,m+1}-1}\biggl(\underbrace{V^{\mc M}_{r}(\pi^*) - u_l}_{(i)}\\
&+\underbrace{u_l - (\alpha_{l,m} u_l+(1-\alpha_{l,m} u_{l,s}))}_{(ii)}\\
&+\underbrace{(\alpha_{l,m} u_l+(1-\alpha_{l,m} u_{l,s})) - R_{\tau}}_{(iii)}+\underbrace{R_\tau-  V^{\mc M}_{r}(\pi_{l,m})}_{(iv)}\biggr).
\end{align*}
\textbf{Term (i):} By Lemma~\ref{lemma_safe}, the difference between the optimal value $V^{\mc M}_{r}(\pi^*)$ and $v_l$ is upper bounded. 

\textbf{Term (ii):} This term arises from deploying the mixture of $\pi_l$ and $\pi_s$, which ensures safe exploration. It scales as $\mathcal{O}(1-\alpha_{l,m})$, and since $1-\alpha_{l,m}$ decreases exponentially toward $\varepsilon$, the term remains small.

\textbf{Term (iii):} This term captures the gap between the mixture policy’s expected value from the training phase and its empirical value observed during testing. The elimination rule in line 7 of Algorithm~\ref{alg:test_stage} discards policies with large deviations, ensuring that this term remains bounded.

\textbf{Term (iv):} This term accounts for the estimation error between the empirical and true values of the executed mixture policy. Standard concentration results guarantee that this error remains bounded with high probability.

Summing the four terms gives the test-time reward regret bound, with detailed formulas provided in Appendix~\ref{app_proof_thm}.

\section{Lower Bound}
We continue by establishing a novel lower bound on the test-time sample complexity.

\begin{theorem}\label{thm_lowerbound}
Assume that every CMDP $\mathcal{M} \in \mathcal{M}_{\mathrm{all}}$ has access to a generative model, and that Assumption~\ref{ass_slater} holds. There exists a distribution $\mathcal{D}$ over $\mathcal{M}_{\mathrm{all}}$ and constants $\gamma_0, \varepsilon_0, \delta_0 \in (0,1)$ such that, for any discount factor $\gamma \in (\gamma_0,1)$, $\varepsilon \in (0,\varepsilon_0)$, and $\delta \in (0,\delta_0)$, any algorithm requires at least {\small\(\tilde{\Omega}\biggl(
\xi^{-2}\varepsilon^{-2}(1-\gamma)^{-5}\mathcal{C}_{\xi\varepsilon(1-\gamma)^3}(\mathcal{D},\delta)
\biggr)\)} samples from the generative model to guarantee that, for any $\mathcal{M} \sim \mathcal{D}$, the returned policy is both $\varepsilon$-optimal and feasible with probability at least $1-\delta$.
\end{theorem}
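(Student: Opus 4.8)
The plan is to prove a matching lower bound by exhibiting a single distribution $\mathcal{D}$ for which any algorithm is forced to spend $\tilde{\Omega}(\xi^{-2}\varepsilon^{-2}(1-\gamma)^{-5})$ generative-model samples at each of $\Theta(\mathcal{C}_{\xi\varepsilon(1-\gamma)^3}(\mathcal{D},\delta))$ essentially independent ``locations.'' First I would \textbf{construct the hard family}. Take $\mathcal{D}$ uniform over $\mathcal{C}$ CMDPs $\{\mathcal{M}_j\}_{j\in[\mathcal{C}]}$ that share a common skeleton but place a single ``active'' constrained gadget at a task-specific location $j$; every other location is an identical ``null'' copy across all tasks. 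The gadgets are tuned so that any two tasks are separated by strictly more than $\xi\varepsilon(1-\gamma)^3$ in the CMDP distance $d(\cdot,\cdot)$, which by Definition~\ref{def_D} forces $\mathcal{C}_{\xi\varepsilon(1-\gamma)^3}(\mathcal{D},\delta)=\Theta(\mathcal{C})$, while Assumption~\ref{ass_slater} holds with margin exactly $\xi$. Because the null gadget is identical across tasks, observing it reveals nothing about the drawn index $j^\star$, so identifying the active location has the structure of best-arm identification over $\mathcal{C}$ arms; this is precisely what upgrades the naive $\log\mathcal{C}$ hypothesis-testing cost to the required linear $\mathcal{C}$ factor, mirroring the unconstrained meta-RL construction of \cite{ye2023power}.

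Next I would \textbf{prove the per-location lower bound}: resolving one active gadget to the precision needed for an $\varepsilon$-optimal \emph{and feasible} output costs $\tilde{\Omega}(\xi^{-2}\varepsilon^{-2}(1-\gamma)^{-5})$ samples. The gadget is a constrained sub-CMDP whose optimal feasible policy is a mixture $\pi_{\alpha^\star}=(1-\alpha^\star)\pi_{\mathrm{safe}}+\alpha^\star\pi_{\mathrm{risky}}$ sitting exactly on the constraint boundary, with $\alpha^\star=\xi/(\xi+\beta)$ and $\beta:=-V_c(\pi_{\mathrm{risky}})>0$. Since the reward gap between $\pi_{\mathrm{risky}}$ and $\pi_{\mathrm{safe}}$ is $\Theta((1-\gamma)^{-1})$, $\varepsilon$-optimality forces $|\alpha-\alpha^\star|\le\Theta(\varepsilon(1-\gamma))$, and the boundary sensitivity $|\mathrm{d}\alpha^\star/\mathrm{d}\beta|=\Theta(\xi^{-1})$ then forces the learner to pin down the constraint value $\beta$ to additive precision $\eta=\Theta(\xi\varepsilon(1-\gamma))$. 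I would embed $\beta$ in a horizon-length chain so that estimating it inherits the standard generative-model policy-evaluation hardness $\tilde{\Omega}((1-\gamma)^{-3}\eta^{-2})$, giving the per-location cost $(1-\gamma)^{-3}(\xi\varepsilon(1-\gamma))^{-2}=\xi^{-2}\varepsilon^{-2}(1-\gamma)^{-5}$. Formally this is a two-point (Le Cam) argument: the two gadget hypotheses induce the same feasible $\alpha^\star$ only if $\beta$ is resolved, and their sample laws have KL divergence at most $n$ times the per-step contribution, so reliable discrimination demands $n\gtrsim\xi^{-2}\varepsilon^{-2}(1-\gamma)^{-5}$.

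Finally I would \textbf{aggregate across locations}. Conditioning on the uniform draw of $j^\star$ and using a change-of-measure, Assouad-type decomposition, I would argue that samples placed at any location $i\ne j^\star$ are drawn from the null gadget and hence carry no signal about whether $i=j^\star$; to output a feasible $\varepsilon$-optimal policy with probability $1-\delta$, the learner must in expectation resolve $\Theta(\mathcal{C})$ locations, each at the per-location cost above. Tensorizing the single-location Le Cam bound over the $\mathcal{C}$ arms, and absorbing the $1-\delta$ coverage slack of Definition~\ref{def_D} into the constants (valid for $\delta<\delta_0$), yields the total $\tilde{\Omega}(\mathcal{C}_{\xi\varepsilon(1-\gamma)^3}(\mathcal{D},\delta)\,\xi^{-2}\varepsilon^{-2}(1-\gamma)^{-5})$, matching Corollary~\ref{cor_sample}.

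The \textbf{main obstacle} is the gadget calibration, which must couple all three scales at once: the perturbation magnitude has to (i) keep tasks separated at the covering resolution $\xi\varepsilon(1-\gamma)^3$, (ii) realize a Slater margin of exactly $\xi$ on the value $\beta$, and (iii) make $\beta$ genuinely require value estimation to precision $\xi\varepsilon(1-\gamma)$ with the full $(1-\gamma)^{-3}$ horizon hardness. A naive single-transition gadget fails property (iii): a rare-transition perturbation can be distinguished with the $\chi^2$ boost $1/p$, leaking a factor of $\xi^{-1}(1-\gamma)^{-1}$ and producing only $\xi^{-1}\varepsilon^{-2}(1-\gamma)^{-4}$; the fix is to spread $\beta$ over a horizon-length chain whose aggregate constraint value is simultaneously boundary-tight and evaluation-hard. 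Ensuring that \emph{feasibility} (not merely relaxed feasibility) forces the conservative, higher-precision estimate, while preserving the best-arm structure so the $\mathcal{C}$ factor stays linear rather than logarithmic, is where the delicate bookkeeping concentrates.
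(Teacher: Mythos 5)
Your proposal follows essentially the same route as the paper: a uniform distribution over CMDPs that differ only at task-specific gadget locations separated by more than the radius $\xi\varepsilon(1-\gamma)^3$ (so the covering number equals the number of tasks), a per-location cost of $\tilde{\Omega}(\xi^{-2}\varepsilon^{-2}(1-\gamma)^{-5})$ driven by the need to resolve a constraint value on the feasibility boundary to precision $\Theta(\xi\varepsilon(1-\gamma))$ through a horizon-length self-loop, and linear aggregation over locations via a change-of-measure/union-bound argument. The only difference is that the paper imports the gadget and the per-location bound wholesale from the hard instance of \cite{vaswani2022near} (their Theorem~8 and Lemma~20) rather than re-deriving them via Le Cam as you sketch, but the underlying mechanism is identical.
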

The full proof is in Appendix~\ref{app_lowerbound}. Extending the hard-instance construction of \cite{vaswani2022near} from constrained RL to constrained meta RL, we consider a set of CMDPs $\mc M_{\text{all}}$ that differ in transition dynamics. The agent must identify each CMDP in $\mc M_{\text{all}}$ and learn its optimal feasible policy, which leads to a lower bound on the sample complexity. This lower bound provides two key insights. First, it matches the upper bound in Corollary~\ref{cor_sample}, showing that the sample complexity of Algorithm~\ref{alg:test_stage} is near-optimal up to logarithmic factors. Second, compared with lower bounds for meta RL that do not impose feasibility requirements on the optimal policy~\cite{ye2023power}, our bound depends on $\xi$, which measures the safety margin of the initial policy. As $\xi$ decreases, the problem becomes harder in terms of sample complexity.

\section{Computational Experiment}
We empirically evaluate our algorithms in a $7 \times 7$ gridworld adapted from \cite{sutton2018reinforcement}. We compare against three baselines that ensure safe exploration during testing: \emph{Safe Meta-RL}~\cite{xuefficient} from the constrained meta RL setting, transferring knowledge from training to testing; and two constrained RL approaches without a training phase, namely DOPE+~\cite{yu2025improved} for model-based and LB-SGD~\cite{ni2023safe} for model-free.

For each gridworld CMDP $\mc M_i$, the objective is to reach a rewarded terminal cell (green rectangle) while limiting the cumulative time spent in unsafe regions (red rectangles). The environment consists of four actions: \emph{up}, \emph{right}, \emph{down}, and \emph{left}. The transition dynamics are as follows: with probability $1-i$, the intended action is executed, and with probability $\tfrac{i}{4}$, a random action is taken, where $i$ denotes the environment noise. The constraints are defined as follows: entering red rectangles in the third and fifth columns incurs a cost of $10$. Upon reaching the goal, the agent remains there and receives a reward of $10$ at each step. We set $\gamma=0.9$ and solve the following constrained problem:
\[
\max_{\pi} V_r^{\mc M_i}(\pi)
\quad \text{s.t.} \quad
V_c^{\mc M_i}(\pi)\le 1.5 .
\]
Here, we consider the family of CMDPs $\mc M_{\mathrm{all}} = \{\mc M_i\}_{i \in [0,0.5]}$, where each CMDPs differ only in their noise level $i$. The parameter $i$ is sampled from a Gaussian distribution $\mathcal{N}(0.3, 0.03)$, truncated to $[0,0.5]$. When the noise level is low, the optimal policy takes a shorter route directly toward the goal via the fourth column. As the noise increases, this shortcut becomes riskier due to a higher probability of entering unsafe regions, and the optimal policy instead follows a longer but safer path along the boundary of the gridworld. Figure~\ref{fig_env} shows the gridworld environment and illustrates the optimal policies along with their corresponding reward values for different noise levels.
\begin{figure}[t]
    \centering
    \includegraphics[width=1\linewidth]{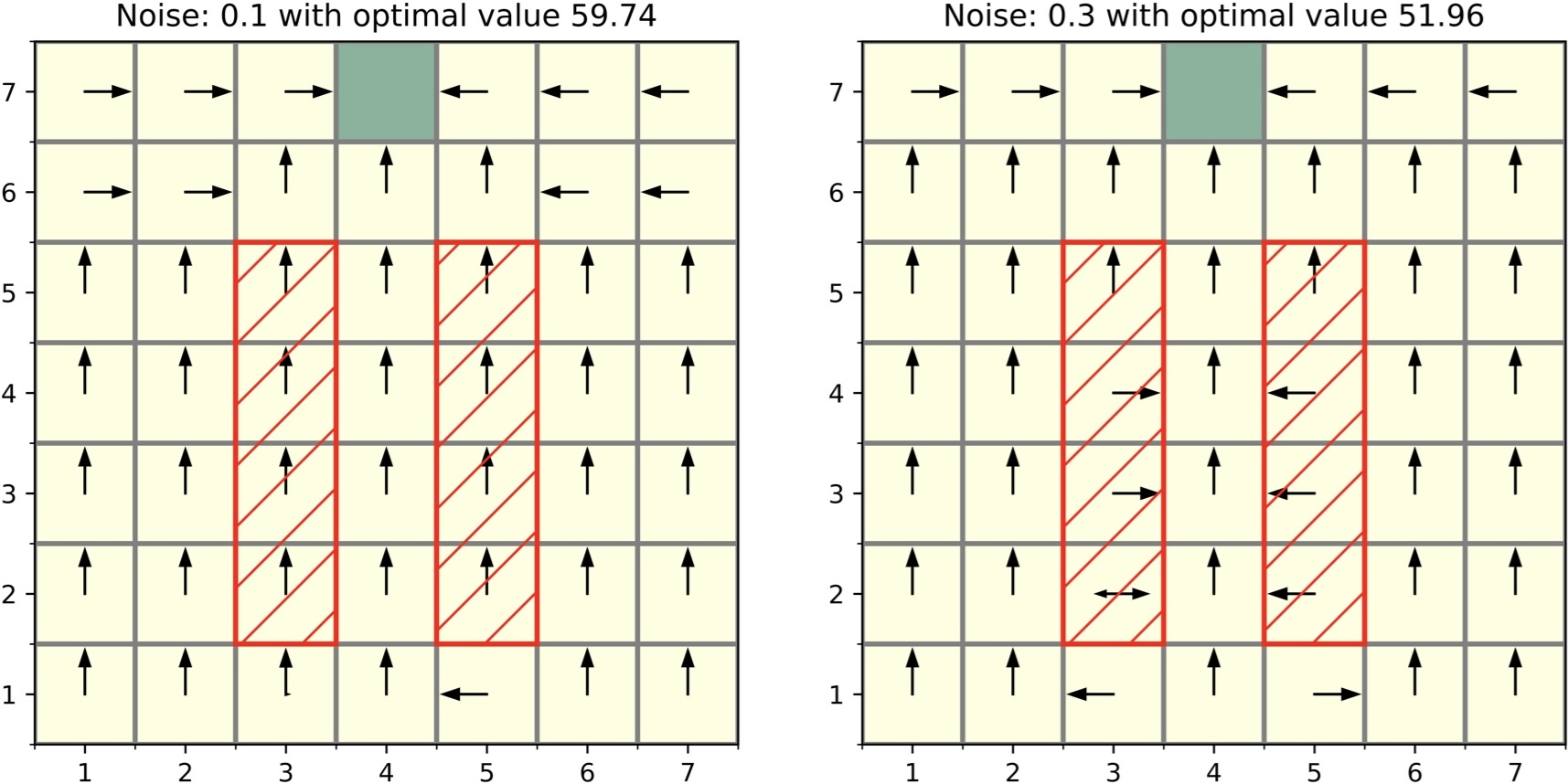}
        \caption{Optimal policies and values for noise levels $i=0.1,0.3$.}
        \label{fig_env}
\end{figure}
During the training phase, we use Algorithm~\ref{alg_train} to learn a feasible policy \(\pi_s\) and the policy-value set \(\hat{\mc U}\). The specific oracles used in Algorithm~\ref{alg_train} are detailed in Appendix~\ref{app_exp}. We then evaluate Algorithm~\ref{alg:test_stage} and all three baselines on 10 test CMDPs, where each CMDP is independently sampled from the distribution. For fair comparison, all algorithms start from the same feasible policy $\pi_s$ learned from training. As shown in Figure~\ref{fig:perf}, all methods ensure safe exploration during testing. Our algorithm achieves roughly 50\% lower test-time reward regret than the best-performing baseline. These empirical results align with our theoretical guarantees and demonstrate the benefit of leveraging prior knowledge collected during training.
\begin{figure}[t]
    \centering
    \includegraphics[width=1\linewidth]{compare_safe_LB_SGD_dope_plus_vertical.png}
        \caption{Reward regret and constraint values of our algorithm, Safe Meta-RL, DOPE+, and LB-SGD, averaged over 10 independent runs, where in each run a CMDP \(\mc M_i\) is sampled. All plots share the same x-axis representing the iteration number.}
        \label{fig:perf}
\end{figure}

In addition to the gridworld experiments, we further evaluate our method on continuous locomotion tasks in Gym environments; detailed results are provided in Appendix~\ref{app:moreexp}.
\section{Discussion and Conclusion}
In this paper, we proposed an algorithm for constrained meta RL with provable test-time safety. We established its convergence to an optimal policy and characterized its sample complexity, with a matching problem-dependent lower bound. Empirically, our approach outperforms existing constrained RL and safe meta RL methods in reward performance across gridworld navigation tasks.

Our analysis relies on the linearity of expected reward and constraint values under episodic mixture policies, and thus does not directly extend to nonlinear constraints such as CVaR. Nevertheless, these can be reformulated as expected cumulative constraints via state augmentation \cite{zhao2024provably}, under which our results apply. Future work further includes studying constrained meta RL under distribution shift and improving sample complexity by exploiting structural properties of the CMDP set \cite{mutti2024test}.

\section*{Acknowledgment}
This work was supported by the Swiss National Science Foundation under Grant 207984. 
\section*{Impact Statement}
This paper presents work whose goal is to advance the field of machine learning. There are many potential societal consequences of our work, none which we feel must be specifically highlighted here.

\bibliography{example_paper}

@inproceedings{mutti2024test,
  title={Test-Time Regret Minimization in Meta Reinforcement Learning},
  author={Mutti, Mirco and Tamar, Aviv},
  booktitle={International Conference on Machine Learning},
  pages={37016--37040},
  year={2024},
  organization={PMLR}
}

@inproceedings{
chen2021understanding,
title={Understanding Domain Randomization for Sim-to-real Transfer},
author={Xiaoyu Chen and Jiachen Hu and Chi Jin and Lihong Li and Liwei Wang},
booktitle={International Conference on Learning Representations},
year={2022}
}

@inproceedings{ye2023power,
  title={On the power of pre-training for generalization in {RL}: Provable benefits and hardness},
  author={Ye, Haotian and Chen, Xiaoyu and Wang, Liwei and Du, Simon Shaolei},
  booktitle={International Conference on Machine Learning},
  pages={39770--39800},
  year={2023},
  organization={PMLR}
}

@article{as2025spidr,
  title={SPiDR: A simple approach for zero-shot safety in sim-to-real transfer},
  author={As, Yarden and Qu, Chengrui Ray and Unger, Benjamin and Kang, Dongho and van der Hart, Max and Shi, Laixi and Coros, Stelian and Wierman, Adam and Krause, Andreas},
  journal={Advances in Neural Information Processing Systems},
  volume={38},
  pages={90882--90924},
  year={2026}
}

@article{efroni2020exploration,
  title={Exploration-exploitation in constrained {MDP}s},
  author={Efroni, Yonathan and Mannor, Shie and Pirotta, Matteo},
  journal={arXiv preprint arXiv:2003.02189},
  year={2020}
}

@article{yu2025improved,
    title={Improved Regret Bound for Safe Reinforcement Learning via Tighter Cost Pessimism and Reward Optimism},
    author={Yu, Kihyun and Lee, Duksang and Overman, William and Lee, Dabeen},
    journal={Reinforcement Learning Journal},
    volume={6},
    pages={493--546},
    year={2025}
}

@inproceedings{ni2023safe,
  title={A Safe Exploration Approach to Constrained {M}arkov Decision Processes},
  author={Ni, Tingting and Kamgarpour, Maryam},
  booktitle={International Conference on Artificial Intelligence and Statistics},
  pages={3592--3600},
  year={2025},
  organization={PMLR}
}

@article{bura2022dope,
  title={DOPE: Doubly optimistic and pessimistic exploration for safe reinforcement learning},
  author={Bura, Archana and HasanzadeZonuzy, Aria and Kalathil, Dileep and Shakkottai, Srinivas and Chamberland, Jean-Francois},
  journal={Advances in neural information processing systems},
  volume={35},
  pages={1047--1059},
  year={2022}
}

@article{liu2021efficient,
  title={An efficient pessimistic-optimistic algorithm for stochastic linear bandits with general constraints},
  author={Liu, Xin and Li, Bin and Shi, Pengyi and Ying, Lei},
  journal={Advances in Neural Information Processing Systems},
  volume={34},
  pages={24075--24086},
  year={2021}
}

@book{altman1999constrained,
  title={Constrained Markov decision processes},
  author={Altman, Eitan},
  year={2021},
  publisher={Routledge}
}

@article{hernandez2000constrained,
  title={Constrained {M}arkov control processes in {B}orel spaces: the discounted case},
  author={Hern{\'a}ndez-Lerma, On{\'e}simo and Gonz{\'a}lez-Hern{\'a}ndez, Juan},
  journal={Mathematical Methods of Operations Research},
  volume={52},
  pages={271--285},
  year={2000},
  publisher={Springer}
}

@inproceedings{
xuefficient,
title={Efficient Safe Meta-Reinforcement Learning: Provable Near-Optimality and Anytime Safety},
author={Siyuan Xu and Minghui Zhu},
booktitle={The Thirty-ninth Annual Conference on Neural Information Processing Systems},
year={2026}
}

@inproceedings{
khattar2023cmdp,
title={A {CMDP}-within-online framework for Meta-Safe Reinforcement Learning},
author={Vanshaj Khattar and Yuhao Ding and Bilgehan Sel and Javad Lavaei and Ming Jin},
booktitle={The Eleventh International Conference on Learning Representations },
year={2023}
}

@inproceedings{cho2024constrained,
  title={Constrained meta-reinforcement learning for adaptable safety guarantee with differentiable convex programming},
  author={Cho, Minjae and Sun, Chuangchuang},
  booktitle={Proceedings of the AAAI Conference on Artificial Intelligence},
  volume={38},
  number={19},
  pages={20975--20983},
  year={2024}
}

@article{vaswani2022near,
  title={Near-optimal sample complexity bounds for constrained {MDP}s},
  author={Vaswani, Sharan and Yang, Lin and Szepesv{\'a}ri, Csaba},
  journal={Advances in Neural Information Processing Systems},
  volume={35},
  pages={3110--3122},
  year={2022}
}

@article{feng2019does,
  title={Does knowledge transfer always help to learn a better policy},
  author={Feng, Fei and Yin, Wotao and Yang, Lin F},
  journal={arXiv preprint arXiv:1912.02986},
  year={2019}
}

@inproceedings{achiam2017constrained,
  title={Constrained policy optimization},
  author={Achiam, Joshua and Held, David and Tamar, Aviv and Abbeel, Pieter},
  booktitle={International conference on machine learning},
  pages={22--31},
  year={2017},
  organization={PMLR}
}

@article{ding2020natural,
  title={Natural policy gradient primal-dual method for constrained {M}arkov decision processes},
  author={Ding, Dongsheng and Zhang, Kaiqing and Basar, Tamer and Jovanovic, Mihailo},
  journal={Advances in Neural Information Processing Systems},
  volume={33},
  pages={8378--8390},
  year={2020}
}

@inproceedings{abbeel2005exploration,
  title={Exploration and apprenticeship learning in reinforcement learning},
  author={Abbeel, Pieter and Ng, Andrew Y},
  booktitle={Proceedings of the 22nd international conference on Machine learning},
  pages={1--8},
  year={2005}
}

@book{sutton2018reinforcement,
  title={Reinforcement learning: An introduction},
  author={Sutton, Richard S and Barto, Andrew G},
  year={2018},
  publisher={MIT press}
}

@inproceedings{xu2021crpo,
  title={{CRPO}: A new approach for safe reinforcement learning with convergence guarantee},
  author={Xu, Tengyu and Liang, Yingbin and Lan, Guanghui},
  booktitle={International Conference on Machine Learning},
  pages={11480--11491},
  year={2021},
  organization={PMLR}
}

@article{mnih2013playing,
  title={Playing atari with deep reinforcement learning},
  author={Mnih, Volodymyr and Kavukcuoglu, Koray and Silver, David and Graves, Alex and Antonoglou, Ioannis and Wierstra, Daan and Riedmiller, Martin},
  journal={arXiv preprint arXiv:1312.5602},
  year={2013}
}

@article{silver2016mastering,
  title={Mastering the game of Go with deep neural networks and tree search},
  author={Silver, David and Huang, Aja and Maddison, Chris J and Guez, Arthur and Sifre, Laurent and Van Den Driessche, George and Schrittwieser, Julian and Antonoglou, Ioannis and Panneershelvam, Veda and Lanctot, Marc and others},
  journal={nature},
  volume={529},
  number={7587},
  pages={484--489},
  year={2016},
  publisher={Nature Publishing Group}
}

@article{ouyang2022training,
  title={Training language models to follow instructions with human feedback},
  author={Ouyang, Long and Wu, Jeffrey and Jiang, Xu and Almeida, Diogo and Wainwright, Carroll and Mishkin, Pamela and Zhang, Chong and Agarwal, Sandhini and Slama, Katarina and Ray, Alex and others},
  journal={Advances in neural information processing systems},
  volume={35},
  pages={27730--27744},
  year={2022}
}

@inproceedings{song2021autonomous,
  title={Autonomous drone racing with deep reinforcement learning},
  author={Song, Yunlong and Steinweg, Mats and Kaufmann, Elia and Scaramuzza, Davide},
  booktitle={2021 IEEE/RSJ International Conference on Intelligent Robots and Systems (IROS)},
  pages={1205--1212},
  year={2021},
  organization={IEEE}
}

@inproceedings{koller2019learning,
  title={Learning-based model predictive control for safe exploration},
  author={Koller, Torsten and Berkenkamp, Felix and Turchetta, Matteo and Krause, Andreas},
  booktitle={2018 IEEE conference on decision and control (CDC)},
  pages={6059--6066},
  year={2018},
  organization={IEEE}
}

@article{berkenkamp2021bayesian,
  title={Bayesian optimization with safety constraints: safe and automatic parameter tuning in robotics},
  author={Berkenkamp, Felix and Krause, Andreas and Schoellig, Angela P},
  journal={Machine Learning},
  pages={1--35},
  year={2021},
  publisher={Springer}
}

@inproceedings{wachi2018safe,
  title={Safe exploration and optimization of constrained {MDP}s using gaussian processes},
  author={Wachi, Akifumi and Sui, Yanan and Yue, Yisong and Ono, Masahiro},
  booktitle={Proceedings of the AAAI Conference on Artificial Intelligence},
  year={2018}
}

@article{chow2019lyapunov,
  title={A lyapunov-based approach to safe reinforcement learning},
  author={Chow, Yinlam and Nachum, Ofir and Duenez-Guzman, Edgar and Ghavamzadeh, Mohammad},
  journal={Advances in neural information processing systems},
  volume={31},
  year={2018}
}

@inproceedings{milosevic2024embedding,
  title={Embedding Safety into RL: A New Take on Trust Region Methods},
  author={Milosevic, Nikola and M{\"u}ller, Johannes and Scherf, Nico},
  booktitle={International Conference on Machine Learning},
  pages={44199--44224},
  year={2025},
  organization={PMLR}
}

@article{koppejan2011neuroevolutionary,
  title={Neuroevolutionary reinforcement learning for generalized control of simulated helicopters},
  author={Koppejan, Rogier and Whiteson, Shimon},
  journal={Evolutionary intelligence},
  volume={4},
  number={4},
  pages={219--241},
  year={2011},
  publisher={Springer}
}

@article{ghavamzadeh2015bayesian,
  title={Bayesian reinforcement learning: A survey},
  author={Ghavamzadeh, Mohammad and Mannor, Shie and Pineau, Joelle and Tamar, Aviv and others},
  journal={Foundations and Trends{\textregistered} in Machine Learning},
  volume={8},
  number={5-6},
  pages={359--483},
  year={2015},
  publisher={Now Publishers, Inc.}
}

@inproceedings{
makoviychuk2021isaac,
title={Isaac Gym: High Performance {GPU} Based Physics Simulation For Robot Learning},
author={Viktor Makoviychuk and Lukasz Wawrzyniak and Yunrong Guo and Michelle Lu and Kier Storey and Miles Macklin and David Hoeller and Nikita Rudin and Arthur Allshire and Ankur Handa and Gavriel State},
booktitle={Thirty-fifth Conference on Neural Information Processing Systems Datasets and Benchmarks Track (Round 2)},
year={2021}
}

@article{visentin2014university,
  title={The university of Virginia/Padova type 1 diabetes simulator matches the glucose traces of a clinical trial},
  author={Visentin, Roberto and Dalla Man, Chiara and Kovatchev, Boris and Cobelli, Claudio},
  journal={Diabetes technology \& therapeutics},
  volume={16},
  number={7},
  pages={428--434},
  year={2014},
  publisher={Mary Ann Liebert, Inc. 140 Huguenot Street, 3rd Floor New Rochelle, NY 10801 USA}
}

@inproceedings{kazemkhani2024gpudrive,
  title={Gpudrive: Data-driven, multi-agent driving simulation at 1 million fps},
  author={Kazemkhani, Saman and Pandya, Aarav and Cornelisse, Daphne and Shacklett, Brennan and Vinitsky, Eugene},
  booktitle={International Conference on Learning Representations},
  volume={2025},
  pages={19320--19336},
  year={2025}
}

@article{kyrarini2021survey,
  title={A survey of robots in healthcare},
  author={Kyrarini, Maria and Lygerakis, Fotios and Rajavenkatanarayanan, Akilesh and Sevastopoulos, Christos and Nambiappan, Harish Ram and Chaitanya, Kodur Krishna and Babu, Ashwin Ramesh and Mathew, Joanne and Makedon, Fillia},
  journal={Technologies},
  volume={9},
  number={1},
  pages={8},
  year={2021},
  publisher={MDPI}
}

@article{he2021nearly,
  title={Nearly minimax optimal reinforcement learning for discounted MDPs},
  author={He, Jiafan and Zhou, Dongruo and Gu, Quanquan},
  journal={Advances in Neural Information Processing Systems},
  volume={34},
  pages={22288--22300},
  year={2021}
}

@article{andrychowicz2020learning,
  title={Learning dexterous in-hand manipulation},
  author={Andrychowicz, OpenAI: Marcin and Baker, Bowen and Chociej, Maciek and Jozefowicz, Rafal and McGrew, Bob and Pachocki, Jakub and Petron, Arthur and Plappert, Matthias and Powell, Glenn and Ray, Alex and others},
  journal={The International Journal of Robotics Research},
  volume={39},
  number={1},
  pages={3--20},
  year={2020},
  publisher={SAGE Publications Sage UK: London, England}
}

@inproceedings{finn2017model,
  title={Model-agnostic meta-learning for fast adaptation of deep networks},
  author={Finn, Chelsea and Abbeel, Pieter and Levine, Sergey},
  booktitle={International conference on machine learning},
  pages={1126--1135},
  year={2017},
  organization={PMLR}
}

@article{beck2025tutorial,
  title={A tutorial on meta-reinforcement learning},
  author={Beck, Jacob and Vuorio, Risto and Zheran Liu, Evan and Xiong, Zheng and Zintgraf, Luisa and Finn, Chelsea and Whiteson, Shimon},
  journal={Foundations and Trends in Machine Learning},
  volume={18},
  number={2-3},
  pages={224--384},
  year={2025},
  publisher={Emerald Publishing Limited}
}

@inproceedings{rakelly2019efficient,
  title={Efficient off-policy meta-reinforcement learning via probabilistic context variables},
  author={Rakelly, Kate and Zhou, Aurick and Finn, Chelsea and Levine, Sergey and Quillen, Deirdre},
  booktitle={International conference on machine learning},
  pages={5331--5340},
  year={2019},
  organization={PMLR}
}

@article{tian2024confident,
  title={Confident Natural Policy Gradient for Local Planning in $ q_{\pi}$-realizable Constrained MDPs},
  author={Tian, Tian and Yang, Lin and Szepesv{\'a}ri, Csaba},
  journal={Advances in Neural Information Processing Systems},
  volume={37},
  pages={76139--76176},
  year={2024}
}

@book{bogachev2007measure,
  title={Measure theory},
  author={Bogachev, Vladimir I},
  year={2007},
  publisher={Springer}
}

@book{puterman2014markov,
  title={{M}arkov decision processes: discrete stochastic dynamic programming},
  author={Puterman, Martin L},
  year={2014},
  publisher={John Wiley \& Sons}
}

@article{agarwal2021theory,
  title={On the theory of policy gradient methods: Optimality, approximation, and distribution shift},
  author={Agarwal, Alekh and Kakade, Sham M and Lee, Jason D and Mahajan, Gaurav},
  journal={Journal of Machine Learning Research},
  volume={22},
  number={98},
  pages={1--76},
  year={2021}
}

@inproceedings{zhao2024provably,
  title={Provably efficient CVaR RL in low-rank MDPs},
  author={Zhao, Yulai and Zhan, Wenhao and Hu, Xiaoyan and Leung, Ho-fung and Farnia, Farzan and Sun, Wen and Lee, Jason},
  booktitle={International Conference on Learning Representations},
  volume={2024},
  pages={37468--37497},
  year={2024}
}
\bibliographystyle{icml2025}
\appendix
\onecolumn
\renewcommand{\contentsname}{Appendix Contents}

\addtocontents{toc}{\protect\setcounter{tocdepth}{3}}

\tableofcontents


\section{Comparison of Constrained RL Algorithms}\label{app_table}
In this section, we compare the sample complexity of learning an $\varepsilon$-optimal policy with safe exploration guarantees against prior constrained RL algorithms, including both model-based and model-free approaches. The results are summarized in Table~\ref{sample-table}. In the table, $|\mathcal{S}|$ and $|\mathcal{A}|$ denote the cardinalities of the state and action spaces, respectively, $\gamma$ is the discount factor, and $\xi$ is the parameter from Slater’s condition, which represents the safety margin of the feasible policy used to initialize learning. For finite-horizon settings, the sample complexity depends explicitly on the horizon $H$, since learning must account for reward and constraint accumulation over $H$ steps. In infinite-horizon discounted problems, this role is played by the effective horizon $(1-\gamma)^{-1}$, which characterizes how far into the future rewards and constraints significantly influence the return. Consequently, sample complexity bounds in the infinite-horizon setting scale with $(1-\gamma)^{-1}$, which plays an analogous role to the horizon length $H$ in finite-horizon analyses \cite{altman1999constrained}.

\begin{table*}[h]
\centering
\caption{Sample complexity for learning an $\varepsilon$-optimal policy with safe exploration guarantees with high probability.}
\label{sample-table}
\begin{center}
\begin{tabular}{lcr}
\toprule
Constrained RL approach & Sample complexity \\
\midrule
OptPess-LP {\cite{liu2021efficient}} & $\Tilde{\mathcal{O}}(\xi^{-2}\varepsilon^{-2}(1-\gamma)^{-7}|\mc S|^3|\mc A|)$\\
DOPE {\cite{bura2022dope}} & $\Tilde{\mathcal{O}}(\xi^{-2}\varepsilon^{-2}(1-\gamma)^{-7}|\mc S|^2|\mc A|)$\\
DOPE+ {\cite{yu2025improved}} & $\Tilde{\mathcal{O}}(\xi^{-2}\varepsilon^{-2}(1-\gamma)^{-6}|\mc S|^2|\mc A|)$\\
LB-SGD {\cite{ni2023safe}} & $\Tilde{\mathcal{O}}((1-\gamma)^{-22}\min\{\xi^{-6},\varepsilon^{-6}\})$ \\
\toprule
Constrained meta RL approach & Sample complexity \\
\midrule
This work & $\Tilde{\mathcal{O}}(\xi^{-2}\varepsilon^{-2}(1-\gamma)^{-5}\mc C_\varepsilon(\mc D,\delta))$\\
\bottomrule
\end{tabular}%
\end{center}
\end{table*}
\section{Boundness of $C_\varepsilon(\mc D, \delta)$}
\label{app_boundC}
\begin{proposition}
Let $\mathcal{D}$ be a probability distribution over an index set $\Omega \subset \mathbb{R}^d$, and let $f : \Omega \to \mathcal{M}_{\mathrm{all}}$ be a continuous mapping from $\Omega$ to the family of CMDPs $\mathcal{M}_{\mathrm{all}}$. For any $\varepsilon > 0$ and $\delta \in (0,1)$, define
\begin{align*}
C_\varepsilon(\mathcal{D}, \delta) 
:= \min \Big\{ m \;\Big|\; 
&\exists \{\mathcal{M}_i\}_{i=1}^m \subset \mathcal{M}_{\mathrm{all}} \text{ such that } 
\Pr_{i^* \sim \mathcal{D}}\!\Big(\mathbf{1}[\mathcal{M}_{i^*} \in \bigcup_{i=1}^m B(\mathcal{M}_i,\varepsilon)]
\Big) \ge 1-\delta 
\Big\}.
\end{align*}
Then $C_\varepsilon(\mathcal{D}, \delta)$ is finite.

\end{proposition}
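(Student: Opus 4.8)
The plan is to reduce finiteness of $C_\varepsilon(\mathcal{D},\delta)$ to a standard tightness-plus-covering argument, exploiting the two structural hypotheses: that $\mathcal{D}$ is supported on $\Omega\subset\mathbb{R}^d$, and that $f$ is continuous into the metric space $(\mathcal{M}_{\mathrm{all}},d)$, where $d$ is the CMDP distance defined above. Concretely, I would exhibit a single admissible collection $\{\mathcal{M}_i\}_{i=1}^m$ of finite cardinality $m$, which immediately bounds the minimum in the definition.

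First I would invoke tightness of Borel probability measures on the Polish space $\mathbb{R}^d$: for the given $\delta\in(0,1)$, there exists a compact set $K\subset\Omega$ with $\Pr_{i\sim\mathcal{D}}(i\in K)\ge 1-\delta$. This is the only place where the Euclidean (more generally, Polish) structure of $\Omega$ is used. Second, since $f$ is continuous and $K$ is compact, the image $f(K)\subset\mathcal{M}_{\mathrm{all}}$ is compact under $d$ (continuous image of a compact set). Because every compact subset of a metric space is totally bounded, for the given $\varepsilon>0$ there exist finitely many centers $\mathcal{M}_1,\dots,\mathcal{M}_m\in\mathcal{M}_{\mathrm{all}}$ such that $f(K)\subset\bigcup_{i=1}^m B(\mathcal{M}_i,\varepsilon)$.

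Finally I would combine the two facts. For every index $i^*\in K$ we have $\mathcal{M}_{i^*}=f(i^*)\in f(K)\subset\bigcup_{i=1}^m B(\mathcal{M}_i,\varepsilon)$, so the event $\{\mathcal{M}_{i^*}\in\bigcup_{i=1}^m B(\mathcal{M}_i,\varepsilon)\}$ contains the event $\{i^*\in K\}$ and hence has probability at least $1-\delta$. This shows that $\{\mathcal{M}_i\}_{i=1}^m$ is admissible in the definition of $C_\varepsilon(\mathcal{D},\delta)$, giving $C_\varepsilon(\mathcal{D},\delta)\le m<\infty$.

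The main obstacle is the first step, namely producing a compact set that captures all but $\delta$ of the mass; this rests on inner regularity / tightness of the measure, which is guaranteed precisely because $\Omega\subset\mathbb{R}^d$ is Polish. Once tightness is secured, the remaining steps are routine: continuity yields a compact image, and compactness in a metric space yields total boundedness, hence a finite $\varepsilon$-net. One should also note that it suffices for $d$ to be a pseudometric rather than a genuine metric, since total boundedness and the finite-net argument do not require points to be separated; this is relevant because $d$ could vanish on distinct CMDPs that share all components.
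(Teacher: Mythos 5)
Your proof is correct and follows essentially the same route as the paper's: tightness of the measure on the Polish space $\mathbb{R}^d$ gives a compact $K$ with mass at least $1-\delta$, continuity makes $f(K)$ compact, and total boundedness yields the finite $\varepsilon$-net that witnesses finiteness of $C_\varepsilon(\mathcal{D},\delta)$. Your additional remark that a pseudometric suffices is a fair observation but does not change the argument.
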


\begin{proof}
Since $\mathbb{R}^d$ is a Polish space, every probability measure on it is tight \citep[Theorem~7.1.7]{bogachev2007measure}. That is, for any $\delta > 0$, there exists a compact set $K \subset \Omega \subset \mathbb{R}^d$ such that
\[
\Pr_{i^* \sim \mathcal{D}}(\mathbf{1}[i^* \in K]) \ge 1 - \delta .
\]

Since continuous images of compact sets are compact, the set
\[
\tilde{K} := \{ \mathcal{M}_i \mid i \in K \} = f(K) \subset \mathcal{M}_{\mathrm{all}}
\]  
is compact in $\mathcal{M}_{\mathrm{all}}$. Therefore, for any $\varepsilon > 0$, there exists a finite set $\{\mathcal{M}_i\}_{i=1}^m \subset \tilde{K}$ such that
\[
\tilde{K} \subset \bigcup_{i=1}^m B(\mathcal{M}_i, \varepsilon),
\]
It follows that
\[
\Pr_{i^* \sim \mathcal{D}} \Big( \mathbf{1}[\mathcal{M}_{i^*} \in \bigcup_{i=1}^m B(\mathcal{M}_i, \varepsilon)] \Big)
\ge \Pr_{i^* \sim \mathcal{D}}(\mathbf{1}[f(i^*) \in \tilde{K}])=\Pr_{i^*\sim \mathcal{D}}(\mathbf{1}[i^* \in K]) \ge 1-\delta.
\]

By the definition of $C_\varepsilon(\mathcal{D}, \delta)$, the above inequality shows that $C_\varepsilon(\mathcal{D}, \delta)$ is finite.
\end{proof}
\section{Training Phase}
\subsection{Subroutine for Finding a CMDP Set}\label{app_subroutine}
In this section, we introduce Subroutine~\ref{alg: subroutine}, which is adapted from \cite{ye2023power}. While \cite{ye2023power} identifies a set of \emph{policies}, our modification instead identifies a set of \emph{CMDPs}, denoted by $\mathcal{U}$. Specifically, given $N$ sampled CMDPs $\{\mathcal{M}_i\}_{i\in[N]}$, the subroutine ensures that at least a $(1-3\delta)$ fraction of $\{\mathcal{M}_i\}_{i\in[N]}$ lies within an $\varepsilon$-neighborhood of $\mathcal{U}$, that is,
\[
\sum_{i=1}^{N}\mathbf{1}\!\left[\exists\,\mathcal{M} \in \mathcal{U} \text{ such that } \mathcal{M}_i \in B(\mathcal{M},\varepsilon)\right] 
\ge (1 - 3\delta)N.
\]

Subroutine~\ref{alg: subroutine} greedily selects these CMDPs in at most $N$ iterations.  First, we construct a binary matrix $\mathbf{A} \in \mathbb{N}^{N \times N}$, where each entry
\[
\mathbf{A}_{i,j} := \mathbb{O}_c(\mathcal{M}_i,\mathcal{M}_j,\varepsilon,\delta/N^2)
\]
indicates whether $\mathcal{M}_i$ lies within an $\varepsilon$-neighborhood of $\mathcal{M}_j$.  
Compared with \citep[Subroutine 4]{ye2023power}, our \emph{only} modification is the definition of the matrix $\mathbf{A}$: each entry here measures the similarity between \emph{CMDPs} rather than between \emph{policies}.

At each iteration $t$, the algorithm maintains two sets: a cover set $\mathcal{U}_t$, containing the indices of the selected CMDPs, and an uncovered set $\mathcal{T}_t$, containing the indices of CMDPs that are not within the $\varepsilon$-neighborhood of any CMDP in $\mathcal{U}_t$.  
We initialize $\mathcal{T}_0 = [N]$ and $\mathcal{U}_0 = \emptyset$.  To update these sets, the algorithm selects the CMDP $\mathcal{M}_{j_t}$ whose $\varepsilon$-neighborhood contains the largest number of CMDPs in the current uncovered set $\mathcal{T}_{t-1}$. The selected index is then added to the cover set $\mathcal{U}_t$, and all CMDPs lying within the $\varepsilon$-neighborhood of $\mathcal{M}_{j_t}$ are removed from the uncovered set to form $\mathcal{T}_t$. Subroutine \ref{alg: subroutine} terminates once the size of uncovered index set $\mathcal{T}_t$ satisfies
\[
|\mathcal{T}_t| \le 3\delta N,
\]
that is, when at least a $(1-3\delta)$ fraction of the sampled CMDPs lie within the $\varepsilon$-neighborhood of the CMDPs indexed by $\mathcal{U}_t$.  The algorithm then returns the resulting CMDP set $\mathcal{U} := \{\mathcal{M}_i\}_{i \in \mathcal{U}_t}$.

\begin{algorithm}[ht]
\caption{Finding a CMDP set modified from \citep[Subroutine 4]{ye2023power}}
\label{alg: subroutine}
\begin{algorithmic}[1]
    \STATE \textbf{Input:} $N$ sampled CMDPs $\{\mathcal{M}_i\}_{i \in [N]}$
    \STATE \textbf{Initialize:} $\mathcal{U}_0 = \emptyset$ (cover index set), $\mathcal{T}_0 = [N]$ (uncovered index set)
    \STATE Construct a matrix $\mathbf{A} \in \{0,1\}^{N \times N}$, where 
    $\mathbf{A}_{i,j} = \mathbb{O}_c(\mathcal{M}_i,\mathcal{M}_j,\varepsilon,\delta/N^2)$
    
    \FOR{$t = 1, \dots, N$}
        \STATE Select an index: \(j_t = \arg\max_{j \in [N] \setminus \mathcal{U}_{t-1}} 
        \sum_{i \in \mathcal{T}_{t-1}} \mathbf{A}_{i,j}.\)        
        \STATE Update the sets: $\mathcal{U}_{t} = \mathcal{U}_{t-1} \cup \{j_t\},\,\mathcal{T}_{t} = \mathcal{T}_{t-1} \setminus \{i \in \mc T_{t-1}, \, \mathbf{A}_{i,j_t} = 1\}.$
        \IF{$|\mathcal{T}_{t}| \le 3\delta N$}
            \STATE Construct a CMDP set $\mathcal{U} := \{\mathcal{M}_i\}_{i \in \mathcal{U}_t}$ and \textbf{break}
        \ENDIF
    \ENDFOR
    
    \STATE \textbf{Output:} A CMDP set $\mathcal{U}$
\end{algorithmic}
\end{algorithm}

\subsection{Proof of Lemma~\ref{lemma: pretraining bound}}\label{app_train}

In this section, we prove Lemma~\ref{lemma: pretraining bound} below.

\begin{proof}[Proof of Lemma \ref{lemma: pretraining bound}]
Let \(\Omega^* := \bigcup_{i=1}^{C_\varepsilon(\mathcal{D}, \delta)} \mc M^*_i\) denote a set of CMDPs whose $\varepsilon$-neighborhoods cover the entire task family $\mc M_{\text{all}}$ with probability at least $1-\delta$, i.e.,
\[
\Pr_{\mathcal{M} \sim \mathcal{D}}\big[\mathbf{1}[\mathcal{M} \in \Omega^*]\big] \ge 1 - \delta.
\]
At each iteration, we rely on the following two auxiliary results, whose proofs are provided in Appendix~\ref{appendix:proofofAuxiliary Lemmas}.
\begin{lemma}\label{lemma:pretraing_prob}
For each iteration of Algorithm~\ref{alg_train}, in which $N$ CMDPs are sampled from $\mc D$ at the beginning of the iteration, the following hold with probability at least $1 - 3\delta$:
\begin{enumerate}
    \item $\big| \frac{1}{N} \sum_{i=1}^N \mathbf{1}\!\left[\mathcal{M}_i \in \Omega^*\right] - \Pr_{\mathcal{M} \sim \mathcal{D}}[\mathcal{M} \in \Omega^*] \big| \le \delta$.
    \item  For all $i,j \in [N]$, the return of oracle $\mathbb{O}_c(\mathcal{M}_i,\mathcal{M}_j,\varepsilon,\delta/N^2)$ equals $\mathbf{1}(\mc M_i \in B(\mc M_j,\varepsilon))$.
    \item For any index set $\mathcal{U}' \subset [N]$,
    \[\Pr_{\mathcal{M} \sim \mathcal{D}}\!\left[\exists\, j \in \mathcal{U}',\, \mathcal{M} \in B(\mathcal{M}_j,\varepsilon)\right]\ge \frac{1}{N - |\mathcal{U}'|} \sum_{i \in [N] \setminus \mathcal{U}'} \max_{j \in \mathcal{U}'} \mathbf{A}_{i,j} - 2\delta - \sqrt{\frac{|\mathcal{U}'| \ln (2N/\delta)}{N -|\mathcal{U}'|}}. \]
\end{enumerate}
\end{lemma}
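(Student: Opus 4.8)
The plan is to prove the three items by separate arguments and then combine their failure probabilities with a union bound to reach the claimed $1-3\delta$. Items 1 and 2 are routine, so I would dispatch them first. For item 1, the variables $\mathbf{1}[\mathcal{M}_i\in\Omega^*]$, $i\in[N]$, are i.i.d.\ Bernoulli with common mean $p:=\Pr_{\mathcal{M}\sim\mathcal{D}}[\mathcal{M}\in\Omega^*]$, so a two-sided Chernoff/Hoeffding bound gives $\Pr\!\big[\,|\tfrac1N\sum_i\mathbf{1}[\mathcal{M}_i\in\Omega^*]-p|\ge\delta\,\big]\le 2e^{-2N\delta^2}$; substituting the initial size $N=\ln^2(\delta)/\delta^2$ (and noting the tail only shrinks as $N$ is doubled in Algorithm~\ref{alg_train}) makes this at most $\delta$ for all sufficiently small $\delta$. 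For item 2, Definition~\ref{oracle_safe} guarantees each call $\mathbb{O}_c(\mathcal{M}_i,\mathcal{M}_j,\varepsilon,\delta/N^2)$ is correct with probability at least $1-\delta/N^2$, so a union bound over the $N^2$ pairs $(i,j)$ shows that every entry of $\mathbf{A}$ equals the true indicator $\mathbf{1}(\mathcal{M}_i\in B(\mathcal{M}_j,\varepsilon))$ with probability at least $1-\delta$.

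The substance is in item 3. Fixing $\mathcal{U}'\subset[N]$, I would write $g_{\mathcal{U}'}(\mathcal{M}):=\mathbf{1}[\exists\,j\in\mathcal{U}',\ \mathcal{M}\in B(\mathcal{M}_j,\varepsilon)]$, so that the left-hand side is precisely $p_{\mathcal{U}'}:=\mathbb{E}_{\mathcal{M}\sim\mathcal{D}}[g_{\mathcal{U}'}(\mathcal{M})]$. On the good event of item 2 we have $\max_{j\in\mathcal{U}'}\mathbf{A}_{i,j}=g_{\mathcal{U}'}(\mathcal{M}_i)$, so the empirical quantity in the claim equals $\frac{1}{N-|\mathcal{U}'|}\sum_{i\in[N]\setminus\mathcal{U}'}g_{\mathcal{U}'}(\mathcal{M}_i)$. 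Since all $N$ draws are i.i.d., conditioning on the covering samples $\{\mathcal{M}_j\}_{j\in\mathcal{U}'}$ turns the held-out variables $g_{\mathcal{U}'}(\mathcal{M}_i)$, $i\notin\mathcal{U}'$, into i.i.d.\ Bernoulli$(p_{\mathcal{U}'})$, whence a one-sided Hoeffding bound controls how far their average can exceed $p_{\mathcal{U}'}$.

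The main obstacle is that $\mathcal{U}'$ indexes the sampled CMDPs themselves, so the covering set and the held-out samples are coupled, and the bound must hold simultaneously for \emph{every} $\mathcal{U}'$. I would resolve this by stratifying over the size $u=|\mathcal{U}'|$: there are at most $\binom{N}{u}\le N^u$ subsets of size $u$, and setting the deviation $t_u=\sqrt{u\ln(2N/\delta)/(N-u)}$ makes the conditional Hoeffding tail $e^{-2(N-u)t_u^2}=(\delta/2N)^{2u}$ beat the $N^u$ subsets with room to spare, since $N^u(\delta/2N)^{2u}=(\delta^2/4N)^u$; summing this geometric series over $u$ keeps the total failure probability well below $\delta$. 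Intersecting with the event of item 2 and reading off the tail gives, uniformly over $\mathcal{U}'$, the bound $p_{\mathcal{U}'}\ge\frac{1}{N-|\mathcal{U}'|}\sum_{i\in[N]\setminus\mathcal{U}'}\max_{j}\mathbf{A}_{i,j}-\sqrt{|\mathcal{U}'|\ln(2N/\delta)/(N-|\mathcal{U}'|)}$, with the stated $-2\delta$ serving as conservative slack absorbing the lower-order coupling and oracle discrepancies. It is precisely this uniform-over-subsets step that forces the $\sqrt{|\mathcal{U}'|\ln(2N/\delta)}$ scaling of the deviation, rather than a naive $\sqrt{\ln(1/\delta)}$, and is where the argument deviates most from a textbook concentration bound.
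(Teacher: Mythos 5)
Your proposal is correct and follows essentially the same route as the paper: Chernoff/Hoeffding for item 1, a union bound over the $N^2$ oracle calls for item 2, and for item 3 the same conditioning-on-the-cover-set argument followed by a stratified union bound over all subsets $\mathcal{U}'$ of each size, which is exactly what produces the $\sqrt{|\mathcal{U}'|\ln(2N/\delta)/(N-|\mathcal{U}'|)}$ deviation. The only cosmetic difference is that the paper keeps the (random) oracle outputs $\chi(\mathcal{M},\mathcal{M}_j)$ inside the expectation and absorbs their error probability into the $-2\delta$ term, whereas you first intersect with the correctness event of item 2 so that $\max_{j}\mathbf{A}_{i,j}$ equals the true indicator; both yield the stated bound.
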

The proof of the above lemma is provided in Appendix~\ref{app_lemma:pretraing_prob}.
\begin{lemma}\label{lemma:size_of_u}
If the first two events in Lemma~\ref{lemma:pretraing_prob} hold, then the size of $\mathcal{U}$ returned by Subroutine~\ref{alg: subroutine} satisfies
\[
|\mathcal{U}| \le (C_\varepsilon(\mathcal{D}, \delta) + 1)\ln (1/\delta).
\]
\end{lemma}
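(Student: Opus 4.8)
The plan is to recast the execution of Subroutine~\ref{alg: subroutine} as a greedy \emph{partial} set-cover procedure over the $N$ sampled CMDPs, and then to bound the number of greedy steps by the standard geometric-decay argument, with the covering number $C_\varepsilon(\mathcal{D},\delta)$ playing the role of the optimal cover size. First I would use the two assumed events to fix the combinatorial instance. By the second event of Lemma~\ref{lemma:pretraing_prob}, every entry is exact, $\mathbf{A}_{i,j}=\mathbf{1}[\mc M_i\in B(\mc M_j,\varepsilon)]$, so the indices removed at step $t$, namely $\{i\in\mathcal{T}_{t-1}:\mathbf{A}_{i,j_t}=1\}$, are precisely the still-uncovered sampled CMDPs lying in $B(\mc M_{j_t},\varepsilon)$. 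By the first event together with $\Pr_{\mc M\sim\mc D}[\mc M\in\Omega^*]\ge 1-\delta$, at least $(1-2\delta)N$ of the sampled CMDPs lie in $\Omega^*=\bigcup_{l=1}^{C_\varepsilon(\mathcal{D},\delta)}B(\mc M^*_l,\varepsilon)$; call these the \emph{coverable} points and write $\Omega^*_{\mathrm{samp}}$ for this set. The remaining at most $2\delta N$ points I simply allow to stay uncovered. Since the stopping rule only demands $|\mathcal{T}_t|\le 3\delta N$, it suffices to drive the number of uncovered coverable points below $\delta N$, since then $|\mathcal{T}_t|\le \delta N + 2\delta N = 3\delta N$.

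The heart of the argument is a per-step progress bound. Let $R_t:=|\mathcal{T}_{t-1}\cap\Omega^*_{\mathrm{samp}}|$ be the number of uncovered coverable points entering step $t$. The $C_\varepsilon(\mathcal{D},\delta)$ optimal balls $B(\mc M^*_l,\varepsilon)$ cover $\Omega^*_{\mathrm{samp}}$, so by pigeonhole some ball $B(\mc M^*_{l^*},\varepsilon)$ contains at least $R_t/C_\varepsilon(\mathcal{D},\delta)$ of the uncovered coverable points. I would then select a sampled representative inside this ball and argue, via the triangle inequality on $d(\cdot,\cdot)$, that the $\varepsilon$-neighborhood tested by the subroutine captures these points, so that the greedy maximizer $j_t$ removes at least $R_t/C_\varepsilon(\mathcal{D},\delta)$ previously-uncovered coverable points. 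This yields the recursion $R_{t}\le R_{t-1}\bigl(1-1/C_\varepsilon(\mathcal{D},\delta)\bigr)$.

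Iterating gives $R_t\le N\,(1-1/C_\varepsilon(\mathcal{D},\delta))^t\le N\,e^{-t/C_\varepsilon(\mathcal{D},\delta)}$, so once $t\ge C_\varepsilon(\mathcal{D},\delta)\ln(1/\delta)$ the uncovered coverable count falls below $\delta N$ and the subroutine halts. Since each iteration adds exactly one index to $\mathcal{U}_t$, the number of selected centers is at most $\lceil C_\varepsilon(\mathcal{D},\delta)\ln(1/\delta)\rceil\le C_\varepsilon(\mathcal{D},\delta)\ln(1/\delta)+1$, and using $\ln(1/\delta)\ge 1$ (valid for the small-$\delta$ regime of interest) the trailing $+1$ is absorbed to give $|\mathcal{U}|\le (C_\varepsilon(\mathcal{D},\delta)+1)\ln(1/\delta)$.

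I expect the main obstacle to be exactly the per-step progress bound, because greedy may only pick centers among the sampled CMDPs, whereas the optimal cover uses the arbitrary centers $\mc M^*_l$. Pigeonhole only places the uncovered coverable points within $2\varepsilon$ of a sampled representative, not within the $\varepsilon$-radius that $\mathbf{A}$ actually tests, so the radius bookkeeping is delicate: bridging this gap requires the covering scale and the oracle's test radius to be compatible (e.g., taking the representative's $\varepsilon$-neighborhood large enough relative to the scale at which $\Omega^*$ is defined so that an entire optimal group is subsumed). Nailing this compatibility so that the clean $1/C_\varepsilon(\mathcal{D},\delta)$ fraction is recovered is the crux; the remaining steps — converting the $(1-2\delta)$ coverable fraction into the $3\delta N$ stopping threshold and the logarithmic step count — are routine.
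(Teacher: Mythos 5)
Your overall strategy --- recasting Subroutine~\ref{alg: subroutine} as greedy partial set cover, using the first two events to get an exact matrix $\mathbf{A}$ and at least $(1-2\delta)N$ coverable samples, proving a per-step geometric decay with rate governed by $C_\varepsilon(\mathcal{D},\delta)$, and stopping once the uncovered count drops below $3\delta N$ --- is the same as the paper's (the paper runs the recursion on $|\mathcal{T}_t|-2\delta N$ with ratio $C_\varepsilon/(C_\varepsilon+1)$ rather than on the coverable remainder with ratio $1-1/C_\varepsilon$, which is why it lands on $(C_\varepsilon+1)\ln(1/\delta)$ directly without your ceiling-absorption step). However, the step you yourself flag as the crux is a genuine gap, and your proposed fix does not close it. Pigeonhole places $R_t/C_\varepsilon(\mathcal{D},\delta)$ uncovered coverable points inside some optimal ball $B(\mathcal{M}^*_{l^*},\varepsilon)$, but a sampled representative $\mathcal{M}_i$ in that ball is only within $2\varepsilon$ of those points by the triangle inequality, whereas $\mathbf{A}_{i',i}$ tests membership in $B(\mathcal{M}_i,\varepsilon)$. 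So the column of the representative is \emph{not} guaranteed to capture the $R_t/C_\varepsilon(\mathcal{D},\delta)$ points, the greedy maximizer's progress is not lower-bounded, and the recursion $R_t\le R_{t-1}(1-1/C_\varepsilon(\mathcal{D},\delta))$ does not follow. Repairing this at radius $\varepsilon$ would require either running the subroutine's oracle at radius $2\varepsilon$ (changing what $\mathcal{U}$ certifies) or replacing $C_\varepsilon$ by a covering number at scale $\varepsilon/2$, neither of which you carry out.

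For comparison, the paper's proof does not use the triangle inequality at all: it only invokes optimal centers $\mathcal{M}^*_j$ that coincide with a sampled CMDP, i.e.\ $\mathcal{M}_i=\mathcal{M}^*_j$ for some $i\in\mathcal{T}_{t-1}$, in which case $\sum_{i'\in\mathcal{T}_{t-1}}\mathbf{A}_{i',i}$ equals $N_{\mathcal{M}^*_j,t-1}$ exactly, and then averages over $j$ (equivalent to your pigeonhole) to get per-step progress at least $\frac{1}{C_\varepsilon(\mathcal{D},\delta)}\sum_j N_{\mathcal{M}^*_j,t-1}\ge\frac{1}{C_\varepsilon(\mathcal{D},\delta)}(|\mathcal{T}_t|-2\delta N)$. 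This sidesteps the radius doubling entirely (at the cost of implicitly identifying the cover centers with sample points), which is precisely the compatibility condition you said was needed but left unresolved. As written, your argument is therefore incomplete at its central step.
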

We provide the proof of Lemma~\ref{lemma:size_of_u} in Appendix~\ref{app_lemma:size_of_u}. Using the results of Lemma~\ref{lemma:pretraing_prob} together with the termination condition of Algorithm~\ref{alg_train}, which stops sampling CMDPs once
\[
\sqrt{\frac{|\mathcal{U}| \ln (2N/\delta)}{N - |\mathcal{U}|}} \le \delta,
\]
we see from Lemma~\ref{lemma:size_of_u} that this condition is satisfied if the total sample size $N$ is lower bounded by
\[
N \ge N_0 := \frac{4 C_\varepsilon(\mathcal{D}, \delta) \ln^2\big(C_\varepsilon(\mathcal{D}, \delta)/\delta\big)}{\delta^2}.
\]
Since the sample size doubles at each iteration in Algorithm \ref{alg_train}, starting from the initial size $\frac{\ln^2(\delta)}{\delta^2}$, the total number of iterations required by Algorithm~\ref{alg_train} is at most
\begin{align*}
    \ln \frac{2\delta^2 N_0}{\ln^2(\delta)} &=\ln \frac{8C_\varepsilon(\mathcal{D}, \delta)\ln^2(C_\varepsilon(\mathcal{D}, \delta)/\delta)}{\ln^2(\delta)}\\
    &\overset{(i)}{\le}\ln\!\left(\frac{16 C_\varepsilon(\mathcal{D}, \delta)\big(\ln^2 C_\varepsilon(\mathcal{D}, \delta)+\ln^2(\delta)\big)}{\ln^2(\delta)}\right) \\
    &=4 +\ln C_\varepsilon(\mathcal{D}, \delta) +\ln\!\left({\log_{\delta}^2 C_\varepsilon(\mathcal{D}, \delta)+1}\right) \\
    &\overset{(ii)}{\le}6 +\ln C_\varepsilon(\mathcal{D}, \delta) + 2 \ln\left({|\log_{\delta} C_\varepsilon(\mathcal{D}, \delta)|}\right) \\
    &\overset{(iii)}{\le} 6 +3\ln C_\varepsilon(\mathcal{D}, \delta).
\end{align*}
where (i) uses $\ln^2(xy) \le 2\ln^2 x + 2\ln^2 y$ for any $x,y>0$, (ii) uses $\ln(1+x^2) \le 2\ln 2+ 2\ln x =2+ 2\ln x$ for any $x>0$, (iii) follows from $\delta <1 $ which implies $|\log_{\delta} C_\varepsilon(\mathcal{D}, \delta)|\le C_\varepsilon(\mathcal{D}, \delta)$. Meanwhile, the total number of sampled CMDPs is bounded by
\[
2N_0 = \tilde{\mc O}\!\left(\frac{C_\varepsilon(\mathcal{D}, \delta)}{\delta^2}\right).
\]
By a union bound over all iterations, events in Lemmas~\ref{lemma:pretraing_prob} and~\ref{lemma:size_of_u} hold simultaneously with probability at least  \(1 - 18\delta -9\delta\ln C_\varepsilon(\mathcal{D},\delta)\). In the remainder of the proof, we condition on the event that Lemmas~\ref{lemma:pretraing_prob} and~\ref{lemma:size_of_u} hold.

Applying the third event of Lemma~\ref{lemma:pretraing_prob} on the final set $\mathcal{U}$ gives
\begin{align}
\Pr_{\mathcal{M} \sim \mathcal{D}}\!\left[\exists\, \mc M_j \in \mathcal{U},\, \mathcal{M} \in B(\mathcal{M}_j,\varepsilon)\right]
&\ge \frac{1}{N - |\mathcal{U}|} \sum_{i \in [N] \setminus \mathcal{U}} \max_{j \in \mathcal{U}} \mathbf{A}_{i,j}
 - 2\delta - \sqrt{\frac{|\mathcal{U}| \ln (2N/\delta)}{N - |\mathcal{U}|}}\nonumber \\
&\ge \frac{(1 - 3\delta)N - |\mathcal{U}|}{N - |\mathcal{U}|} - 2\delta- \sqrt{\frac{|\mathcal{U}| \ln (2N/\delta)}{N - |\mathcal{U}|}} \nonumber \\
&\ge 1 - 6\delta, \label{eq_cmdpinside}
\end{align}
where the second inequality follows from the termination condition of Subroutine~\ref{alg: subroutine}, the last uses the stopping criterion of Algorithm~\ref{alg_train}, and $|\mathcal{U}|\le N$.

Given the final CMDP set $\mathcal{U}$, we apply:
\begin{itemize}
  \item Oracle $\mathbb{O}_l(\mathcal{M}_i, \varepsilon, \frac{\delta}{2|\mc U|})$ to obtain $\varepsilon$-optimal, relaxed $\varepsilon$-feasible policies $\pi_i$ for each $\mc M_i\in\mc U$.
  \item Oracle $\mathbb{O}_s(\mc U,\xi,\frac{\delta}{2})$ to obtain a $\xi$-feasible policy $\pi_s$ for all $\mc M_i\in\mc U$.
\end{itemize}
By a union bound, all oracle guarantees hold with probability at least $1-\delta$. We then construct
\[
\hat{\mc U} := \big\{ (\pi_i, V_r^{\mathcal{M}_i}(\pi_i), V_c^{\mathcal{M}_i}(\pi_i), 
V_r^{\mathcal{M}_i}(\pi_s), V_c^{\mathcal{M}_i}(\pi_s))\mid \mc M_i\in\mc U \big\}.
\]
To evaluate the reward and constraint values of the policies in $\hat{\mc U}$ and of $\pi_s$ over the entire task family $\mc M_{\text{all}}$, we rely on the following lemmas, with proofs provided in Appendix~\ref{appendix:proofofAuxiliary Lemmas}.

\begin{lemma}\label{lemma_smooth}
For any policy $\pi\in\Pi$ and any $\mathcal{M}_i, \mathcal{M}_j \in \mc M_{\text{all}}$,
\[
\max\!\left\{
\left|V_r^{\mathcal{M}_i}(\pi)-V_r^{\mathcal{M}_j}(\pi)\right|,
\left|V_c^{\mathcal{M}_i}(\pi)-V_c^{\mathcal{M}_j}(\pi)\right|
\right\}
\le
L\, d(\mathcal{M}_i, \mathcal{M}_j),
\]
where $L = \frac{1}{1-\gamma} + \frac{2\gamma}{(1-\gamma)^2}$.
\end{lemma}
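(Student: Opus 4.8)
The plan is to prove this as a simulation-lemma-type bound, controlling how the value of a \emph{fixed} policy $\pi$ changes when the underlying CMDP is perturbed from $\mathcal{M}_i$ to $\mathcal{M}_j$. A useful preliminary observation is that $V_r$ and $V_c$ can be handled by one and the same argument: since $|r_i|\le 1$ and $|c_i|\le 1$, in both cases the value function satisfies $\|V\|_\infty\le (1-\gamma)^{-1}$, and this uniform bound (together with the per-step discrepancies $\|r_i-r_j\|_\infty$, resp. $\|c_i-c_j\|_\infty$) is the only property of the signal I will use. I would therefore write the proof for a generic bounded signal $f\in\{r,c\}$, denote by $V_f^{\mathcal{M}}(\pi)$ the corresponding value, and take the maximum over $f$ at the end.

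First I would decompose the gap by changing one CMDP component at a time, inserting the two hybrid CMDPs $(\rho_j,P_i,f_i)$ and $(\rho_j,P_j,f_i)$ between $\mathcal{M}_i=(\rho_i,P_i,f_i)$ and $\mathcal{M}_j=(\rho_j,P_j,f_j)$, so that the gap splits into an initial-distribution term, a transition term, and a signal term. The signal term is the easiest: writing $V_f^{\mathcal{M}}(\pi)=\sum_{t\ge 0}\gamma^t\mathbb{E}_{\mathcal{M},\pi}[f(s_t,a_t)]$ and noting that the total discounted occupancy mass equals $(1-\gamma)^{-1}$, it is at most $(1-\gamma)^{-1}\|f_i-f_j\|_\infty$. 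The initial-distribution term is a static perturbation $\mathbb{E}_{\rho_i}[V]-\mathbb{E}_{\rho_j}[V]$ for a fixed value function $V$ with $\|V\|_\infty\le(1-\gamma)^{-1}$, so it is bounded by $(1-\gamma)^{-1}\|\rho_i-\rho_j\|_{\mathrm{TV}}$.

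The transition term is the crux and the main obstacle. Fixing $f_i$ and $\rho_j$, I would bound $\|V^{P_i}-V^{P_j}\|_\infty$ by subtracting the two Bellman equations, which gives $V^{P_i}(s)-V^{P_j}(s)=\gamma\,\mathbb{E}_{a\sim\pi(\cdot\mid s)}[\mathbb{E}_{P_i}V^{P_i}-\mathbb{E}_{P_j}V^{P_j}]$; adding and subtracting $\mathbb{E}_{P_i}V^{P_j}$ yields a self-referential part $\gamma\,\mathbb{E}_{a\sim\pi}\mathbb{E}_{P_i}[V^{P_i}-V^{P_j}]$ and a one-step mismatch $\gamma\,\mathbb{E}_{a\sim\pi}[(\mathbb{E}_{P_i}-\mathbb{E}_{P_j})V^{P_j}]$. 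Bounding the mismatch by $\gamma\,\|V^{P_j}\|_\infty\cdot 2\max_{s,a}\|P_i(\cdot\mid s,a)-P_j(\cdot\mid s,a)\|_{\mathrm{TV}}$ and taking a supremum over $s$ turns the self-referential part into a $\gamma$-contraction, so solving the inequality gives $\|V^{P_i}-V^{P_j}\|_\infty\le \frac{2\gamma}{(1-\gamma)^2}\max_{s,a}\|P_i-P_j\|_{\mathrm{TV}}$. This is exactly where the $2\gamma(1-\gamma)^{-2}$ factor in $L$ comes from.

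Finally I would collect the three bounds and upper bound each component distance ($\|f_i-f_j\|_\infty$, $\|\rho_i-\rho_j\|_{\mathrm{TV}}$, $\max_{s,a}\|P_i-P_j\|_{\mathrm{TV}}$) by $d(\mathcal{M}_i,\mathcal{M}_j)$, which is their common maximum by definition, and take the maximum over $f\in\{r,c\}$. The delicate point I anticipate is the final constant accounting collapsing to exactly $L=(1-\gamma)^{-1}+2\gamma(1-\gamma)^{-2}$: the transition term dominates as $\gamma\to 1$, while the signal and initial-distribution contributions are of lower order $(1-\gamma)^{-1}$, so one must verify that these subleading pieces are correctly absorbed into $2\gamma(1-\gamma)^{-2}$ (which requires $\gamma$ bounded away from $0$, the regime of interest here). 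The same $L$ being valid for both $V_r$ and $V_c$ then follows directly from the uniform bound $\|V\|_\infty\le(1-\gamma)^{-1}$ established at the outset, despite $c_i$ taking values in $[-1,1]$.
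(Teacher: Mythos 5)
Your overall strategy is exactly the right one, and in fact it is the argument the paper implicitly relies on: the paper gives no proof of this lemma beyond the sentence ``This follows from the simulation Lemma,'' so your hybridization into a signal term, an initial-distribution term, and a transition term handled by a Bellman contraction is the standard route and each individual bound you derive is correct in form. The genuine gap is in the final constant accounting, and your own closing paragraph shows you sensed it without resolving it. Your three bounds sum to $\frac{2}{1-\gamma}+\frac{2\gamma}{(1-\gamma)^2}$, which exceeds $L=\frac{1}{1-\gamma}+\frac{2\gamma}{(1-\gamma)^2}$ by exactly $\frac{1}{1-\gamma}$ for \emph{every} $\gamma\in(0,1)$; a strictly positive surplus cannot be ``absorbed'' into a term that is already saturated at $\frac{2\gamma}{(1-\gamma)^2}$, no matter how close $\gamma$ is to $1$. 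The only way to recover the stated $L$ along your route is to replace the crude pairing $2\|V\|_\infty\|p-q\|_{\mathrm{TV}}$ by the range-based bound $(\max_s V(s)-\min_s V(s))\,\|p-q\|_{\mathrm{TV}}$ (valid because $\sum_s(p(s)-q(s))=0$): for the reward, $V_r\in[0,(1-\gamma)^{-1}]$ gives transition term $\frac{\gamma}{(1-\gamma)^2}$ and initial term $\frac{1}{1-\gamma}$, whose sum is below $L$ whenever $\gamma\ge 1/2$. For the constraint the same trick fails, since $c\in[-1,1]$ makes the range of $V_c$ genuinely $2(1-\gamma)^{-1}$, and the honest total is $\frac{3}{1-\gamma}+\frac{2\gamma}{(1-\gamma)^2}$. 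Your final sentence, asserting that the case of $c_i\in[-1,1]$ ``follows directly from $\|V\|_\infty\le(1-\gamma)^{-1}$,'' is therefore not correct: that uniform bound yields $\frac{2}{1-\gamma}\|\rho_i-\rho_j\|_{\mathrm{TV}}$ for the initial-distribution term, not $\frac{1}{1-\gamma}$.

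To be fair to you, the stated constant $L$ is itself the classical simulation-lemma constant for a \emph{common} initial distribution and signals in $[0,1]$; once $d(\mathcal{M}_i,\mathcal{M}_j)$ also charges $\|\rho_i-\rho_j\|_{\mathrm{TV}}$ and the constraint takes values in $[-1,1]$, an additional $O((1-\gamma)^{-1})\,d(\mathcal{M}_i,\mathcal{M}_j)$ is unavoidable, so the lemma as literally stated appears mildly optimistic (this is harmless downstream, where only the order $L=\Theta((1-\gamma)^{-2})$ matters). But a proof must either establish the stated inequality or flag the needed correction explicitly; as written, your accounting neither closes to $L$ nor identifies precisely which term is responsible, so the final step is a genuine gap rather than a routine verification.
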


This follows from the simulation Lemma \citep{abbeel2005exploration,puterman2014markov}.

\begin{lemma}\label{lemma_optimal}
Let Assumption~\ref{ass_slater} hold with $2L\varepsilon \le \xi$.  
For any $\mathcal{M}\in \mathcal{M}_{\text{all}}$ and any $\mathcal{M}_i,\mathcal{M}_j \in B(\mathcal{M},\varepsilon)$, the policy $\pi_i^{*}$ is
$2\varepsilon L\!\left(1+\frac{2}{\xi(1-\gamma)}\right)$-optimal in $\mathcal{M}_j$.
\end{lemma}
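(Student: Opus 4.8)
The plan is to reduce the claim to a sensitivity (Lipschitz) estimate on the optimal constrained value $\mathrm{OPT}(\mathcal{M}) := \max\{V_r^{\mathcal{M}}(\pi)\mid V_c^{\mathcal{M}}(\pi)\ge 0\}$ under perturbations of the CMDP. First I would invoke the triangle inequality: since $\mathcal{M}_i,\mathcal{M}_j\in B(\mathcal{M},\varepsilon)$, we have $d(\mathcal{M}_i,\mathcal{M}_j)\le 2\varepsilon$, so by Lemma~\ref{lemma_smooth} the reward and constraint value of \emph{any} policy differ by at most $2L\varepsilon$ between $\mathcal{M}_i$ and $\mathcal{M}_j$. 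Writing $\pi_i^{*}$ for the optimal feasible policy of $\mathcal{M}_i$ (so $V_c^{\mathcal{M}_i}(\pi_i^{*})\ge 0$ and $V_r^{\mathcal{M}_i}(\pi_i^{*})=\mathrm{OPT}(\mathcal{M}_i)$), transferring to $\mathcal{M}_j$ gives $V_r^{\mathcal{M}_j}(\pi_i^{*})\ge \mathrm{OPT}(\mathcal{M}_i)-2L\varepsilon$ and $V_c^{\mathcal{M}_j}(\pi_i^{*})\ge -2L\varepsilon$. Thus the suboptimality of $\pi_i^{*}$ in $\mathcal{M}_j$ is controlled once I bound the value gap $\mathrm{OPT}(\mathcal{M}_j)-\mathrm{OPT}(\mathcal{M}_i)$.

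The core step is a lower bound of the form $\mathrm{OPT}(\mathcal{M}_i)\ge (1-\lambda)\big(\mathrm{OPT}(\mathcal{M}_j)-2L\varepsilon\big)$ for a suitable $\lambda$. I would take the optimal feasible policy $\pi_j^{*}$ of $\mathcal{M}_j$ and transfer it to $\mathcal{M}_i$; by the simulation lemma it is only relaxed $2L\varepsilon$-feasible there, i.e.\ $V_c^{\mathcal{M}_i}(\pi_j^{*})\ge -2L\varepsilon$, and need not be admissible for $\mathcal{M}_i$. To restore exact feasibility I mix it with the uniformly-Slater policy $\tilde{\pi}$ guaranteed by Assumption~\ref{ass_slater}, which satisfies $V_c^{\mathcal{M}_i}(\tilde{\pi})\ge\xi$ for \emph{every} task (so no perturbation crossing is needed for $\tilde{\pi}$). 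Exploiting the linearity of $V_r^{\mathcal{M}_i}$ and $V_c^{\mathcal{M}_i}$ in the mixture policy $\pi_\lambda=(1-\lambda)\pi_j^{*}+\lambda\tilde{\pi}$, the choice $\lambda=\tfrac{2L\varepsilon}{\xi+2L\varepsilon}$ makes $\pi_\lambda$ feasible for $\mathcal{M}_i$, whence $\mathrm{OPT}(\mathcal{M}_i)\ge V_r^{\mathcal{M}_i}(\pi_\lambda)\ge(1-\lambda)V_r^{\mathcal{M}_i}(\pi_j^{*})\ge(1-\lambda)\big(\mathrm{OPT}(\mathcal{M}_j)-2L\varepsilon\big)$, where I used $V_r\ge 0$ to discard the $\tilde{\pi}$ term.

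Combining the two estimates, the gap $\mathrm{OPT}(\mathcal{M}_j)-V_r^{\mathcal{M}_j}(\pi_i^{*})$ is at most $\lambda\,\mathrm{OPT}(\mathcal{M}_j)+O(L\varepsilon)$; bounding $\mathrm{OPT}(\mathcal{M}_j)\le (1-\gamma)^{-1}$ and $\lambda\le 2L\varepsilon/\xi$ yields, up to the bookkeeping of constants, the stated bound $2\varepsilon L\big(1+\tfrac{2}{\xi(1-\gamma)}\big)$. The hypothesis $2L\varepsilon\le\xi$ enters precisely here, guaranteeing $\lambda\le\tfrac12$ so that the mixing weight is admissible and the reward loss stays controlled. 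I expect the main obstacle to be exactly this feasibility-restoration step: naively transferring $\pi_j^{*}$ to $\mathcal{M}_i$ only gives relaxed feasibility, and the whole difficulty is converting that relaxation into a small, explicitly-bounded reward loss through the Slater mixture — this is where the $\xi^{-1}(1-\gamma)^{-1}$ factor unavoidably appears, reflecting the sensitivity of the constrained optimal value to the Slater margin $\xi$.
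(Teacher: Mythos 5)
Your proof is correct, and it reaches the stated bound: your constants work out to roughly $4L\varepsilon+\tfrac{2L\varepsilon}{\xi(1-\gamma)}$, which is dominated by the target $2L\varepsilon+\tfrac{4L\varepsilon}{\xi(1-\gamma)}$ since $\xi\le(1-\gamma)^{-1}$. However, the key step is genuinely different from the paper's. The paper introduces a surrogate CMDP for $\mathcal{M}_j$ whose constraint threshold is relaxed to $-L\varepsilon$, notes that the transferred policy $\pi_i^*$ is admissible for this surrogate, and then invokes an external sensitivity result for constrained optimization over occupancy measures (a Lagrangian perturbation bound under Slater's condition) to control the increase of the optimal value under constraint relaxation by $\tfrac{4L\varepsilon}{\xi(1-\gamma)}$; it then symmetrizes to get a two-sided interval for $V_r^{\mathcal{M}_j}(\pi_i^*)$ around $V_r^{\mathcal{M}_j}(\pi_j^*)$. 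You instead restore feasibility constructively, mixing the transferred policy with the simultaneous Slater policy $\tilde{\pi}$ of Assumption~\ref{ass_slater} and exploiting linearity of the values in the mixture weight $\lambda=\tfrac{2L\varepsilon}{\xi+2L\varepsilon}$. This is more elementary and self-contained --- no duality or perturbation lemma is needed --- and it makes transparent where the $\xi^{-1}(1-\gamma)^{-1}$ factor comes from: the mixing weight $\lambda\approx 2L\varepsilon/\xi$ times the reward scale $(1-\gamma)^{-1}$. The two routes are essentially equivalent in power, since the cited perturbation lemma is itself typically proved by exactly such a Slater mixing or by bounding the optimal multiplier by $\xi^{-1}(1-\gamma)^{-1}$. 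Two small remarks: (i) you correctly use $d(\mathcal{M}_i,\mathcal{M}_j)\le 2\varepsilon$ via the triangle inequality together with Lemma~\ref{lemma_smooth}, whereas the paper's displayed steps use $L\varepsilon$ where only $2L\varepsilon$ is justified, so your bookkeeping is the safer one; (ii) the downstream application of this lemma actually uses the two-sided bound $\bigl|V_r^{\mathcal{M}_j}(\pi_i^*)-V_r^{\mathcal{M}_j}(\pi_j^*)\bigr|$, and your one-sided argument extends to the upper side by the identical mixing applied to $\pi_i^*$ inside $\mathcal{M}_j$, which gives $\mathrm{OPT}(\mathcal{M}_j)\ge(1-\lambda)\,V_r^{\mathcal{M}_j}(\pi_i^*)$, so nothing essential is missing.
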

Proof of Lemma \ref{lemma_optimal} is in Appendix \ref{app_lemma_optimal}. Since Inequality \eqref{eq_cmdpinside} implies that for any $\mc M \sim \mc D$, there exists $\mc M_i\in \mc U$ such that $\mc M\in B(\mc M_i,\varepsilon)$ with probability at least $1-6\delta$, and guarantees for oracles $\mathbb{O}_l$ and $\mathbb{O}_s$ hold with probability at least $1-\delta$. Together with Lemma \ref{lemma_optimal}, we have
\begin{align*}
\Pr_{\mathcal{M} \sim \mathcal{D}}\left[\mathbf{1}\left[
    \exists\, \mc M_i\in \mathcal{U} \text{ such that }
    \bigl| V_r^{\mathcal{M}}(\pi^*) - V_r^{\mathcal{M}_i}(\pi_i^*) \bigr|
    \le 2\varepsilon L\left(1+\frac{2}{\xi(1-\gamma)}\right)\right]
\right] \ge 1 - 7\delta.
\end{align*}
Meanwhile, for any $\mathcal{M} \in B(\mathcal{M}_i, \varepsilon)$, Lemma~\ref{lemma_smooth} gives
\begin{align*}
\max\Bigl\{
\bigl| V_r^{\mathcal{M}}(\pi_i) - u_i \bigr|,\ 
\bigl| V_c^{\mathcal{M}}(\pi_i) - v_i \bigr|,\ 
\bigl| V_r^{\mathcal{M}}(\pi_s) - u_{i,s} \bigr|,\ 
\bigl| V_c^{\mathcal{M}}(\pi_s) - v_{i,s} \bigr|
\Bigr\}
\le \varepsilon L .
\end{align*}

Combining these results, we have that: with probability at least $1-7\delta$, for any $\mathcal{M}\sim\mathcal{D}$ there exists  
$(\pi_i, u_i, v_i, u_{i,s}, v_{i,s}) \in \hat{\mc U}$ such that
\begin{align*}
&\max\Bigl\{
\bigl| V_r^{\mathcal{M}}(\pi_i) - u_i \bigr|,\ 
\bigl| V_c^{\mathcal{M}}(\pi_i) - v_i \bigr|,\ 
\bigl| V_r^{\mathcal{M}}(\pi_s) - u_{i,s} \bigr|,\ 
\bigl| V_c^{\mathcal{M}}(\pi_s) - v_{i,s} \bigr|
\Bigr\}
\le \varepsilon L, \\[0.5em]
&\bigl|u_i - V_r^{\mathcal{M}}(\pi^*)\bigr|
\le 
\underbrace{\bigl| u_i - V_r^{\mathcal{M}_i}(\pi_i^*) \bigr|}_{\le \varepsilon L}
+
\underbrace{\bigl| V_r^{\mathcal{M}_i}(\pi_i^*) - V_r^{\mathcal{M}}(\pi^*) \bigr|}_{\le 2\varepsilon L\left(1+\frac{2}{\xi(1-\gamma)}\right)}\le 4\varepsilon L\!\left(1 + \frac{1}{\xi (1-\gamma)}\right).
\end{align*}
This completes the proof.
\end{proof}

\subsection{Proof of Auxiliary Lemmas} \label{appendix:proofofAuxiliary Lemmas}
In this section, we provide the proofs of Lemmas~\ref{lemma:pretraing_prob}, \ref{lemma:size_of_u}, and \ref{lemma_optimal}, which serve as auxiliary results for proving Lemma~\ref{lemma: pretraining bound}.
\subsubsection{Proof of Lemma~\ref{lemma:pretraing_prob}}\label{app_lemma:pretraing_prob}
\begin{proof}[Proof of Lemma~\ref{lemma:pretraing_prob}]
Since $\mathcal{M}_i \sim \mathcal{D}$, the expectation of the indicator random variable 
$\mathbf{1}\!\left[\mathcal{M}_i \in \Omega^*\right]$ equals 
$\Pr_{\mathcal{M} \sim \mathcal{D}}[\mathbf{1}[\mathcal{M} \in \Omega^*]]$.  
By the Chernoff bound,
\[
\Pr\!\left[\left|\frac{1}{N} \sum_{i=1}^N \mathbf{1}\!\left[\mathcal{M}_i \in \Omega^*\right] 
- \Pr_{\mathcal{M} \sim \mathcal{D}}[\mathbf{1}[\mathcal{M} \in \Omega^*]]\right| > \delta\right]
< \exp(-2N\delta^2) < \delta .
\]

Next, for any pair $i,j \in [N]$, the event \(\mathbb{O}_c(\mathcal{M}_i,\mathcal{M}_j,\varepsilon,\delta/N^2)
\neq \mathbf{1}\!\left(\mathcal{M}_i \in B(\mathcal{M}_j,\varepsilon)\right)\) occurs with probability at most $\delta/N^2$.  
Applying a union bound over all pairs $(i,j)$, we obtain that with probability at least $1-\delta$,
\[
\mathbb{O}_c(\mathcal{M}_i,\mathcal{M}_j,\varepsilon,\delta/N^2)
= \mathbf{1}\!\left(\mathcal{M}_i \in B(\mathcal{M}_j,\varepsilon)\right),
\qquad \forall\, i,j \in [N].
\]

Now fix any index set $\mathcal{U}' \subset [N]$ and let 
$\mathcal{U}^c = [N] \setminus \mathcal{U}'$.  
For any CMDPs $\mathcal{M}, \mathcal{M}' \in \Omega$, define
\[
\chi(\mathcal{M}, \mathcal{M}') 
:= \mathbb{O}_c(\mathcal{M}, \mathcal{M}', \varepsilon, \delta/N^2).
\]
Note that $\mathbf{A}_{i,j} = \chi(\mathcal{M}_i, \mathcal{M}_j)$. For a fixed index set $\mathcal{U}'$, $\{\mathcal{M}_i\}_{i \in \mathcal{U}^c}$ are i.i.d. samples from $\mathcal{D}$. By the Chernoff bound, with probability at least 
$1 - \frac{\delta}{(2N)^{|\mathcal{U}'|}}$, we have
\begin{align}
\label{lemma:uprime_concentration}
\frac{1}{|\mathcal{U}^c|} \sum_{i \in \mathcal{U}^c} \max_{j \in \mathcal{U}'} \mathbf{A}_{i,j}
\le 
\mathbb{E}_{\mathcal{M} \sim \mathcal{D}}\!\left[
\max_{j \in \mathcal{U}'} \chi(\mathcal{M}, \mathcal{M}_j)
\right]
+ \sqrt{\frac{|\mathcal{U}'| \ln \frac{2N}{\delta}}{|\mathcal{U}^c|}} .
\end{align}
On the other hand,
\begin{align*}
\mathbb{E}_{\mathcal{M} \sim \mathcal{D}}\!\left[
\max_{j \in \mathcal{U}'} \chi(\mathcal{M}, \mathcal{M}_j)
\right]
= \int_{\Omega} \mathcal{P}(d\mathcal{M})\,
\Pr\!\left[
\exists j \in \mathcal{U}' :
\mathbb{O}_c(\mathcal{M}, \mathcal{M}_j, \varepsilon, \delta/N^2) = 1
\right].
\end{align*}
With probability at least $1-\delta$, for all $j \in \mathcal{U}'$,
\[
\mathbb{O}_c(\mathcal{M}, \mathcal{M}_j, \varepsilon, \delta/N^2)
= \mathbf{1}(\mathcal{M} \in B(\mathcal{M}_j, \varepsilon)).
\]
Therefore,
\begin{align}
\label{lemma:uprime_finalprob}
\mathbb{E}_{\mathcal{M} \sim \mathcal{D}}\!\left[
\max_{j \in \mathcal{U}'} \chi(\mathcal{M}, \mathcal{M}_j)
\right]
\le 
\delta 
+ \Pr_{\mathcal{M} \sim \mathcal{D}}\!\left[
\exists j \in \mathcal{U}' : \mathcal{M} \in B(\mathcal{M}_j, \varepsilon)
\right].
\end{align}
Combining Inequalities \eqref{lemma:uprime_concentration} and \eqref{lemma:uprime_finalprob}, and applying a union bound over all $\mathcal{U}' \subset [N]$, we obtain that with probability at least
\[
1 - \sum_{l=1}^N \frac{\delta}{(2N)^l} \binom{N}{l}
\ge 1 - \sum_{l=1}^N \frac{\delta}{(2N)^l} N^l
\ge 1 - \delta ,
\]
the following holds for all $\mathcal{U}' \subset [N]$:
\begin{align*}
\Pr_{\mathcal{M} \sim \mathcal{D}}\!\left[
\exists j \in \mathcal{U}' : \mathcal{M} \in B(\mathcal{M}_j, \varepsilon)
\right]
\ge 
\frac{1}{N - |\mathcal{U}'|}
\sum_{i \in [N] \setminus \mathcal{U}'}
\max_{j \in \mathcal{U}'} \mathbf{A}_{i,j}
- 2\delta 
- \sqrt{\frac{|\mathcal{U}'| \ln (2N/\delta)}{N - |\mathcal{U}'|}} .
\end{align*}
\end{proof}
\subsubsection{Proof of Lemma~\ref{lemma:size_of_u}}\label{app_lemma:size_of_u}
\begin{proof}[Proof of Lemma \ref{lemma:size_of_u}]
At iteration \(t\), define \(N_{\mathcal{M}, t}
= \sum_{i \in \mathcal{T}_t} \mathbf{1}\!\left[\mathcal{M}_i \in B(\mathcal{M}, \varepsilon)\right]\) as the number of CMDPs in the current set \(\mathcal{T}_t\) that fall inside the \(\varepsilon\)-neighborhood of \(\mathcal{M}\). We further define \(\hat{C} \triangleq \sum_{i=1}^N \mathbf{1}\!\left[\mathcal{M}_i \in \Omega^*\right]\). Since \(\Omega^* = \bigcup_{i=1}^{C_\varepsilon(\mathcal{D}, \delta)} B(\mathcal{M}_i^*, \varepsilon)\), it follows that \(\hat{C} \le \sum_{i=1}^{C_\varepsilon(\mathcal{D}, \delta)} N_{\mathcal{M}_i^*, 0}\). By the first event in Lemma~\ref{lemma:pretraing_prob}, we have
\[
\left| \frac{\hat{C}}{N} - \Pr_{\mathcal{M} \sim \mathcal{D}}[\mathbf{1}[\mathcal{M} \in \Omega^*]] \right|
\le \delta,
\]
and since
\(
\Pr_{\mathcal{M} \sim \mathcal{D}}[\mathbf{1}[\mathcal{M} \in \Omega^*]] \ge 1 - \delta,
\)
this implies
\(
\frac{\hat{C}}{N} \ge 1 - 2\delta.
\)
Consequently,
\begin{align}
\sum_{i=1}^{C_\varepsilon(\mathcal{D}, \delta)} N_{\mathcal{M}_i^*, 0}
\;\ge\; N(1 - 2\delta).
\label{eq_numberofCMDP}
\end{align}
Note that \(\mathcal{U}\) is generated by greedily selecting the CMDP that covers the largest number of remaining CMDPs in \(\mathcal{T}_{t-1}\). Moreover, since the second event in Lemma~\ref{lemma:pretraing_prob} holds, we have that if \(\mathcal{M}_i \in B(\mathcal{M}_j, \varepsilon)\),
then \(\mathcal{M}_j \in B(\mathcal{M}_i, \varepsilon)\), implying\(\mathbf{A}_{i,j} = \mathbf{A}_{j,i} = 1\). For each step \(t\) and any \(\mathcal{M}_j^* \in \Omega^*\) where \(j \in [C_\varepsilon(\mathcal{D}, \delta)]\), if \(\mathcal{M}_i=\mathcal{M}_j^*\) for some \(i \in \mathcal{T}_{t-1}\), then in this step we have
\[
\sum_{i' \in \mathcal{T}_{t-1}} \mathbf{A}_{i', i}
\ge N_{\mathcal{M}_j^*, t-1}.
\]

Since we select \(\mathcal{M}_{j_t}\) according to the greedy rule, it must hold that
\[
\sum_{i' \in \mathcal{T}_{t-1}} \mathbf{A}_{i', j_t}
\ge \sum_{i' \in \mathcal{T}_{t-1}} \mathbf{A}_{i', i}
\ge N_{\mathcal{M}_j^*, t-1},
\qquad \forall j \in [C_\varepsilon(\mathcal{D}, \delta)].
\]

Summing over all \(\mathcal{M}_j^*\), we obtain
\begin{align*}
\sum_{i' \in \mathcal{T}_{t-1}} \mathbf{A}_{i', j_t}
&\ge \frac{1}{C_\varepsilon(\mathcal{D}, \delta)}
\sum_{j=1}^{C_\varepsilon(\mathcal{D}, \delta)} N_{\mathcal{M}_j^*, t-1} \\
&= \frac{1}{C_\varepsilon(\mathcal{D}, \delta)}
\Bigg(
\sum_{j=1}^{C_\varepsilon(\mathcal{D}, \delta)} N_{\mathcal{M}_j^*, 0}
- \sum_{\mathcal{M} \in \Omega^*}
\bigl(N_{\mathcal{M}_j^*, 0} - N_{\mathcal{M}_j^*, t-1}\bigr)
\Bigg) \\
&\overset{(i)}{\ge}
\frac{1}{C_\varepsilon(\mathcal{D}, \delta)}
\Big(
\sum_{j=1}^{C_\varepsilon(\mathcal{D}, \delta)} N_{\mathcal{M}_j^*, 0}
- (N - |\mathcal{T}_t|)
\Big) \\
&\overset{(ii)}{\ge}
\frac{1}{C_\varepsilon(\mathcal{D}, \delta)}
\big(|\mathcal{T}_t| - 2\delta N\big),
\end{align*}
where  
\((i)\) follows since the total number
\(
\sum_{\mathcal{M} \in \Omega^*}
(N_{\mathcal{M}_j^*, 0} - N_{\mathcal{M}_j^*, t-1})
\)
is upper bounded by the number of CMDPs covered in the first \(t\) rounds, and  
\((ii)\) uses Inequality~\eqref{eq_numberofCMDP}. By definition,
\(
\sum_{i' \in \mathcal{T}_{t-1}} \mathbf{A}_{i', j_t}
= |\mathcal{T}_{t-1}| - |\mathcal{T}_t|,
\)
so the above inequality can be written as
\begin{align}
|\mathcal{T}_t| - 2\delta N
&\le \frac{C_\varepsilon(\mathcal{D}, \delta)}{C_\varepsilon(\mathcal{D}, \delta)+1} \big(|\mathcal{T}_{t-1}| - 2\delta N\big). \label{eq_recursive}
\end{align}
If there exists some $t < (C_\varepsilon(\mathcal{D}, \delta)+1)\ln(1/\delta)$ such that $|\mathcal{T}_t| \le 2\delta N$, then the algorithm terminates and
\[
|\mathcal{U}| = |\mathcal{U}_t| \le (C_\varepsilon(\mathcal{D}, \delta)+1)\ln \tfrac{1}{\delta}.
\]
Otherwise, for all such $t$ we have $|\mathcal{T}_t| > 2\delta N$, and applying Inequality \eqref{eq_recursive} recursively gives
\[
|\mathcal{T}_t| \le 2\delta N + N(1-2\delta)\left(\frac{C_\varepsilon(\mathcal{D}, \delta)}{C_\varepsilon(\mathcal{D}, \delta)+1}\right)^t 
\le 2\delta N + N(1-2\delta) e^{-t/(C_\varepsilon(\mathcal{D}, \delta)+1)} \le 3\delta N.
\]
Hence, upon termination, the size of $\mathcal{U}$ satisfies
\[
|\mathcal{U}| = |\mathcal{U}_t| \le (C_\varepsilon(\mathcal{D}, \delta)+1)\ln \tfrac{1}{\delta}.
\]
\end{proof}
\subsubsection{Proof of Lemma~\ref{lemma_optimal}}\label{app_lemma_optimal}
\begin{proof}[Proof of Lemma \ref{lemma_optimal}]
We first consider a surrogate CMDP associated with $\mathcal{M}_j$, in which the threshold for constraint satisfaction is relaxed as follows:
\begin{align}
    \max_{\pi\in\Pi} \; V_r^{\mathcal{M}_j}(\pi), 
    \quad \text{s.t. } V_c^{\mathcal{M}_j}(\pi) \ge -L\varepsilon.\label{problem}
\end{align}
Let $\hat{\pi}$ denote an optimal policy of this relaxed CMDP.  
Since the above problem is a constrained optimization problem over the state and action occupancy space \cite{altman1999constrained}, we can apply standard continuity results for optimal objective values in convex optimization.  
Because Assumption~\ref{ass_slater} holds with $2L\varepsilon \le \xi$, \citep[Lemma~14]{tian2024confident} implies
\begin{align}
   V_{r}^{\mathcal{M}_j}(\hat{\pi})
   \le 
   V_{r}^{\mathcal{M}_j}(\pi_j^*) 
   + \frac{4L\varepsilon}{\xi(1-\gamma)}.
   \label{eq_gap1}
\end{align}

By Lemma~\ref{lemma_smooth}, the optimal policy $\pi_i^{*}$ of $\mathcal{M}_i$ satisfies
\begin{align*}
    V_c^{\mathcal{M}_j}(\pi_i^*) 
    \ge 
    V_c^{\mathcal{M}_i}(\pi_i^*) - L\varepsilon
    \ge -L\varepsilon,
\end{align*}
which implies that $\pi_i^{*}$ satisfies the constraint defined in problem \eqref{problem}. Therefore,
\begin{align}
    V_r^{\mathcal{M}_j}(\pi_i^*)
    \le 
    V_{r}^{\mathcal{M}_j}(\hat{\pi}).
    \label{eq_gap2}
\end{align}

Combining Inequalities \eqref{eq_gap1} and \eqref{eq_gap2}, we obtain
\begin{align}
    V_r^{\mathcal{M}_j}(\pi_i^*)
    \le 
    V_{r}^{\mathcal{M}_j}(\pi_j^*)
    + \frac{4L\varepsilon}{\xi(1-\gamma)}.
    \label{eq_upper}
\end{align}

Similarly, we can upper bound the reward value of $\pi_j^*$ in $\mathcal{M}_i$ as
\begin{align}
    V_r^{\mathcal{M}_i}(\pi_j^*)
    \le 
    V_{r}^{\mathcal{M}_i}(\pi_i^*)
    + \frac{4L\varepsilon}{\xi(1-\gamma)}.
    \label{eq_sym}
\end{align}

Applying Lemma~\ref{lemma_smooth} again to bound the reward value of $\pi_i^*$ and $\pi_j^*$ yields
\begin{align}
    V_r^{\mathcal{M}_j}(\pi_i^*)
    \ge 
    V_r^{\mathcal{M}_i}(\pi_i^*) - L\varepsilon,\, V_r^{\mathcal{M}_j}(\pi_j^*)
    \ge 
    V_r^{\mathcal{M}_i}(\pi_j^*) - L\varepsilon.
    \label{eq_lower}
\end{align}

Combining Inequalities \eqref{eq_sym} and \eqref{eq_lower}, we obtain
\begin{align}
    V_r^{\mathcal{M}_j}(\pi_i^*)\ge V_r^{\mathcal{M}_i}(\pi_i^*)-L\varepsilon \ge  V_{r}^{\mathcal{M}_i}(\pi_j^*)- \frac{2L\varepsilon}{\xi(1-\gamma)} - L\varepsilon\ge  V_{r}^{\mathcal{M}_j}(\pi_j^*)- \frac{4L\varepsilon}{\xi(1-\gamma)} - 2L\varepsilon.
    \label{eq_final_lower}
\end{align}
Putting Inequalities \eqref{eq_upper} and \eqref{eq_final_lower} together, we conclude that
\begin{align*}
    V_r^{\mathcal{M}_j}(\pi_i^*)
    \in 
    \Big[
    V_{r}^{\mathcal{M}_j}(\pi_j^*)
    - \frac{4L\varepsilon}{\xi(1-\gamma)} - 2L\varepsilon,\;
    V_{r}^{\mathcal{M}_j}(\pi_j^*)
    + \frac{4L\varepsilon}{\xi(1-\gamma)}
    \Big],
\end{align*}
which implies that $\pi_i^{*}$ is 
$2L\varepsilon\!\left(1+\frac{2}{\xi(1-\gamma)}\right)$-optimal in $\mathcal{M}_j$.
\end{proof}
\section{Proof of Lemma \ref{lemma_safe}}\label{app_test}
\begin{proof}[Proof of Lemma \ref{lemma_safe}]
In Lemma~\ref{lemma: pretraining bound}, we show that for any $\mathcal{M} \sim \mathcal{D}$, with probability at least \(1 - 25\delta -9\delta\ln C_\varepsilon(\mathcal{D},\delta)\), there exists a tuple \((\pi, u, v, u_s, v_s)\in \hat{\Pi}\) such that
\begin{align}
&\max\Bigl\{
\bigl| V_r^{\mathcal{M}}(\pi) - u \bigr|,\ 
\bigl| V_c^{\mathcal{M}}(\pi) - v \bigr|,\ 
\bigl| V_r^{\mathcal{M}}(\pi_s) - u_{s} \bigr|,\ 
\bigl| V_c^{\mathcal{M}}(\pi_s) - v_{s} \bigr|
\Bigr\}
\le \varepsilon L, \label{eq_opti} \\
&\bigl|u - V_r^{\mathcal{M}}(\pi^*)\bigr| \le 4\varepsilon L \!\left(1 + \frac{1}{\xi (1-\gamma)}\right). \label{eq_optimalgap}
\end{align}
In the following analysis, we focus on the case where the events of Lemma~\ref{lemma: pretraining bound} hold and show that the tuple \((\pi, u, v, u_s, v_s)\) corresponding to sampled CMDP $\mathcal{M}$ is preserved in the policy-value set \(\hat{\mc U}\) with high probability.

\paragraph{Step 1: Concentration of empirical reward and constraint estimates.}
Conditioned on the sampled CMDP $\mathcal{M}$, consider a fixed iteration $k \in [K]$ and the corresponding mixture coefficient $\alpha_{l,m}$. By \citep[Proposition 3.3]{ni2023safe}, and setting the truncated horizon 
\[
H = \mathcal{O}\!\left(\frac{\ln \varepsilon^{-1}}{1-\gamma}\right),
\] 
we obtain that for the policy $\pi_{l,m}$ employed in Algorithm~\ref{alg:test_stage} at iteration $k \in [K]$,
\begin{align*}
\Pr\!\Bigg(\Big|\frac{1}{k-k_0+1}\sum_{\tau = k_0}^{k} R_\tau 
- \big[\alpha_{l,m} V_r^{\mathcal{M}}(\pi_l) + (1-\alpha_{l,m})V_r^{\mathcal{M}}(\pi_s)\big]\Big| 
\ge \varepsilon + \sqrt{\frac{2\ln(4K/\delta)}{(k-k_0+1)(1-\gamma)^2}}\Bigg) 
&\le \frac{\delta}{2K}, \\
\Pr\!\Bigg(\Big|\frac{1}{k-k_0+1}\sum_{\tau = k_0}^{k} C_\tau 
- \big[\alpha_{l,m} V_c^{\mathcal{M}}(\pi_l) + (1-\alpha_{l,m})V_c^{\mathcal{M}}(\pi_s)\big]\Big| 
\ge \varepsilon + \sqrt{\frac{2\ln(4K/\delta)}{(k-k_0+1)(1-\gamma)^2}}\Bigg) 
&\le \frac{\delta}{2K}.
\end{align*}

Applying a union bound over all $k \in [K]$, we obtain that with probability at least $1 - \delta$, for all $k \in [K]$,
\begin{align}
\Big|\frac{1}{k-k_0+1}\sum_{\tau = k_0}^{k} R_\tau 
- \big[\alpha_{l,m} V_r^{\mathcal{M}}(\pi_l) + (1-\alpha_{l,m})V_r^{\mathcal{M}}(\pi_s)\big]\Big| 
&< \varepsilon + \sqrt{\frac{2\ln(4K/\delta)}{(k-k_0+1)(1-\gamma)^2}}, \label{eq:empirical_bound_5}\\
\Big|\frac{1}{k-k_0+1}\sum_{\tau = k_0}^{k} C_\tau 
- \big[\alpha_{l,m} V_c^{\mathcal{M}}(\pi_l) + (1-\alpha_{l,m})V_c^{\mathcal{M}}(\pi_s)\big]\Big| 
&< \varepsilon + \sqrt{\frac{2\ln(4K/\delta)}{(k-k_0+1)(1-\gamma)^2}}. \label{eq:empirical_bound}
\end{align}
\paragraph{Step 2: Relating estimation with true values.} For the sub-optimal policy $\pi$ corresponding to the sampled CMDP $\mathcal{M}$, Inequality~\eqref{eq_opti} guarantees that the estimated and true value pairs differ by at most
\begin{align}
\Big|\alpha V_r^{\mathcal{M}}(\pi) + (1-\alpha)V_r^{\mathcal{M}}(\pi_s)
- \big[\alpha u + (1-\alpha)u_{s}\big]\Big| 
&\le \varepsilon L, \nonumber\\
\Big|\alpha V_c^{\mathcal{M}}(\pi) + (1-\alpha)V_c^{\mathcal{M}}(\pi_s)
- \big[\alpha v + (1-\alpha)v_{s}\big]\Big|
&\le \varepsilon L, \label{eq_safeq}
\end{align}
for any $\alpha\in[0,1]$. Combining the above with the concentration bounds in 
Inequalities~\eqref{eq:empirical_bound_5} and \eqref{eq:empirical_bound}, 
we obtain that, with probability at least $1 - \delta$,
\begin{align*}
\Bigg|\frac{1}{k-k_0+1}\sum_{\tau = k_0}^{k} R_\tau 
- \big[\alpha_{l',m'} u + (1-\alpha_{l',m'})u_{s}\big]\Bigg|
&\le \sqrt{\frac{2\ln(4K/\delta)}{(k-k_0+1)(1-\gamma)^2}} + \varepsilon (L+1),\\
\Bigg|\frac{1}{k-k_0+1}\sum_{\tau = k_0}^{k} C_\tau 
- \big[\alpha_{l',m'} v + (1-\alpha_{l',m'})v_{s}\big]\Bigg|
&\le \sqrt{\frac{2\ln(4K/\delta)}{(k-k_0+1)(1-\gamma)^2}} + \varepsilon (L+1),
\end{align*}
where $\alpha_{l',m'}$ is the weight when policy $\pi$ selected for constructing the mixture policy.

\paragraph{Step 3: Non-elimination of the optimal tuple.} By the elimination rule (line 7 of Algorithm~\ref{alg:test_stage}), the above inequalities ensure that the tuple \((\pi, u, v, u_{s}, v_{s})\) will \emph{not} be eliminated from $\hat{\mc U}$ with probability at least $1-\delta$. Combining Inequality~\eqref{eq_optimalgap} with the fact that, in each iteration of Algorithm~\ref{alg:test_stage}, $u_l = \max_{(\pi,u,v,u_s,v_s)\in \hat{\mc U}} u$, we obtain
\[
u_l \ge V_r^{\mathcal{M}}(\pi^*) - 4\varepsilon L \!\left(1 + \frac{1}{\xi(1-\gamma)}\right).
\]
Next, we show that Algorithm~\ref{alg:test_stage} ensures safe exploration. Under the condition in line 11 of Algorithm~\ref{alg:test_stage}, together with the elimination rule in line 7, we update the weights $\alpha_{l,m}$ if and only if
\begin{align}
    \left|\frac{1}{k-k_0+1}\sum_{\tau=k_0}^{k} C_\tau - v_{l,m} \right| 
    \le \frac{v_{l,m}}{4} + \varepsilon (L+1), \nonumber
\end{align}
where $k-k_0+1 = \frac{32\ln(4K/\delta)}{(1-\gamma)^2\nu_{l,m}^2}$. Combining the above inequality with Inequality~\eqref{eq:empirical_bound}, we have
\begin{align}
    \alpha_{l,m} V_c^{\mathcal{M}}(\pi) + (1-\alpha_{l,m})V_c^{\mathcal{M}}(\pi_s)
\in \Big[\frac{v_{l,m}}{2} - \varepsilon (L+2),\; \frac{3v_{l,m}}{2} + \varepsilon (L+2)\Big].\label{eq_induction}
\end{align}
Here, we will prove safe exploration by induction on $m \in [0, m(l)]$ for any fixed $l$.

\paragraph{Base case ($m=0,1$).}
For $m=0$, $\pi_{l,0} = \pi_s$ is $(\xi - L\varepsilon)$-feasible by Lemma~\ref{lemma: pretraining bound}. By Inequality~\eqref{eq_induction} with $m = 0$, we have
\begin{align}
\alpha_{l,0} V_c^{\mathcal{M}}(\pi) + (1-\alpha_{l,0}) V_c^{\mathcal{M}}(\pi_s) 
= V_c^{\mathcal{M}}(\pi_s) 
\in \Big[\frac{v_{l,s}}{2} - \varepsilon (L+2),\; \frac{3v_{l,s}}{2} + \varepsilon (L+2)\Big]. \label{eq_safe0}
\end{align} 
Therefore, the feasibility of the next mixture policy $\pi_{l,1}$ follows from
\[
V_c^{\mathcal{M}}(\pi_{l,1})=\alpha_{l,1} V_c^{\mathcal{M}}(\pi_l) + (1-\alpha_{l,1}) V_c^{\mathcal{M}}(\pi_s) 
\ge (1-\alpha_{l,1}) \!\left(\frac{v_{l,s}}{2} - \varepsilon (L+2)\right) - \frac{\alpha_{l,1}}{1-\gamma} \ge 0,
\]
where the first inequality follows from the boundedness of $V_c^{\mathcal{M}}(\pi_l)$, and the second inequality follows by setting 
\[
\alpha_{l,1} = \frac{v_{l,s} - 2\varepsilon (L+2)}{v_{l,s} - 2 \varepsilon (L+2) + 2/(1-\gamma)}.
\]

\paragraph{Inductive step.} By induction, $\pi_{l,m}$ is feasible for $m\ge 1$. Now, we will show that $\pi_{l,m+1}$ is also feasible. By Inequality~\eqref{eq_induction}, we have
\begin{align}
\alpha_{l,m} V_c^{\mathcal{M}}(\pi) + (1-\alpha_{l,m}) V_c^{\mathcal{M}}(\pi_s)
\ge \frac{v_{l,m}}{2} - \varepsilon (L+2)
\ge \frac{(1-\alpha_{l,m})v_{l,s}}{2} - \left(L + \frac{5}{2}\right)\varepsilon,
\end{align}
where the last inequality follows from $v_{l,m} = \alpha_{l,m} v_l + (1-\alpha_{l,m})v_{l,s}$ and the fact that $v_l \ge -\varepsilon$ by construction from Algorithm~\ref{alg_train}. Combining the above with Inequality~\eqref{eq_safe0}, we further obtain
\[
\alpha_{l,m} V_c^{\mathcal{M}}(\pi) - \alpha_{l,m} V_c^{\mathcal{M}}(\pi_s)
\ge -\frac{(2+\alpha_{l,m})v_{l,s}}{2} - \left(2L + \frac{9}{2}\right)\varepsilon.
\]
Finally, the feasibility of the  mixture policy $\pi_{l,m+1}$ follows from
\begin{align*}
V_c^{\mathcal{M}}(\pi_{l,m+1}) 
&= \alpha_{l,m+1} V_c^{\mathcal{M}}(\pi_l) + (1-\alpha_{l,m+1}) V_c^{\mathcal{M}}(\pi_s) \\
&= \alpha_{l,m} V_c^{\mathcal{M}}(\pi_l) + (1-\alpha_{l,m}) V_c^{\mathcal{M}}(\pi_s)
 + (\alpha_{l,m+1}-\alpha_{l,m}) \big[V_c^{\mathcal{M}}(\pi_l) - V_c^{\mathcal{M}}(\pi_s)\big] \\
&\ge \frac{(1-\alpha_{l,m})v_{l,s}}{2} - \left(L + \frac{5}{2}\right)\varepsilon 
 - \left(\frac{\alpha_{l,m+1}}{\alpha_{l,m}} - 1\right) \left(\frac{(2+\alpha_{l,m})v_{l,s}}{2} + \left(2L + \frac{9}{2}\right)\varepsilon \right) \\
&\ge 0,
\end{align*}
where the last inequality holds since
\[
\frac{\alpha_{l,m+1}}{\alpha_{l,m}} 
= \frac{3v_{l,s}}{(2+\alpha_{l,m})v_{l,s} + (4L+9)\varepsilon} 
\le \frac{3v_{l,s} + (2L+4)\varepsilon}{(2+\alpha_{l,m})v_{l,s} + (4L+9)\varepsilon}.
\]
Hence, $\pi_{l,m+1}$ remains feasible in $\mathcal{M}$. By induction, all mixture policies $\pi_{l,m}$ generated during testing satisfy the constraint with probability at least $1 - \delta$. Thus, safe exploration is guaranteed throughout the testing phase with high probability.
\end{proof}

\section{Proof of Theorem~\ref{thm:test_stage_final}} 
\label{app_proof_thm}
\begin{proof}[Proof of Theorem~\ref{thm:test_stage_final}]
Lemma~\ref{lemma_safe} shows that, with high probability, the following hold:
\begin{enumerate}
    \item Safe exploration is ensured during testing.
    \item At every iteration, the value estimate satisfies \(u_l \ge V_r^{\mathcal{M}}(\pi^*) - 4\varepsilon L\!\left(1 + \xi^{-1}(1-\gamma)^{-1}\right)\).
\end{enumerate}
In the following analysis, we condition on these events. Since (i) guarantees safe exploration, we focus on bounding the reward regret.

Algorithm \ref{app_test} considers a sequence of policies $\{\pi_l\}_{l=1}^{E} \subset \hat{\mc U}$, where $E \le |\hat{\mc U}|\le 2\mc C_{\varepsilon}(\mc D,\delta)\ln\frac{1}{\delta}$. For each $\pi_l$, the deployed policy $\pi_{l,m}$ is a mixture policy of $\pi_l$ and $\pi_s$ with mixture weight $\alpha_{l,m}$, where $m \in [0, m(l)]$. The value $m(l)$ is determined by the update rule in Algorithm~\ref{alg:test_stage}, which satisfies
\[
m(l) \le \left\lceil \log_{\frac{2v_{l,s} + (4L+5)\varepsilon}{3v_{l,s}}}{\varepsilon} \right\rceil.
\]
We denote by $\tau_{l,m}$ the first episode in which the mixture policy $\pi_{l,m}$ is used. For notational convenience, we also define \(\tau_{l,\,m(l)+1} := \tau_{l+1,0}\), so that the iteration that stops using $\pi_l$ coincides with the starting iteration of the next policy $\pi_{l+1}$. We decompose the reward regret as
\begin{align*} 
\text{Reg}_r^{\mc M}(K)=&\sum_{l=1}^{E}\sum_{m=0}^{m(l)}\sum_{\tau=\tau_{l,m}}^{\tau_{l,m+1}-1}\left[V^{\mc M}_{r}(\pi^*) - V^{\mc M}_{r}(\pi_{l,m})\right]_+\\
=&\sum_{l=1}^{E}\sum_{m=0}^{m(l)}\sum_{\tau=\tau_{l,m}}^{\tau_{l,m+1}-1}\biggl(\underbrace{\left[V^{\mc M}_{r}(\pi^*) - u_l\right]_+}_{(i)}+\underbrace{\left|u_l - (\alpha_{l,m} u_l+(1-\alpha_{l,m} u_{l,s}))\right|}_{(ii)}+\underbrace{\left|(\alpha_{l,m} u_l+(1-\alpha_{l,m} u_{l,s})) - R_{\tau}\right|}_{(iii)}\\
&+\underbrace{\left|R_\tau-  V^{\mc M}_{r}(\pi_{l,m})\right|}_{(iv)}\biggr).
\end{align*}
\paragraph{Term (i).}  
By Lemma~\ref{lemma_safe},
\[
\sum_{l=1}^{E}\sum_{m=0}^{m(l)}\sum_{\tau=\tau_{l,m}}^{\tau_{l,m+1}-1} (i)
\le 4\varepsilon L K\!\left(1 + \frac{1}{\xi(1-\gamma)}\right).
\]
\paragraph{Calculation of weight $\alpha_{l,m}$.}
From line 12 of Algorithm~\ref{alg:test_stage}, we obtain the closed form of $\alpha_{l,m}$:
\begin{align}
\alpha_{l,0} &= 0, \qquad
\alpha_{l,1} = \frac{v_{l,s}-2\varepsilon (L+2)}{v_{l,s}-2\varepsilon (L+2) + 2/(1-\gamma)},\nonumber\\
\alpha_{l,m} &= \frac{(v_{l,s}-(4L+9)\varepsilon)\alpha_{l,1}}
{\,v_{l,s}\alpha_{l,1} + (v_{l,s}-(4L+9)\varepsilon - v_{l,s}\alpha_{l,1}) C_l^m\,},
\qquad m \in [m(l)],\label{eq_closeform}
\end{align}
where 
\[
C_l := \frac{2v_{l,s} + (4L+9)\varepsilon}{3v_{l,s}} < 1
\]
because $v_{l,s} \ge \xi > (4L+9)\varepsilon$ by assumption.  
Thus $\alpha_{l,m}$ is increasing in $m$ and upper bounded by $\lim_{m\to\infty}\alpha_{l,m} = 1 - \frac{(4L+9)\varepsilon}{v_{l,s}}$. Moreover,
\begin{align}
v_{l,m}
= \alpha_{l,m} v_l + (1-\alpha_{l,m}) v_{l,s}
\ge -\alpha_{l,m}\varepsilon + (1-\alpha_{l,m}) v_{l,s}
\ge \frac{(1-\alpha_{l,m})v_{l,s}}{2},
\label{eq_alpha_clean}
\end{align}
where the last step uses the upper bound
\[
\alpha_{l,m} 
\le 1 - \frac{(4L+9)\varepsilon}{v_{l,s}}
\le \frac{v_{l,s}}{v_{l,s}+2\varepsilon}.
\]
When $m \ge \log_{C_l}\varepsilon$, we further obtain
\begin{align}
\alpha_{l,m}\ge 1 - A_l\varepsilon, \text{where } A_l := 
\frac{(4L+9-v_{l,s})\alpha_{l,1} + v_{l,s} - (4L+9)\varepsilon}
{v_{l,s}\alpha_{l,1} + (v_{l,s}-(4L+9)\varepsilon-v_{l,s}\alpha_{l,1})\varepsilon}.
\label{eq_alpha_lower_final}
\end{align}
\paragraph{Term (ii).} We have
\begin{align*}
(ii)=&\sum_{l=1}^{E}\sum_{m=0}^{m(l)}\sum_{\tau=\tau_{l,m}}^{\tau_{l,m+1}-1}\left| u_l - (\alpha_{l,m} u_l+(1-\alpha_{l,m}) u_{l,s})\right|\\
\le& \frac{2}{1-\gamma}\sum_{l=1}^{E}\sum_{m=0}^{m(l)}(\tau_{l,m+1}-\tau_{l,m})(1-\alpha_{l,m})\\
\le & \frac{2}{1-\gamma}\sqrt{\sum_{l=1}^{E}\sum_{m=0}^{m(l)}(\tau_{l,m+1}-\tau_{l,m})}\sqrt{\sum_{l=1}^{E}\sum_{m=0}^{m(l)}(\tau_{l,m+1}-\tau_{l,m})(1-\alpha_{l,m})^2}\\
= & \frac{2\sqrt{K}}{1-\gamma}\sqrt{\sum_{l=1}^{E}\sum_{m=0}^{m(l)}(\tau_{l,m+1}-\tau_{l,m})(1-\alpha_{l,m})^2},
\end{align*}
where the first bound uses the fact that $v_l$ and $v_{l,s}$ are bounded, and the second uses the Cauchy-Schwarz inequality.  

If $m(l) \le \lfloor \log_{C_l}\varepsilon \rfloor$, then by line 11 of Algorithm~\ref{alg:test_stage} and Inequality~\eqref{eq_alpha_clean},
\begin{align}
\sum_{m=0}^{m(l)}(\tau_{l,m+1}-\tau_{l,m})(1-\alpha_{l,m})^2
&\le \sum_{m=0}^{m(l)} \frac{32\ln(4K/\delta)(1-\alpha_{l,m})^2}{(1-\gamma)^2v_{l,m}^2} \nonumber\\
&\le \sum_{m=0}^{m(l)} \frac{128\ln(4K/\delta)}{(1-\gamma)^2v_{l,s}^2} \nonumber\\
&\le \frac{128\ln(4K/\delta)\,\log_{C_l}\varepsilon}{(1-\gamma)^2v_{l,s}^2}.
\label{eq_floor_clean}
\end{align}
If instead $m(l) = \lceil \log_{C_l}\varepsilon \rceil$, then applying Inequalities~\eqref{eq_alpha_lower_final} and \eqref{eq_floor_clean} gives
\begin{align}
    &\sum_{m=0}^{\ceil{\log_{C_l} \varepsilon}}(\tau_{l,m+1}-\tau_{l,m})(1-\alpha_{l,m})^2\nonumber \\
    &\le \sum_{m=0}^{\floor{\log_{C_l} \varepsilon}}(\tau_{l,m+1}-\tau_{l,m})(1-\alpha_{l,m})^2 + (\tau_{l,\ceil{\log_{C_l} \varepsilon}+1}-\tau_{l,\ceil{\log_{C_l} \varepsilon}})(1-\alpha_{l,\ceil{\log_{C_l} \varepsilon}})^2\nonumber\\
    &\le \frac{128\ln (4K/\delta)\log_{C_l} \varepsilon}{(1-\gamma)^2v_{l,s}^2}+ (\tau_{l,\ceil{\log_{C_l} \varepsilon}+1}-\tau_{l,\ceil{\log_{C_l} \varepsilon}})A_l^2\varepsilon^2.
\label{eq_ceil_clean}
\end{align}

Combining Inequalities~\eqref{eq_floor_clean} and \eqref{eq_ceil_clean}, we obtain
\begin{align}
(ii)&\le \frac{2\sqrt{K}}{1-\gamma}\sqrt{\sum_{l=1}^{E}\frac{128\ln (4K/\delta)\log_{C_l} \varepsilon}{(1-\gamma)v_{l,s}^2}+ (\tau_{l,\ceil{\log_{C_l} \varepsilon}+1}-\tau_{l,\ceil{\log_{C_l} \varepsilon}})A_l^2\varepsilon^2}\nonumber\\
&\le\frac{2\sqrt{K}}{1-\gamma}\left(\sqrt{\sum_{l=1}^{E}\frac{128\ln (4K/\delta)\log_{C_l} \varepsilon}{(1-\gamma)^2v_{l,s}^2}}+\sqrt{ \sum_{l=1}^{E}(\tau_{l,\ceil{\log_{C_l} \varepsilon}+1}-\tau_{l,\ceil{\log_{C_l} \varepsilon}})A_l^2\varepsilon^2}\right)\nonumber\\
&\le\frac{\max_l2KA_l\varepsilon}{1-\gamma} + \frac{\max_l 32\sqrt{EK\ln(4K/\delta)\log_{C_l}\varepsilon}}{(1-\gamma)^{2}\min_lv_{l,s}}.\nonumber
\end{align} 
\paragraph{Term (iii).}
From the elimination condition (line 7 of Algorithm~\ref{alg:test_stage}),
\begin{align*}
(iii)\leq & \sum_{l=1}^{E}\sum_{m=0}^{m(l)}\sqrt{\frac{2({\tau_{m+1}-\tau_m})\ln(4K/\delta)}{(1-\gamma)^2}} + \varepsilon (L+1)(\tau_{m+1}-\tau_m)\\
\leq & \varepsilon(L+1)K+\max_{l}\sqrt{\frac{2EK\ln(4K/\delta)\ceil{\log_{C_l} \varepsilon}}{(1-\gamma)^2}},
\end{align*}
where the last inequality uses the Cauchy–Schwarz inequality and $m(l)\le \ceil{\log_{C_l} \varepsilon}$.
\paragraph{Term (iv).}
By \citep[Proposition 3.3]{ni2023safe}, and by further setting the truncated horizon 
\(H = \mathcal{O}\!\left(\frac{\ln \varepsilon^{-1}}{1-\gamma}\right)\), we have
\begin{align*}
    (iv)&\le \sum_{l=1}^{E}\sum_{m=0}^{m(l)} 
\sqrt{\frac{2(\tau_{l,m+1}-\tau_{l,m})\ln(4K/\delta)}{(1-\gamma)^2}}
+ \varepsilon(\tau_{l,m+1}-\tau_{l,m})
 \\
&\le \varepsilon K+ \max_{l} \sqrt{\frac{2EK\ln(4K/\delta)\,\lceil\log_{C_l}\varepsilon\rceil}{(1-\gamma)^2}},
\end{align*}
where the last inequality uses the Cauchy–Schwarz inequality and $m(l)\le \ceil{\log_{C_l} \varepsilon}$.
\paragraph{Final bound.} Combining terms (i)--(iv) gives
\begin{align*}   
\text{Reg}_r^{\mc M}(K) \le&\mathcal{O}\!\left(\frac{\varepsilon LK}{\xi(1-\gamma)}+\frac{\max_l A_lK\varepsilon}{(1-\gamma)}+ \frac{\max_l\sqrt{C_\varepsilon(\mathcal{D}, \delta)K\ln \frac{K}{\delta}\,\ln\frac{1}{\delta}\,\log_{C_l}\varepsilon}}{\min_l(1-\gamma)^{2}v_{l,s}}\right)\\
    \le& \mathcal{O}\!\left(\frac{ \varepsilon LK }{\xi(1-\gamma)}+ \frac{\sqrt{C_\varepsilon(\mathcal{D}, \delta)K\ln \frac{K}{\delta}\ln\frac{1}{\delta}\ln\frac{1}{\varepsilon}}}{(1-\gamma)^{2}\xi}\right),
\end{align*}
where the first inequality follows from $E \le 2 \mc C_{\varepsilon}(\mc D, \delta) \ln \frac{1}{\delta}$, and the second inequality follows from $\min_l v_{l,s} \ge \xi$, 
\[
A_l = \mathcal{O}\!\left(\frac{L}{v_{l,s}} + \frac{1}{\alpha_{l,s}}\right) = \mathcal{O}\!\left(\frac{L}{\xi}\right),
\] 
and
\[
\log_{C_l} \varepsilon = \log_{\frac{3 v_{l,s}}{2 v_{l,s} + (4L+9)\varepsilon}} \frac{1}{\varepsilon} 
= \mathcal{O}\!\left(\log_{\frac{3 \xi}{2 \xi + (4L+9)\varepsilon}} \frac{1}{\varepsilon}\right) 
= \mathcal{O}\!\left(\ln \frac{1}{\varepsilon}\right)
\]
because of $v_{l,s} \ge \xi \ge 2 (4L+9)\varepsilon$. Since $L = \frac{1}{1-\gamma} + \frac{2\gamma}{(1-\gamma)^2}$, we conclude that
\begin{align*}
\text{Reg}_r^{\mc M}(K) \le \tilde{\mathcal{O}}\!\left(
\frac{\sqrt{\mathcal{C}_{\varepsilon}(\mathcal{D},\delta) K}}{(1-\gamma)^{2} \xi} 
+ \frac{\varepsilon K}{\xi (1-\gamma)^3}
\right).
\end{align*}
\end{proof}

\section{Constrained Meta RL without Safe Exploration: An Extension of \cite{ye2023power}}
\citeauthor{ye2023power} introduced the PCE algorithm in the meta RL setting. With a simple modification that accounts for constraint values in addition to rewards during both the training and testing phases, their algorithm can be extended to the constrained meta RL setting without safe exploration guarantees. In this section, we detail this extension and provide guarantees that it achieves sublinear test-time regret for both reward and constraints. The reward regret is defined in \eqref{eq_regret}, while the constraint regret is defined as
\[
\mathrm{Reg}_c^{\mathcal{M}_{\mathrm{test}}}(K) := \sum_{k=1}^{K} \big[ V_c^{\mathcal{M}_{\mathrm{test}}}(\pi_k) \big]_{-},
\]
where $\{\pi_k\}_{k=1}^{K}$ denotes the sequence of policies generated during the testing phase, and $\mathcal{M}_{\mathrm{test}}$ denotes the test CMDP.

\paragraph{Training phase} Our training-phase Algorithm~\ref{alg_train} first constructs the CMDP set $\mathcal{U}$ and then learns both the optimal and feasible policies, including the feasible policy $\pi_s$. Since safe exploration is not required during training, we ignore entries related to $\pi_s$, and denote the resulting set as
\[
\hat{\mc U} := \big\{ (\pi, V_r^{\mathcal{M}}(\pi), V_c^{\mathcal{M}}(\pi)) \,\big|\, \mathcal{M} \in \mathcal{U} \big\},
\]
where all entries are obtained via the oracle $\mathbb{O}_l$, and the guarantees for this set are provided in Lemma~\ref{lemma: pretraining bound}. Although we believe that the training algorithm of \citeauthor{ye2023power} could, in principle, be adapted to directly learn a set of optimal and feasible policies without first constructing a CMDP set, we do not provide a formal proof for this modification. Such a proof would likely follow arguments similar to those in Lemma~\ref{lemma: pretraining bound}. Nonetheless, we expect that this direct adaptation would yield similar guarantees for the resulting policy set.

\paragraph{Testing phase} If we set $\alpha_{l,m} = 1$ in our Algorithm \ref{alg:test_stage}, the mixture policy $\pi_{l,m}$ reduces to $\pi_l$. In this case, our method directly extends the test-phase algorithm of \cite{ye2023power}, with the addition of a constraint check:
\begin{align*}
\Bigg|\frac{\sum_{j=k_0}^{k} C_j}{k-k_0+1} - v_{l}\Bigg| 
\ge \sqrt{\frac{2\ln(4K/\delta)}{(k-k_0+1)(1-\gamma)^2}} + \varepsilon (L+1).
\end{align*}
This adapted version is provided in Algorithm~\ref{alg:test_stage_ye}. However, safe exploration is not guaranteed, since any policy$ \pi_l \in \hat{\mathcal{U}}$ may be infeasible during testing.

\begin{algorithm}[ht]
\caption{Testing phase adapted from PCE \citep{ye2023power}}
\label{alg:test_stage_ye}
\begin{algorithmic}[1]
    \STATE \textbf{Input:} Policy–value set $\hat{\mc U}$ and from Alg. \ref{alg_train}, number of iterations $K$, truncated horizon $H$, confidence level $\delta\in(0,1)$.
    \STATE \textbf{Initialize:} Set counters $l = 1$ and $k_0=1$. 
    
    \FOR{$k = 1, \dots, K$}
        \STATE Select policy with the highest reward value: $(\pi_l, u_l, v_l) = \arg\max_{(\pi, u, v) \in \hat{\mc U}_l} u$, and set $\pi_k=\pi_l$.
        \STATE Sample a trajectory $\tau_H$ using $\pi_{l}$ and compute cumulative reward value $R_k$ and constraint value $C_k$.

        \IF{$\big|\frac{\sum_{j=k_0}^{k} R_j}{k-k_0+1} - u_{l}\big| \ge \sqrt{\frac{2\ln(4K/\delta)}{(k-k_0+1)(1-\gamma)^2}} + \varepsilon (L+1)$ \textbf{or} 
            $\big|\frac{\sum_{j=k_0}^{k} C_j}{k-k_0+1} - v_{l}\big| \ge \sqrt{\frac{2\ln(4K/\delta)}{(k-k_0+1)(1-\gamma)^2}} + \varepsilon (L+1)$}
            \STATE Update policy-value set: $\hat{\mc U}=  \hat{\mc U} \setminus \{(\pi_l, u_l, v_l)\}$
            \STATE Update counters: $l = l+1$, $k_0 = k+1$
        \ENDIF
    \ENDFOR
    \STATE \textbf{Return:} Final policy $\pi_{out} = \frac{1}{K} \sum_{k=1}^{K} \pi_k$.
\end{algorithmic}
\end{algorithm}

Next, we proceed to prove the regret guarantees for Algorithm \ref{alg:test_stage_ye}, as stated in Corollary~\ref{theorem_1}.
\begin{corollary}\label{theorem_1}
Let Assumption \ref{ass_slater} hold and set $(8L+18)\varepsilon\le \xi $ and the truncated horizon as $H=\tilde{\mc O}((1-\gamma)^{-1})$. Then, for any test CMDP $\mc M\sim \mc D$, Algorithms \ref{alg_train} and \ref{alg:test_stage_ye} satisfies the following with probability at least $1 - 26\delta -9\delta\ln C_\varepsilon(\mathcal{D},\delta)$,
\begin{align*}
    &\sum_{k=1}^{k}[V^{\mc M}_{r}(\pi^*) -V_r^{\mc M}(\pi_k)]_{+} \le \tilde{\mc O}\left(\frac{\sqrt{\mathcal{C}_{\varepsilon}(\mathcal{D},\delta) K}}{(1-\gamma)}+\frac{\varepsilon K}{\xi (1-\gamma)^3} \right),\\
    &\sum_{k=1}^{k}[V_c^{\mc M}(\pi_k)]_{-} \le\tilde{\mc O}\left(\frac{\sqrt{\mathcal{C}_{\varepsilon}(\mathcal{D},\delta) K}}{(1-\gamma)}+\frac{\varepsilon K}{(1-\gamma)^2} \right).
\end{align*}
\end{corollary}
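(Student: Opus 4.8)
The plan is to reuse the machinery of Theorem~\ref{thm:test_stage_final}, specialized to the degenerate mixture weight $\alpha_{l,m}=1$, in which the deployed policy $\pi_k$ equals the selected candidate $\pi_l$ rather than a mixture with $\pi_s$. This collapses the mixture term (ii) of the earlier four-way decomposition and, crucially, removes every dependence on the safety margin $\xi$ from the concentration-driven (sublinear) part of the regret; the only place $\xi$ re-enters is through the training-time optimality gap $4\varepsilon L(1+\xi^{-1}(1-\gamma)^{-1})$. First I would establish the analogue of Lemma~\ref{lemma_safe}: conditioning on the events of Lemma~\ref{lemma: pretraining bound}, the near-optimal tuple $(\pi,u,v)\in\hat{\mc U}$ associated with $\mathcal{M}$ has estimates $u,v$ that are $\varepsilon L$-close to $V_r^{\mathcal M}(\pi)$ and $V_c^{\mathcal M}(\pi)$, so by \citep[Proposition~3.3]{ni2023safe} and a union bound over $k\in[K]$, with probability at least $1-\delta$ neither the reward branch nor the constraint branch of the elimination test in line~6 of Algorithm~\ref{alg:test_stage_ye} is ever triggered for $\pi$. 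Hence $\pi$ is never removed, and since $\pi_l$ maximizes the stored reward estimate, optimism gives $u_l\ge V_r^{\mathcal M}(\pi^*)-4\varepsilon L(1+\xi^{-1}(1-\gamma)^{-1})$ at every iteration. A union bound against the failure probability of Lemma~\ref{lemma: pretraining bound} produces the stated $1-26\delta-9\delta\ln C_\varepsilon(\mathcal D,\delta)$.

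For the reward regret I would reuse the three-term split $V_r^{\mathcal M}(\pi^*)-V_r^{\mathcal M}(\pi_l)=\underbrace{(V_r^{\mathcal M}(\pi^*)-u_l)}_{(i)}+\underbrace{(u_l-R_\tau)}_{(iii)}+\underbrace{(R_\tau-V_r^{\mathcal M}(\pi_l))}_{(iv)}$. Term (i) is controlled by the optimism bound above and contributes $\mathcal O(\varepsilon L K(1+\xi^{-1}(1-\gamma)^{-1}))=\tilde{\mathcal O}(\varepsilon K/(\xi(1-\gamma)^3))$ after substituting $L=\mathcal O((1-\gamma)^{-2})$. Term (iii) is bounded, phase by phase, by the elimination threshold in line~6 (which fails to fire on all but the last iteration of each phase), and term (iv) by the finite-horizon concentration of the cumulative reward with $H=\tilde{\mathcal O}((1-\gamma)^{-1})$; summing both over the $E\le 2C_\varepsilon(\mathcal D,\delta)\ln\delta^{-1}$ phases and applying Cauchy–Schwarz exactly as in Appendix~\ref{app_proof_thm} yields the sublinear term $\tilde{\mathcal O}(\sqrt{C_\varepsilon(\mathcal D,\delta)K}/(1-\gamma))$ together with lower-order linear terms $\mathcal O((L+1)\varepsilon K)$ that are absorbed into the stated $\varepsilon K/(\xi(1-\gamma)^3)$. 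Note there is no $\xi$ in the sublinear term precisely because the mixture term (ii) is absent here.

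The genuinely new piece is the constraint regret. I would start from the pointwise bound $[V_c^{\mathcal M}(\pi_l)]_-\le [v_l]_-+[\,v_l-V_c^{\mathcal M}(\pi_l)\,]_+$. Because every stored $v_l=V_c^{\mathcal M_i}(\pi_i)$ comes from a relaxed $\varepsilon$-feasible policy returned by $\mathbb O_l$, we have $v_l\ge-\varepsilon$, so $[v_l]_-\le\varepsilon$ and the first piece contributes at most $\varepsilon K$. For the residual $[v_l-V_c^{\mathcal M}(\pi_l)]_+$ I would, within each active phase, combine the constraint branch of the non-elimination condition, namely $|n^{-1}\sum_{j=k_0}^{k}C_j-v_l|\le\sqrt{2\ln(4K/\delta)/(n(1-\gamma)^2)}+\varepsilon(L+1)$, with the finite-horizon concentration of $n^{-1}\sum_{j}C_j$ around $V_c^{\mathcal M}(\pi_l)$, so that a triangle inequality controls $|v_l-V_c^{\mathcal M}(\pi_l)|$ by twice the confidence radius plus $\varepsilon(L+2)$. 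Summing over phases and invoking Cauchy–Schwarz as for term (iv) gives the sublinear $\tilde{\mathcal O}(\sqrt{C_\varepsilon(\mathcal D,\delta)K}/(1-\gamma))$, while the $\varepsilon(L+2)$ offset accumulates to $\mathcal O((L+2)\varepsilon K)=\mathcal O(\varepsilon K/(1-\gamma)^2)$, matching the claimed constraint bound.

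The main obstacle will be this constraint-regret residual: unlike on the reward side, the selected $\pi_l$ need not be the near-optimal policy for $\mathcal M$ and may be genuinely infeasible, so $v_l$ cannot be equated with $V_c^{\mathcal M}(\pi_l)$ a priori. The argument must lean entirely on the two-sided elimination test to guarantee that, while $\pi_l$ remains active, its empirical constraint average tracks $v_l$ and hence — via concentration — tracks $V_c^{\mathcal M}(\pi_l)$ up to the confidence radius. Care is needed at the terminal iteration of each phase, where the elimination condition does fire, to ensure that single over-threshold step contributes only an $\mathcal O((1-\gamma)^{-1})$ additive term per phase (bounded since $C_\tau$ is itself bounded by the effective horizon) and does not inflate the Cauchy–Schwarz sum across the $E$ phases.
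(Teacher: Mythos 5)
Your proposal is correct and follows essentially the same route as the paper: the same non-elimination/optimism argument for the reward side, the same three-term decomposition $(V_r^{\mathcal M}(\pi^*)-u_l)+(u_l-R_\tau)+(R_\tau-V_r^{\mathcal M}(\pi_l))$, and the same constraint-regret split into $[-v_l]_+\le\varepsilon$ plus a $|v_l-V_c^{\mathcal M}(\pi_l)|$ residual controlled by inserting the empirical average $C_\tau$ between the stored estimate and the true value via the elimination test and concentration. Your concern about the terminal iteration of each phase is legitimate but is the same minor looseness present in the paper's own bound for $(i)_r$ and $(i)_c$, absorbed into the constants.
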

\begin{proof}[Proof of Corollary~\ref{theorem_1}]
In Lemma~\ref{lemma: pretraining bound}, we show that for any $\mathcal{M} \sim \mathcal{D}$, 
with probability at least \(1 - 25\delta -9\delta\ln C_\varepsilon(\mathcal{D},\delta)\), 
there exists a tuple \((\pi, u, v) \in \hat{\Pi}\) such that
\begin{align}
&\max\Bigl\{
\bigl| V_r^{\mathcal{M}}(\pi) - u \bigr|,\ 
\bigl| V_c^{\mathcal{M}}(\pi) - v \bigr|
\Bigr\}
\le \varepsilon L, \label{eq_opti2}\\
&\bigl|u - V_r^{\mathcal{M}}(\pi^*)\bigr|
\le 4\varepsilon L\!\left(1 + \frac{1}{\xi (1-\gamma)}\right). \label{eq_optimalgap2}
\end{align}
In the following analysis, we focus on the case where the events of Lemma~\ref{lemma: pretraining bound} hold. Algorithm \ref{alg:test_stage_ye} considers a sequence of policies $\{\pi_l\}_{l=1}^{E} \subset \hat{\mc U}$, 
where $E \le |\hat{\mc U}|\le 2\mc C_{\varepsilon}(\mc D,\delta)\ln\frac{1}{\delta}$. 
We denote by $\tau_{l}$ the first episode in which policy $\pi_{l}$ is used. 
Next, we show that for a sampled CMDP $\mathcal{M}$, with probability at least $1-\delta$, 
at every iteration of Algorithm~\ref{alg:test_stage_ye}, the value estimate satisfies
\[
v_l \ge V^{\mathcal M}_r(\pi^*) - 4\varepsilon L\!\left(1 + \frac{1}{\xi(1-\gamma)}\right).
\]

Similar to Step~2 in the proof of Lemma~\ref{lemma_safe}, by applying concentration bounds 
\citep[Proposition 3.3]{ni2023safe}, we obtain that with probability at least $1 - \delta$,
\begin{align*}
\left|\frac{1}{k-k_0+1}\sum_{\tau = k_0}^{k} R_\tau - u_l\right|
&\le \sqrt{\frac{2\ln(4K/\delta)}{(k-k_0+1)(1-\gamma)^2}} + \varepsilon (L+1),\\
\left|\frac{1}{k-k_0+1}\sum_{\tau = k_0}^{k} C_\tau - v_l\right|
&\le \sqrt{\frac{2\ln(4K/\delta)}{(k-k_0+1)(1-\gamma)^2}} + \varepsilon (L+1).
\end{align*}
By the elimination rule (line 6 of Algorithm~\ref{alg:test_stage_ye}), 
the above inequalities guarantee that $(\pi, u, v)$ will \emph{not} be eliminated from $\hat{\mc U}$ 
with probability at least $1-\delta$. Combining Inequality~\eqref{eq_optimalgap2} with the fact that in each iteration of Algorithm~\ref{alg:test_stage_ye},
$u_l = \max_{(\pi,u,v)\in \hat{\mc U}_l} u$, we obtain
\begin{align}
u_l \ge V_r^{\mathcal{M}}(\pi^*) 
- 4\varepsilon L\!\left(1 + \frac{1}{\xi(1-\gamma)}\right).
\label{eq_valueupperbound}
\end{align}

Now we are ready to prove Corollary~\ref{theorem_1}. 
We start by decomposing the reward regret:
\begin{align*}
\sum_{k=1}^{K}[V^{\mc M}_{r}(\pi^*) - V_r^{\mc M}(\pi_k)]_{+}
&=\sum_{l=1}^{E}\sum_{\tau=\tau_{l}}^{\tau_{l+1}-1}
\left[V^{\mc M}_{r}(\pi^*) - u_l + u_l - R_{\tau} + R_\tau - V^{\mc M}_{r}(\pi_{l})\right]_+\\
&\le \sum_{l=1}^{E}\sum_{\tau=\tau_{l}}^{\tau_{l+1}-1}
\left[V^{\mc M}_{r}(\pi^*) - u_l \right]_{+}
+ \sum_{l=1}^{E} \left| \sum_{\tau=\tau_l}^{\tau_{l+1}-1}( u_l - R_\tau)\right|  + \sum_{l=1}^{E}  
\left|\sum_{\tau=\tau_l}^{\tau_{l+1}-1} (R_\tau- V^{\mc M}_{r}(\pi_l))\right| \\
&\le 4\varepsilon K L\left(1 + \frac{1}{\xi(1-\gamma)}\right)
+ \underbrace{\sum_{l=1}^{E} \left| \sum_{\tau=\tau_l}^{\tau_{l+1}-1}( u_l - R_\tau)\right|}_{(i)_r}
+ \underbrace{\sum_{l=1}^{E}  
\left|\sum_{\tau=\tau_l}^{\tau_{l+1}-1} (R_\tau- V^{\mc M}_{r}(\pi_l))\right|}_{(ii)_r},
\end{align*}
where the last inequality uses Inequality~\eqref{eq_valueupperbound}.

For the constraint regret, we have
\begin{align*}
\sum_{k=1}^{K} \left[ V^{\mc M}_{c}(\pi_k)\right]_{-}
&=\sum_{k=1}^{K} \left[ -V^{\mc M}_{c}(\pi_k)\right]_{+}\\
&=\sum_{l=1}^{E}\sum_{\tau=\tau_{l}}^{\tau_{l+1}-1}
\left[-V^{\mc M}_{c}(\pi_l)+C_\tau - C_\tau + v_l - v_l\right]_+\\
&\le \sum_{l=1}^{E} \left| \sum_{\tau=\tau_l}^{\tau_{l+1}-1}(v_l-C_\tau)\right|
+ \sum_{l=1}^{E}  
\left|\sum_{\tau=\tau_l}^{\tau_{l+1}-1} (C_\tau- V^{\mc M}_{c}(\pi_l))\right| + \sum_{l=1}^{E}\sum_{\tau=\tau_l}^{\tau_{l+1}-1}[-v_l]_{+}\\
&\le \underbrace{\sum_{l=1}^{E} \left| \sum_{\tau=\tau_l}^{\tau_{l+1}-1}(v_l-C_\tau)\right|}_{(i)_c}
+ \underbrace{\sum_{l=1}^{E}  
\left|\sum_{\tau=\tau_l}^{\tau_{l+1}-1} (C_\tau- V^{\mc M}_{c}(\pi_l))\right|}_{(ii)_c}
+ K\varepsilon,
\end{align*}
where the last inequality follows from $v_l \ge -\varepsilon$ by construction (Algorithm~\ref{alg_train}).

For terms $(i)_r$ and $(i)_c$, by the elimination condition 
(line 6 of Algorithm~\ref{alg:test_stage_ye}), we have
\begin{align*}
\max\{(i)_r,(i)_c\}
\le \sum_{l=1}^{E}\sqrt{\frac{2(\tau_{l+1}-\tau_l)\ln(4K/\delta)}{(1-\gamma)^2}}
+ \varepsilon (L+1)K.
\end{align*}

For terms $(ii)_r$ and $(ii)_c$, using the concentration bound in 
\citep[Proposition 3.3]{ni2023safe}, we obtain
\begin{align*}
\max\{(ii)_r,(ii)_c\}
\le \sum_{l=1}^{E}\sqrt{\frac{2(\tau_{l+1}-\tau_l)\ln(4K/\delta)}{(1-\gamma)^2}}
+ \varepsilon K.
\end{align*}

Combining terms $(i)_r$ and $(ii)_r$ gives
\begin{align*}
\sum_{k=1}^{K}[V^{\mc M}_{r}(\pi^*) - V_r^{\mc M}(\pi_k)]_{+}
&\le \varepsilon K\left(2+5L+ \frac{4L}{\xi(1-\gamma)}\right)
+ 2\sum_{l=1}^{E}\sqrt{\frac{2(\tau_{l+1}-\tau_l)\ln(4K/\delta)}{(1-\gamma)^2}}\\
&\overset{(1)}{\le}  
\varepsilon K\left(2+5L+ \frac{4L}{\xi(1-\gamma)}\right)
+ 2\sqrt{\frac{2KE\ln(4K/\delta)}{(1-\gamma)^2}}\\
&\overset{(2)}{\le} 
\tilde{\mc O}\!\left(
\frac{\sqrt{\mathcal{C}_{\varepsilon}(\mathcal{D},\delta) K}}{1-\gamma}
+ \frac{\varepsilon K}{\xi (1-\gamma)^3}
\right),
\end{align*}
where $(1)$ uses the Cauchy--Schwarz inequality, and 
$(2)$ uses $E \le |\hat{\mc U}|\le 2\mc C_{\varepsilon}(\mc D,\delta)\ln\frac{1}{\delta}$ 
and $L = \frac{1}{1-\gamma} + \frac{2\gamma}{(1-\gamma)^2}$.

Similarly, combining terms $(i)_c$ and $(ii)_c$ gives
\begin{align*}
\sum_{k=1}^{K} \left[ V^{\mc M}_{c}(\pi_k)\right]_{-}
&\le \varepsilon K(3 + L)
+ 2\sum_{l=1}^{E}\sqrt{\frac{2(\tau_{l+1}-\tau_l)\ln(4K/\delta)}{(1-\gamma)^2}}\\
&\le \varepsilon K(3 + L)
+ 2\sqrt{\frac{2KE\ln(4K/\delta)}{(1-\gamma)^2}}\\
&\le 
\tilde{\mc O}\!\left(
\frac{\sqrt{\mathcal{C}_{\varepsilon}(\mathcal{D},\delta) K}}{1-\gamma}
+ \frac{\varepsilon K}{(1-\gamma)^2}
\right).
\end{align*}
\end{proof}

\section{Proof of Theorem~\ref{thm_lowerbound}}\label{app_lowerbound}
In the following, we present a more explicit version of Theorem~\ref{thm_lowerbound}, which specifies how the constants $\gamma_0$, $\varepsilon_0$, and $\delta_0$ depend on the problem size.
\begin{theorem}
    There exists a distribution $\mathcal{D}$ over the CMDP class $\mc M_{\text{all}}$ and constants 
$\gamma_0 \in \!\left(1-(\log |\mathcal{S}|)^{-1},\,1\right)$, 
$0 \le \varepsilon_0 \le \min\left\{(1-\gamma)^{-1},\,\gamma\xi^{-1}(1-\gamma)^{-2}\right\}$, 
and $\delta_0 \in (0,1)$
such that for any $\gamma \in (\gamma_0,1)$, $\varepsilon\in(0,\varepsilon_0)$, $\delta \in (0,\delta_0)$, any algorithm requires at least \(\tilde{\Omega}\biggl(
\xi^{-2}\varepsilon^{-2}(1-\gamma)^{-5}\mathcal{C}_{\xi\varepsilon(1-\gamma)^3}(\mathcal{D},\delta)
\biggr)\) samples from the generative model such that for any $\mc M\sim\mc D$, the output policy is $\varepsilon$-optimal and feasible with high probability.
\end{theorem}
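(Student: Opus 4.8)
The plan is to place $C$ independent copies of the constrained hard instance of \cite{vaswani2022near} inside a single CMDP template and let the test task hide which copy is ``active,'' arranging the construction so that the covering number $\mathcal{C}_{\xi\varepsilon(1-\gamma)^3}(\mc D,\delta)$ equals $C$; the meta covering number and the per-copy constrained complexity then enter multiplicatively. Concretely, I would build a template with $C$ interchangeable gadgets, each a small constrained MDP offering a high-reward action that is \emph{infeasible} under default dynamics; task $i$ perturbs a single transition in gadget $i$ by an amount $\Delta=\Theta(\xi\varepsilon(1-\gamma)^{5/2})$ so that the high-reward action becomes \emph{barely feasible} there, while all other gadgets remain default. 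The distribution $\mc D$ is uniform over the $C$ tasks. Since any two tasks differ only at their two activated gadgets, $d(\mc M_i,\mc M_j)=\Delta$, and because $\Delta>2\xi\varepsilon(1-\gamma)^3$ once $\gamma$ exceeds $3/4$, the $C$ tasks form a $\xi\varepsilon(1-\gamma)^3$-packing, so $\mathcal{C}_{\xi\varepsilon(1-\gamma)^3}(\mc D,\delta)=C$ for small $\delta$. I would check Assumption~\ref{ass_slater} by noting that the policy playing the safe action in every gadget is $\xi$-feasible for all tasks simultaneously.

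First I would reduce the problem to a hidden-index identification. The optimal feasible policy for task $i$ plays the high-reward action only in the active gadget $i$ and the safe action elsewhere; playing safe everywhere is feasible but $\Theta(\varepsilon)$-suboptimal, whereas playing the high-reward action in any default gadget is infeasible. Thus any policy that is both feasible and $\varepsilon$-optimal must, in effect, reveal the active index $i$. This calibrates the reward gain at the active gadget to $\Theta(\varepsilon)$ and the feasibility margin to $\Theta(\xi)$, which is exactly what the \cite{vaswani2022near} instance delivers for a single gadget: distinguishing the activated from the default dynamics, to the precision needed to certify both feasibility and $\varepsilon$-optimality, requires $\tilde{\Omega}\!\left(\xi^{-2}\varepsilon^{-2}(1-\gamma)^{-5}\right)$ generative-model queries into that gadget, the powers of $(1-\gamma)^{-1}$ arising from amplifying the per-step perturbation $\Delta$ into the value and constraint gaps through the effective horizon.

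Next I would lift the single-gadget bound to the full $C$-fold bound. Because the gadgets occupy disjoint state--action regions, a query into gadget $j$ is distributed identically across every task that leaves gadget $j$ at its default, so it carries no information about whether some other gadget is active; the $C$ detection subproblems are therefore information-theoretically independent. For a uniformly random active index, ruling out a default gadget still costs $\tilde{\Omega}(\Delta^{-2})$ queries, and a Fano / generalized Le~Cam argument over the $C$ equiprobable hypotheses forces the learner to examine a constant fraction of the gadgets before it can name the active one with probability $1-\delta$. Summing the per-gadget budgets yields the claimed $\tilde{\Omega}\!\left(\xi^{-2}\varepsilon^{-2}(1-\gamma)^{-5}\,C\right)$. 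Along the way I would fix the admissible ranges: $\gamma_0$ close to $1$ (at least $1-(\log|\mc S|)^{-1}$, which also exceeds $3/4$) so that the horizon both separates the tasks beyond two covering radii and amplifies $\Delta$ into an $\Omega(\varepsilon)$ reward gap; $\varepsilon_0\le\min\{(1-\gamma)^{-1},\gamma\xi^{-1}(1-\gamma)^{-2}\}$ so that $\Delta$ stays a legitimate probability perturbation; and $\delta_0$ small enough for the identification test to be informative and for the covering number to retain all $C$ atoms.

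The main obstacle I expect is the joint calibration of $\Delta$ rather than either ingredient alone. The perturbation must be large enough, $\Delta>2\xi\varepsilon(1-\gamma)^3$, that the $C$ tasks remain a genuine $\xi\varepsilon(1-\gamma)^3$-packing (so the covering number equals $C$ and is not deflated), yet its feasibility-flipping effect must be tuned so that mistaking the active gadget costs strictly more than $\varepsilon$ in reward while a single always-safe policy still enjoys the uniform Slater margin $\xi$. It is precisely this coupling---of the transition perturbation to $\xi$, to $\varepsilon$, and to the effective horizon through the sensitivity of the constrained optimum---that produces the $\xi^{-2}(1-\gamma)^{-5}$ dependence and distinguishes the bound from the $\varepsilon^{-2}(1-\gamma)^{-3}$ unconstrained meta-RL lower bound of \cite{ye2023power}. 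Verifying that the \cite{vaswani2022near} sensitivity analysis survives the embedding into the multi-gadget template, with all constants consistent across the $C$ gadgets, is the technically delicate step.
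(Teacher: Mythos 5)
Your proposal matches the paper's proof in all essentials: the paper likewise embeds $|\Omega|$ disjoint per-task gadgets (reached via a binary routing tree, which is what forces $\gamma_0 \ge 1-(\log|\mathcal{S}|)^{-1}$) into a single template adapted from the constrained hard instance of \cite{vaswani2022near}, takes $\mathcal{D}$ uniform, chooses the per-gadget transition perturbation $\alpha_2=\Theta(\xi\varepsilon(1-\gamma)^3)$ so that the tasks form a packing and $\mathcal{C}_{\xi\varepsilon(1-\gamma)^3}(\mathcal{D},\delta)=|\Omega|$, and then sums the per-gadget bound $\tilde{\Omega}(\xi^{-2}\varepsilon^{-2}(1-\gamma)^{-5})$ from \citep[Lemma~20]{vaswani2022near} over gadgets (using that average success $1-\delta$ over the uniform prior forces per-task success $1-|\Omega|\delta$, in place of your Fano phrasing). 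The only cosmetic difference is your bookkeeping of the perturbation as $\Delta=\Theta(\xi\varepsilon(1-\gamma)^{5/2})$ with cost $\Delta^{-2}$ versus the paper's $\epsilon'=\Theta(\xi\varepsilon(1-\gamma))$ with cost $(1-\gamma)^{-3}\epsilon'^{-2}$, which yield the same final rate.
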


\begin{proof}
To establish a lower bound on the sample complexity over the CMDP set $\Omega$, we first construct a hard instance $\mathcal{M}_0$. Next, we construct alternative CMDPs ${\mathcal{M}_i}$ that differ from $\mathcal{M}_0$ in specific transition probabilities and define a uniform distribution over this set. Using this construction, we then derive a lower bound on the number of samples any algorithm must collect to identify an $\varepsilon$-optimal and feasible policy for a hard instance with high probability.

\paragraph{Construction of the hard instance:} The hard instance $\mathcal{M}_0$ was initially proposed in \cite{feng2019does} to study lower bounds on sample complexity in transfer RL. It was later adapted by \cite{vaswani2022near} to the constrained setting by incorporating CMDPs, where the lower bound must account for both reward maximization and constraint satisfaction. In this work, we further construct $\mathcal{M}_0$ following the CMDP design in \citep[Theorem 8]{vaswani2022near}, extending it to the constrained meta RL setting with several parameters adapted accordingly. Specifically, we scale the state and action space with the size of the task family $\mc M_{\mathrm{all}}$, where each CMDP in $\mc M_{\mathrm{all}}$ differs in its transition dynamics. Since the agent must identify each CMDP and learn its optimal feasible policy, the resulting sample complexity lower bound grows with the size of the task family $\mc M_{\mathrm{all}}$. Without loss of generality, we assume $|\Omega| = 2^m$ for some integer $m>0$. We then construct a CMDP $\mathcal{M}_0$ with a total of $3 \cdot 2^{m+1} - 1$ states, as illustrated in Figure~\ref{fig:lb}.

\paragraph{Structure.}
Let $o_0$ be the fixed initial state, where $\rho(o_0)=1$. From $o_0$, there is a deterministic path of length $m+1$ leading to each gate state $s_i$ for $i \in \{0,1,\ldots,|\Omega|-1\}$.  All transitions are deterministic except at states $\tilde{s}_i$.  For $i \in \{0, 1, \ldots, |\Omega|-1\}$, taking action $a_0$ at $\tilde{s}_i$ leads back to $\tilde{s}_i$ with probability $p_i$ and to state $z_i$ with probability $1-p_i$. We also include $2|\Omega|-1$ routing states $o_0,o_1,\ldots$ forming a binary tree:  
in each routing state $o_i$ there are two actions $a_0,a_1$, and  
\[
(o_i,a_j) \to o_{2i+j+1}, \qquad i < |\Omega|-1.
\]
Each state $o_{|\Omega|+k-1}$ transitions deterministically to $s_k$ for $k=0,\ldots,|\Omega|-1$.  
Reward and constraint values are zero in all routing states. Because each $s_k$ has a unique deterministic path of length $m+1$ from $o_0$, we require $\gamma \ge 1 - 1/(cm)$ for some constant $c$, implying  
\[
\gamma^{\Theta(m)} = \Theta(1).
\]
For the reward and constraint values at the states $s_i, \tilde{s}_i, z_i, z_i'$, we mark them in green and red, respectively, in Figure~\ref{fig:lb}.
\begin{figure}[!ht]
    \centering
    \includegraphics[width=1\textwidth]{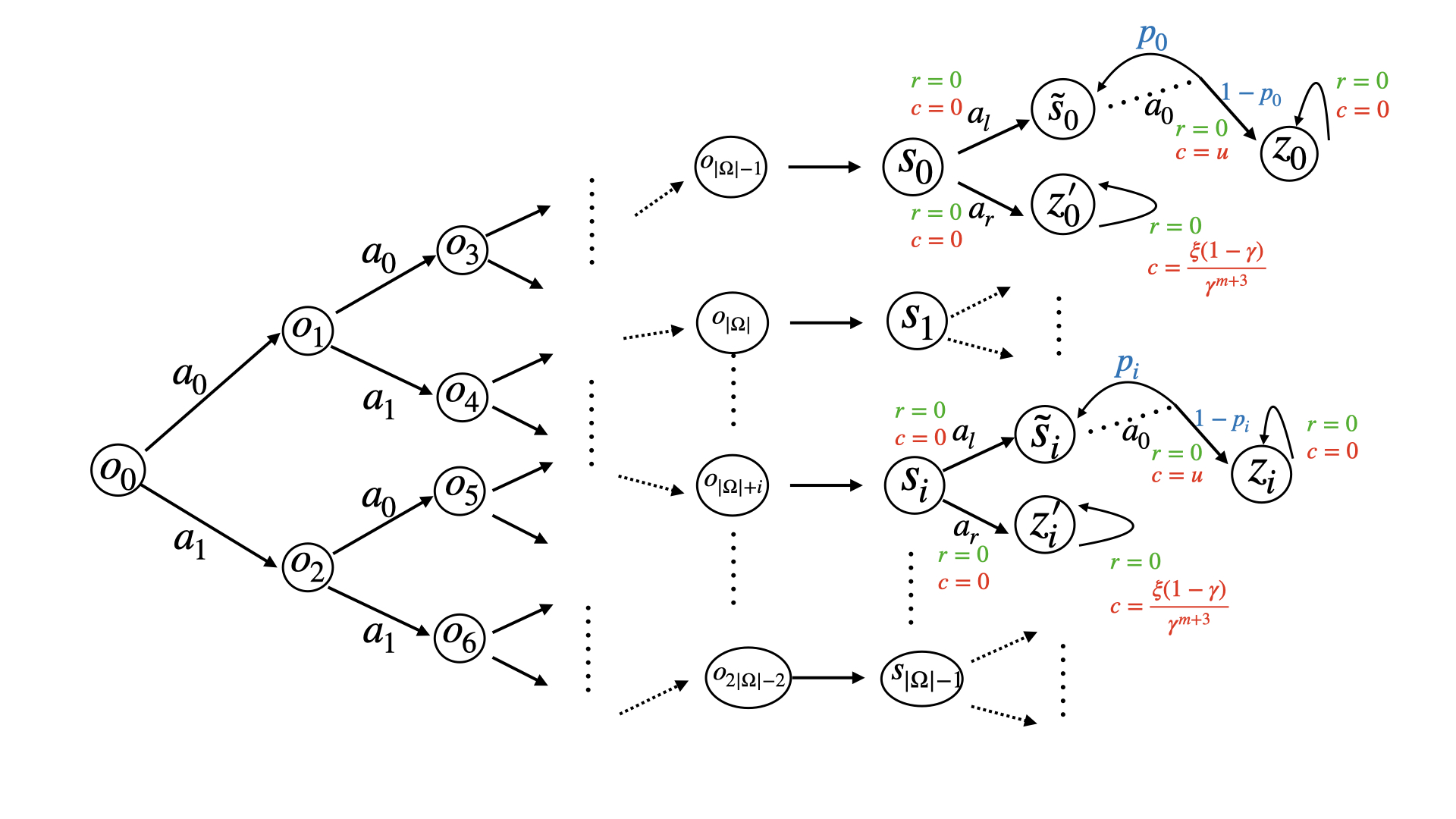}
    \caption{The hard CMDP instance $\mathcal{M}_0$ from \cite{vaswani2022near}, adapted to our constrained meta RL setting.}
    \label{fig:lb}
\end{figure}

\paragraph{Alternative CMDPs.}
For each $i \in [|\Omega|-1]$, we construct $\mathcal{M}_i$ to be identical to $\mathcal{M}_0$ except for the transition parameter $p_i$. Formally:
\begin{itemize}
    \item \emph{Null case} $\mathcal{M}_0$: $p_0 = q_1$ and $p_i = q_0$ for all $i \ge 1$.
    \item \emph{Alternative case} $\mathcal{M}_i$ where $i \in [|\Omega|-1]$: $p_0 = q_1$, $p_j = q_0$ for all $j \neq i$, and $p_i = q_2$.
\end{itemize}
The parameters $q_0,q_1,q_2$ are defined as
\[
q_0 = \frac{1 - c_1(1-\gamma)}{\gamma},\qquad
q_1 = q_0 + \alpha_1,\qquad
q_2 = q_0 + \alpha_2,
\]
where
\[
\alpha_1 = \frac{c_2(1-\gamma q_0)^2 \epsilon'}{\gamma},\qquad
\alpha_2 = \frac{c_3(1-\gamma q_0)^2 \epsilon'}{\gamma},\qquad
\epsilon' = (1-\gamma)\zeta\epsilon,
\]
for some absolute constants $c_1,c_2,c_3>0$, chosen such that
\[
\alpha_1,\alpha_2 \in (0,\min\{q_0,1-q_0\}/2).
\]

These choices guarantee 
\[
\frac{1}{1-\gamma q_0}
< \frac{1}{1-\gamma q_1}
< \frac{1}{1-\gamma q_2}
= \frac{1}{1-\gamma q_0} + c_4\epsilon'.
\]
for some constant $c_4>0$, and
\[
\left|\frac{1}{1-\gamma q_1} - \frac{1}{1-\gamma q_0}\right|
= \Theta(\epsilon'),\qquad
\left|\frac{1}{1-\gamma q_1} - \frac{1}{1-\gamma q_2}\right|
= \Theta(\epsilon').
\]
We also set the constraint value $u$ in Figure~\ref{fig:lb} as
\[
u = \frac{-(1 - \gamma q_0)x}{\gamma^{m+3}}, \quad \text{where } x = \Theta(\xi).
\]

\paragraph{Distribution.}
Let $\mathcal{D}$ denote the uniform distribution over the CMDP set $\Omega$. 
The pairwise distance between any two CMDPs in $\Omega$ can be computed as
\[
d(\mathcal{M}_i,\mathcal{M}_j) = \alpha_2 .
\]
When the discount factor $\gamma$ is sufficiently close to $1$, we have
\[
{\xi\varepsilon(1-\gamma)^3}
\;\le\;
\alpha_2
=
\frac{c_3\,\xi(1-\gamma)\varepsilon(1-\gamma q_0)^2}{\gamma}.
\]
Since the radius $\xi\varepsilon(1-\gamma)^3$ is smaller than the distance between any two CMDPs in $\mc M_{\text{all}}$, we have
\begin{align}
    \mathcal{C}_{\xi\varepsilon(1-\gamma)^3}(\mathcal{D},\delta) = |\Omega|.\label{equality_omega}
\end{align}

\paragraph{Sample complexity lower bound.}
Let $N_{i,a_0}$ denote the number of times algorithm $\mathbb{A}$ samples $(s_i,a_0)$.  
By \citep[Lemma~20]{vaswani2022near}, if
\[
\mathbb{P}\big[\mathbb{A} \text{ returns an $\varepsilon$-optimal and feasible policy in }\mathcal{M}_i \mid \mathcal{M}_i\big] \ge 1-\delta',
\]
then
\begin{align}
    \mathbb{E}_{\mathcal{M}_0}\!\left[N_{i,a_0}\right]\ge \frac{c_{10}\log(1/\delta')}{(1-\gamma)^3 \epsilon'^2},\label{eq_N}
\end{align}

for an absolute constant $c_{10}>0$. If instead
\[
\mathbb{P}_{\mathcal{M}_i\sim\mathcal{D}}\big[\mathbb{A} \text{ returns an $\varepsilon$-optimal and  feasible policy in }\mc M_i\big]
\ge 1-\delta,
\]
then, since $\mc D$ is the uniform distribution over $\Omega$, we have
\[
\min_{i} 
\mathbb{P}\big[\mathbb{A} \text{ returns an $\varepsilon$-optimal and feasible policy in }\mathcal{M}_i \mid \mathcal{M}_i\big]
\ge 1-|\Omega|\,\delta.
\]

Combining this with Inequality \eqref{eq_N} and linearity of expectation yields:
\begin{align*}
    \mathbb{E}_{\mathcal{M}_0}\Big[\sum_i N_{i,a_0}\Big]
&\ge 
\frac{c_{10}(|\Omega|-1)\big(\log(1/\delta) + \log(1/|\Omega|)\big)}
{(1-\gamma)^3 \epsilon'^2}\\
&\overset{(i)}{=}\Omega\left(\frac{\mathcal{C}_{\frac{\xi\varepsilon(1-\gamma)^3}{2}}(\mathcal{D},\delta)(\log\delta^{-1}+\log \mathcal{C}_{{\xi\varepsilon(1-\gamma)^3}}(\mathcal{D},\delta)^{-1})}{\xi^{2}\varepsilon^{2}(1-\gamma)^{5}}\right)\\
&=\tilde{\Omega}\left(\frac{\mathcal{C}_{{\xi\varepsilon(1-\gamma)^3}}(\mathcal{D},\delta)}{\xi^{2}\varepsilon^{2}(1-\gamma)^{5}}\right),
\end{align*}
where step $(i)$ uses Equation \eqref{equality_omega}.
\end{proof}
\section{Implementation Details}
In this section, we provide implementation details for the training algorithm. All experiments were conducted on a MacBook Pro equipped with an Apple M1 Pro chip and 32~GB of RAM.
\subsection{Oracle implementations for gridworld experiments}\label{app_exp}
Here, we describe how the oracles required by Algorithm~\ref{alg_train} are implemented in the gridworld environment. Each CMDP $\mathcal{M} \in \mathcal{M}_{\mathrm{all}}$ differs only in its noise level. Accordingly, the CMDP check oracle (Definition~\ref{oracle_safe}) is implemented by directly comparing the noise levels of the corresponding CMDPs. For the optimal policy oracle (Definition~\ref{oracle_optimal}), we solve a linear program to compute the optimal policy. This is feasible because the model is fully known during training and the state–action space is small, allowing efficient computation of the optimal solution. Finally, for the simultaneously feasible policy oracle (Definition~\ref{oracle_safe_sim}), we leverage the monotonicity of feasibility with respect to the noise level: any policy that is feasible for the CMDP with the highest noise level ($i = 0.5$) is also feasible for all CMDPs with $i \in [0,0.5]$. Therefore, we implement the simultaneously feasible policy oracle by computing the optimal policy for $\mathcal{M}_{0.5}$.
\subsection{Additional experiments on Gym} \label{app:moreexp}
In this section, we provide additional empirical validation in the Gym library on high-dimensional locomotion tasks, namely Hopper and Half-Cheetah. The environments are described as follows:
\begin{enumerate}
    \item \textbf{Hopper.} The Hopper task has a 12-dimensional state space and a 3-dimensional action space. The reward is defined as the negative absolute difference between the agent’s velocity and a task-specific target velocity. The task distribution over target velocities is a truncated distribution on $[0,1]$ with mean $0.5$ and variance $0.1$. The cost corresponds to the control effort of the robot.

    \item \textbf{Half-Cheetah.} The Half-Cheetah task has a 17-dimensional state space and a 6-dimensional action space. The reward is defined as the negative absolute difference between the agent’s velocity and a task-specific target velocity. The task distribution is a truncated distribution on $[0,2]$ with mean $1$ and variance $0.3$. The cost is defined via a safety constraint $h_{\text{cheetah}} - h_0 \le d_\tau$, i.e., a violation occurs when the agent’s head height exceeds a threshold $h_0$.
\end{enumerate}

We build on the implementation of~\cite{xuefficient} and compare against four baselines across 20 test tasks per environment, each drawn from the corresponding task distribution: (a) MAML~\cite{finn2017model} with a constraint penalty, (b) meta-CPO~\cite{cho2024constrained}, (c) SafeMeta~\cite{xuefficient}, and (d) CPO~\cite{achiam2017constrained}. Methods (a,b,c) are constrained meta-RL approaches, while (d) is a standard online RL algorithm.

As shown in Figure~\ref{fig_continuous}, our method is the only approach that simultaneously improves reward while ensuring safe exploration. In contrast, existing methods either violate safety (e.g., MAML with constraints) or fail to consistently improve performance in more challenging environments (e.g., Half-Cheetah).
\begin{figure}[t]
    \centering
    \begin{subfigure}[t]{0.48\linewidth}
        \centering
        \includegraphics[width=\linewidth]{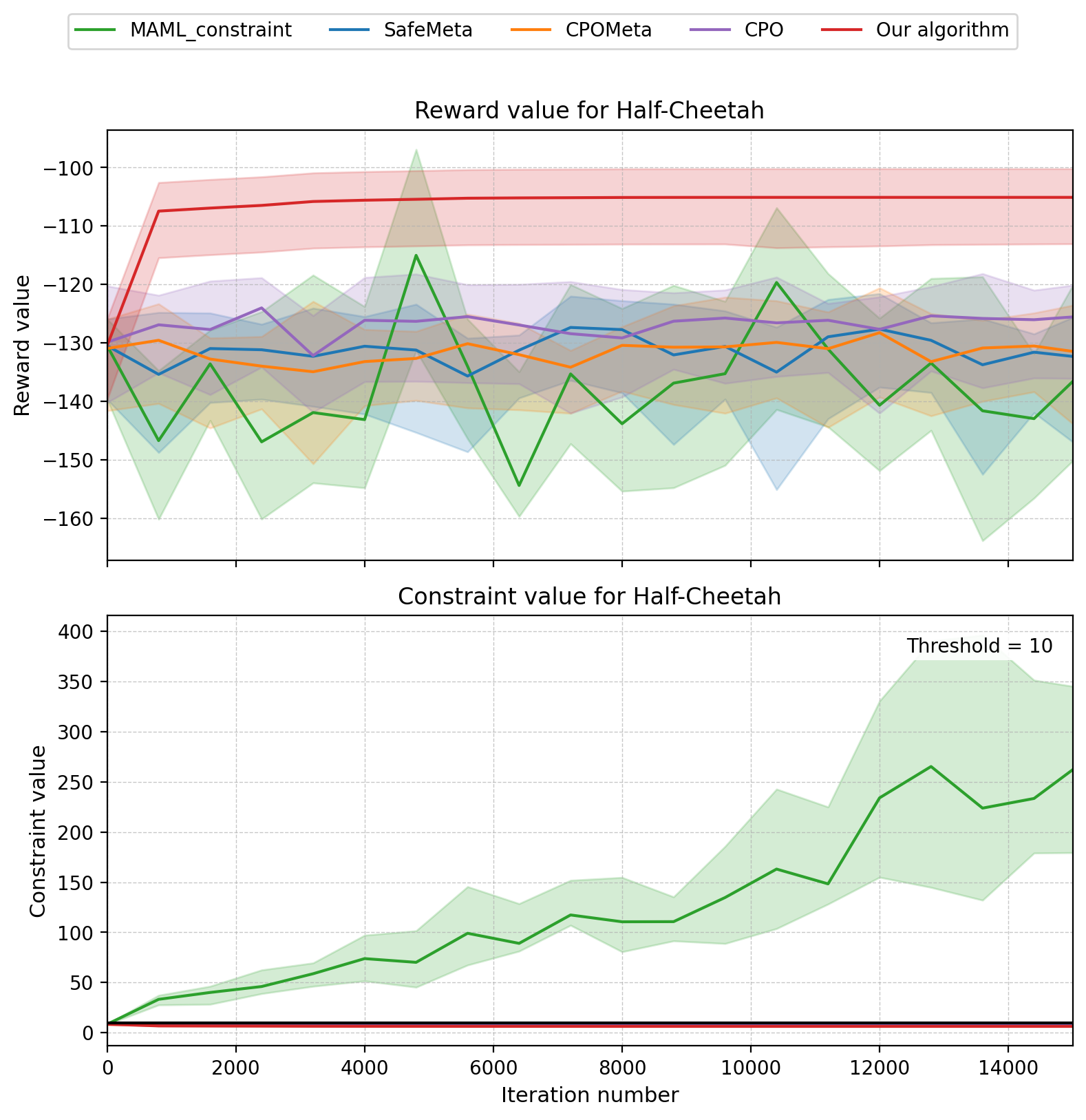}
        \caption{Half-Cheetah}
    \end{subfigure}
    \hfill
    \begin{subfigure}[t]{0.48\linewidth}
        \centering
        \includegraphics[width=\linewidth]{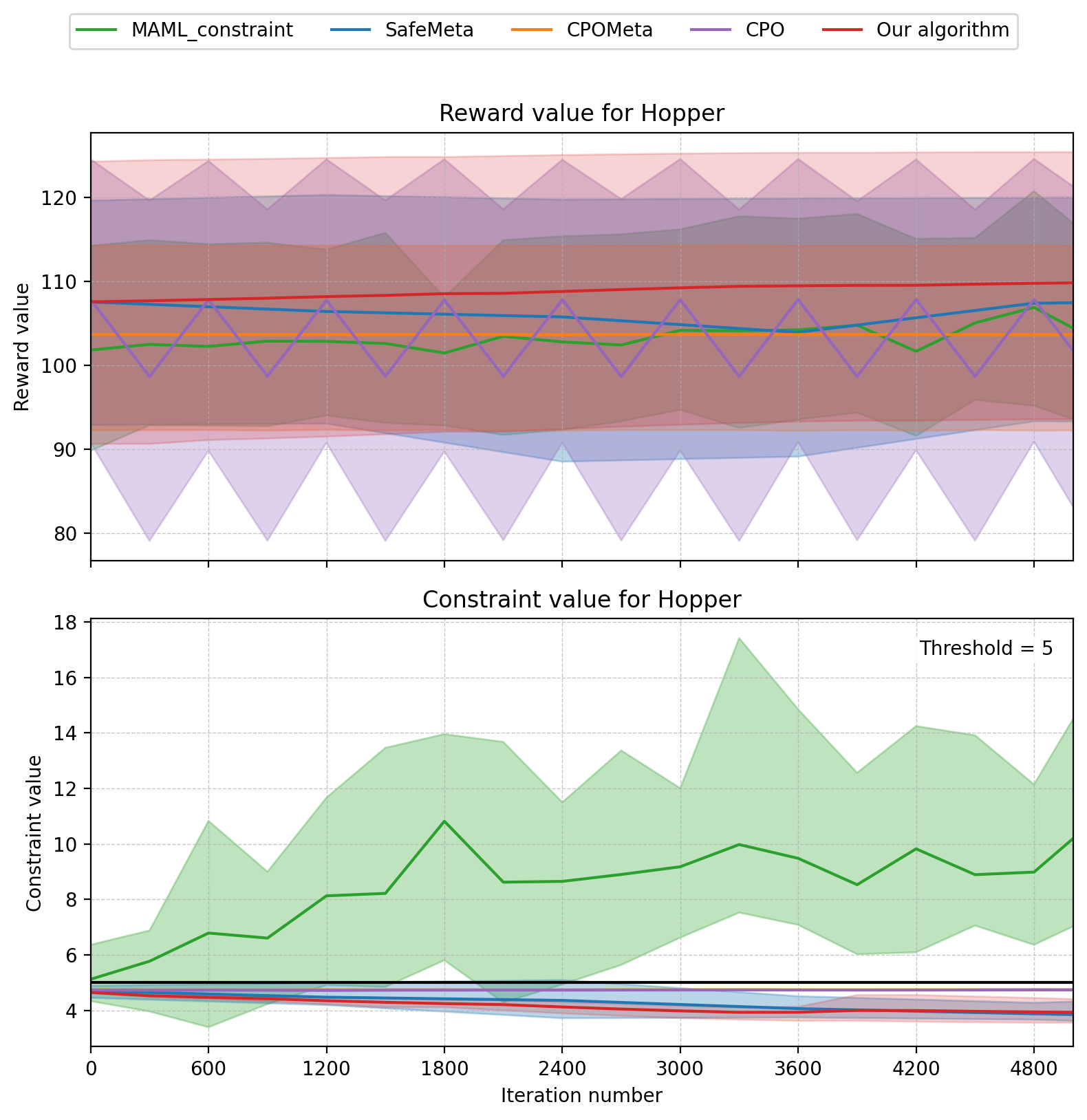}
        \caption{Hopper}
    \end{subfigure}
    \caption{Comparison of Half-Cheetah and Hopper environments}
    \label{fig_continuous}
\end{figure}
\end{document}